\documentclass[sigconf]{aamas}

\usepackage{balance} 
\usepackage{algorithm2e}
\usepackage{amsmath}
\usepackage{amsthm}
\usepackage{amsfonts}
\usepackage{empheq}
\usepackage{longtable,tabularx}
\usepackage{booktabs}
\usepackage{multirow}
\usepackage[table]{xcolor}
\usepackage{siunitx}
\usepackage{etoolbox}
\usepackage{cancel}
\usepackage{subcaption}
\usepackage{orcidlink}

\renewcommand\footnotetextcopyrightpermission[1]{}

\AtBeginEnvironment{equation}{\fontsize{10pt}{9pt}\selectfont}
\AtBeginEnvironment{equation*}{\fontsize{10pt}{9pt}\selectfont}
\everymath{\fontsize{10pt}{8pt}\selectfont}
\RestyleAlgo{ruled}

\newtheorem{lemma}{Lemma}
\newtheorem{proposition}{Proposition}
\newcommand{\ind}{\perp\!\!\!\!\perp}

\usepackage[dvipsnames]{xcolor}
\DeclareMathOperator{\EX}{\mathbb{E}}

\newcommand{\boxedeq}[2]{\begin{empheq}[box={\fboxsep=6pt\fbox}]{align}\label{#1}#2\end{empheq}}
\DeclareMathOperator*{\argmax}{argmax} 

\acmSubmissionID{929}

\title[]{A Semi-Decentralized Approach to Multiagent Control}

\author{Mahdi Al-Husseini \orcidlink{0009-0003-9087-814X}}
\affiliation{
  \institution{Stanford University}
  \city{Stanford}
  \country{United States}}
\email{mah9@stanford.edu}

\author{Mykel J. Kochenderfer \orcidlink{0000-0002-7238-9663}}
\affiliation{
  \institution{Stanford University}
  \city{Stanford}
  \country{United States}}
\email{mykel@stanford.edu}

\author{Kyle H. Wray \orcidlink{0000-0001-6986-9941}}
\affiliation{
  \institution{Northeastern University}
  \city{Boston}
  \country{United States}}
\email{k.wray@northeastern.edu}

\begin{abstract}
We introduce an expressive framework and algorithms for the semi-decentralized control of cooperative agents in environments with communication uncertainty. Whereas semi-Markov control admits a distribution over time for agent actions, semi-Markov communication, or what we refer to as semi-decentralization, gives a distribution over time for what actions and observations agents can store in their histories. We extend semi-decentralization to the partially observable Markov decision process (POMDP). The resulting SDec-POMDP unifies decentralized and multiagent POMDPs and several existing explicit communication mechanisms. We present recursive small-step semi-decentralized A* (RS-SDA*), an exact algorithm for generating optimal SDec-POMDP policies. RS-SDA* is evaluated on semi-decentralized versions of several standard benchmarks and a maritime medical evacuation scenario. This paper provides a well-defined theoretical foundation for exploring many classes of multiagent communication problems through the lens of semi-decentralization. 
\end{abstract}

\keywords{Semi-decentralization, Semi-Markov process, Multiagent communication, Admissible heuristic search, Planning}

\newcommand{\BibTeX}{\rm B\kern-.05em{\sc i\kern-.025em b}\kern-.08em\TeX}

\begin{document}


\pagestyle{fancy}
\fancyhead{}


\maketitle 


\section{Introduction}

Many complex real-world problems require the coordination of multiple cooperative agents to solve, but feature limited opportunities for information exchange. The decentralized partially observable Markov decision process (Dec-POMDP) formalizes multi-agent planning and control in settings where explicit communication is impossible \cite{bernstein2002complexity, becker2004solving, amato2013decentralized}. Several model variants take advantage of existing, if limited, information structures and extend the Dec-POMDP to explicitly incorporate costly \cite{goldman2008communication}, delayed \cite{oliehoek2012tree, oliehoek2013sufficient}, lossy \cite{tung2021effective}, or intermittent \cite{zhang2024multiagent} communication. 

In this paper we pursue a general framework that unifies several multiagent communication mechanisms. We are interested in problems whose communication dynamics are conditioned with some probability on the state, joint actions, or joint observations. An example domain is area-wide Global Positioning System (GPS) denial via jamming \cite{grant2009gps}. As seen in the evacuation scenario depicted in Figure \ref{fig:cover_photo}, agents need to coordinate joint tasks in environments with degraded, denied, or disrupted communication. Agents must reason about which actions to take in light of available communication, the influence actions taken have on future communication, and future communication's influence on future actions. In decentralized systems, information may be action-gated in costly communication, constrained by channel capacity in lossy communication, temporally offset in intermittent or delayed communication, and either deterministic or stochastic in nature, while semi-decentralized systems generalize all of these through probabilistic information flow.

\begin{figure}[t!]
  \centering
  \includegraphics[width=0.99\linewidth]{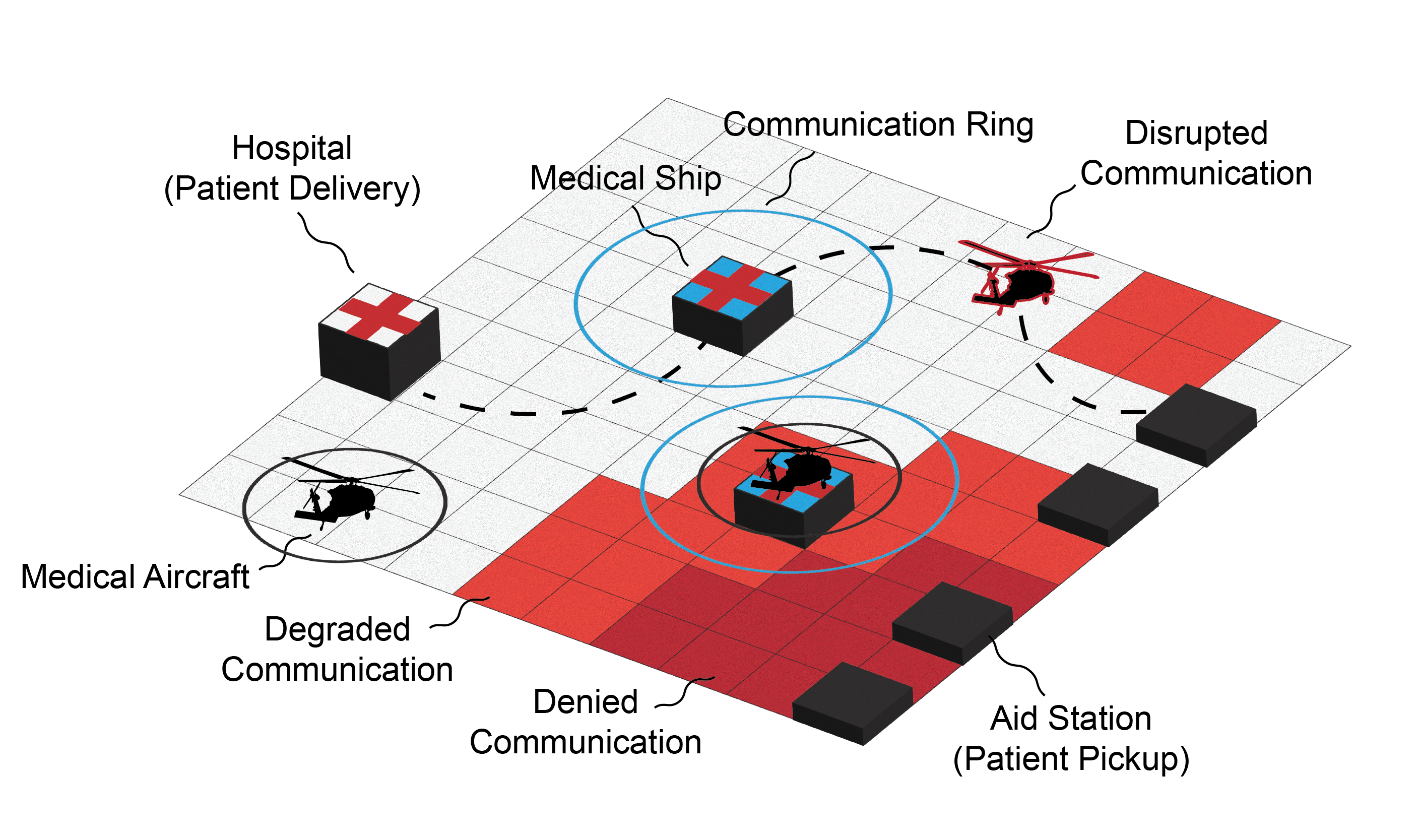}
  \caption{A semi-decentralized multiagent evacuation scenario with probabilistic restrictions on communication. Aircraft and watercraft must coordinate under communication constraints to move patients from aid stations to hospitals.}
  \label{fig:cover_photo}
\end{figure}

We begin by formally defining the \textit{semi-decentralization property}. The key insight is that \textbf{semi-Markov systems for resolving agent control may be co-opted in semi-decentralized systems for agent communication.}  Semi-decentralized systems may simultaneously contain a subset of agents that are necessarily decentralized and a subset of agents that are permissibly centralized. We introduce the semi-decentralized POMDP (SDec-POMDP), which unifies several existing multiagent models and explicit communication mechanisms. The SDec-POMDP reveals underlying mechanisms for memory propagation, selector functions, which we toggle here but are otherwise inherently present and constant in existing models. We then detail an exact optimal planning algorithm for solving SDec-POMDPs and apply to a set of new semi-decentralized benchmarks and a maritime medical evacuation application. 
\\~\\
\noindent
\textbf{Contributions}
\begin{itemize}
    \item We formulate the semi-decentralization property by extending semi-Markov control concepts to communication. Semi-decentralization is then applied to multiagent POMDPs, forming the SDec-POMDP. 
    \item We prove that the SDec-POMDP unifies the Dec-POMDP, the MPOMDP, $k$-steps delayed communication, and the Dec-POMDP-COM. 
    \item We introduce Recursive Small-Step Semi-Decentralized A* (RS-SDA*), an exact algorithm for solving semi-decentralized problems. We evaluate performance in semi-decentralized variants of four Dec-POMDP benchmarks, then apply to a complex medical evacuation scenario with joint tasks.
\end{itemize}

Ultimately, we describe and validate the mechanisms of a novel yet foundational property and model for multiagent decision making in probabilistic communication environments. 

\section{Related Work}


\noindent \textbf{Multiagent Models} Centralized planning can occur when agents freely communicate in reliable networks with no latency or when individual agents possess full system observability. This allows a single planner to select joint actions over the global state \cite{oliehoek2016concise}. A partially observable centralized multiagent system is represented by a multiagent partially observable Markov decision process (MPOMDP). MPOMDP planners benefit from conditioning joint actions on joint observations but scale poorly and are susceptible to communication failures. MPOMDPs therefore have limited application to many challenging real-world problems. By contrast, cooperative agents in decentralized systems can neither communicate explicitly nor observe the entire system state, and therefore must select their own actions in accordance with local observations \cite{kochenderfer2015decision}. The decentralized POMDP (Dec-POMDP) and MPOMDP share an underlying model but possess distinct policies and histories. Decentralized models have been used to support unmanned aerial vehicle formation flight \cite{azam2021uav}, maritime traffic management \cite{singh2019multiagent}, and wildfire surveillance \cite{julian2019distributed}. Decentralized systems permit implicit communication, in which agents transmit information by means of taken actions and received observations \cite{grover2010implicit}. Explicit communication, by contrast, endows agents with formalized communications actions.

The state-of-the-art exact optimal algorithm for solving Dec-POMDPs, recursive small-step multi-agent A* (RS-MAA*) \cite{koops2023recursive}, relies on a combination of incremental expansion, clustering, variable-depth recursive heuristics, and heuristic memoization. We extend RS-MAA* to semi-decentralized systems. 



\noindent \textbf{Communication Schemes} The literature features numerous explicitly modeled communication mechanisms and frameworks, owing to the sheer diversity of information structures in real-world problems. \citeauthor{goldman2003optimizing} formalize the decentralized POMDP with communication (Dec-POMDP-Com), which incorporates an alphabet of possible messages and a communication cost function \cite{goldman2003optimizing}. The Dec-POMDP-Com models costly communication and generally assumes noise-free instantaneous broadcasting. Delayed communication occurs when agents learn the local observation received by others after one or more time-steps \cite{oliehoek2016concise}. In one-step delayed communication, agents determine the latest joint action from the joint policy, but are unaware of the latest joint observation \cite{oliehoek2012tree}. In $k$-steps delayed communication, a sufficient statistic may be used in place of the past joint policy \cite{oliehoek2013sufficient}. Other multiagent models consider noisy \cite{tung2021effective} or intermittent \cite{zhang2024multiagent} communication channels. 

In the costly, delayed, noisy, and intermittent communication cases, the domain environment is orthogonal to the communication channel, and the actions taken by agents and resulting state do not affect their future ability to coordinate. Therefore, while communication directly influences control, control does not in turn directly influence communication. This distinguishes semi-decentralized infrastructure from existing communication schemes. 

\noindent \textbf{Communication in Reinforcement Learning} Our primary contribution is a foundational model that supports both planning and reinforcement learning in restricted communication environments. We further present an exact algorithm for semi-decentralized planning, in which the communication and model dynamics are known prior to execution \cite{moerland2022unifying}. Still, the growing body of multiagent deep reinforcement learning with communication (Comm-MADRL) research is replete with promising techniques for codifying communication and defining communication policy. \citeauthor{zhu2024survey} categorize Comm-MADRL approaches by communication policy, to include full communication, partial communication, individual control, and global control subcategories \cite{zhu2024survey}. We are chiefly concerned with the individual control literature \cite{jiang2018learning, singh2018learning, sheng2022learning}, and seek to learn optimal control policies in light of potential communication links between agents. In the sub-field of \textit{learning tasks with communication}, policies generated using learning algorithms simultaneously maximize environmental rewards and determine effective communication protocols for agents \cite{zhu2024survey}. This parallels recent efforts in planning to design joint communication and control strategies \cite{sudhakara2024optimal}, which we also do. 

\section{Preliminaries}

\subsection{Dec-POMDPs}

The \textit{decentralized partially observable Markov decision process} (Dec-POMDP) is a stochastic, decentralized multiagent model for sequential decision-making under partial observability characterized by tuple $\langle I, S, \bar{A}, \bar{\mathcal{O}}, T, O, R\rangle$, where:

\begin{itemize}
  \item $I$ is a finite set of $k$ agents,
  \item $S$ is a finite set of states,
  \item $\bar{A} = \times_i A_i$ is a finite set of joint actions,
  \item $\bar{\mathcal{O}} = \times_i \mathcal{O}_i$ is a finite set of joint observations, and
  \item $T: S \times \bar{A} \times S \rightarrow [0,1]$ is a state transition function where $T(s'\mid s, \bar{a})$ is the probability of being in state $s'$ given joint action $\bar{a}$ being performed in state $s$,
  \item $O: \bar{\mathcal{O}} \times S \times \bar{A} \rightarrow [0,1]$ is a joint observation function where $O(\bar{o}' \mid s', \bar{a})$ specifies the probability of attaining joint observation $\bar{o}'$ when joint action $\bar{a}$ results in state $s'$, and 
  \item $R: S \times \bar{A} \rightarrow \mathbb{R}$ is a reward function such that $R(s, \bar{a})$ is the immediate reward for performing joint action $\bar{a}$ in state $s$.
\end{itemize}

Agents in the \textit{decentralized partially observable Markov decision process} (Dec-POMDP) cannot explicitly share information. Each agent therefore selects actions in accordance with a local policy $\pi_i$ informed by action observation history $h_i$, where $\bar{h} \in (\prod_{i \in I}A_iO_i)^\star$. We are primarily concerned with deterministic policies, from which agent actions can be inferred using only observation histories $\bar{o}_h$. The collection of individual policies is the decentralized policy set $\bar{\pi}: \langle \pi_1, \pi_2,... \pi_n\rangle$. Assume an infinite horizon $h = \infty$ and time discount rate $\gamma$. The objective is to find a policy set $\bar{\pi}$ maximizing expected reward over states and observation histories: 

\begin{equation*}
     V^{\bar{\pi}}(s, \bar{o}_h) = \EX\ \left[\sum\limits_{t = 0}^{\infty} \gamma^t R(s^t, \bar{\pi}(\bar{o}_h^t)) \mid b^0, \bar{\pi} \right]
\end{equation*}

\begin{equation*}
    \begin{aligned}
    \bar{\pi}^{*} {\leftarrow} \argmax\limits_{\bar{a} \in \bar{A}} \left( R(s, \bar{a}) + \gamma \sum\limits_{s' \in S}  \sum\limits_{\bar{o}' \in \mathcal{O}} \text{Pr}(s', \bar{o}' {\mid} s, \bar{a}) V^{*}(s', \bar{o}_h') \right).
    \end{aligned}
\end{equation*}

\subsection{MPOMDPs}

The \textit{multiagent partially observable Markov decision process} (MPOMDP) is a stochastic, centralized multiagent model for sequential decision-making under partial observability also characterized by tuple $\langle I, S, \bar{A}, \bar{\mathcal{O}}, T, O, R \rangle$. MPOMDP agents also lack a complete picture of the underlying system state, but can share individual actions $a_i$ taken and observations received $o_i$. This permits a \textit{sufficient statistic} in the form of a probability distribution over states called a \textit{belief} $b \in \Delta^n$, where $n$ is the number of states and $\Delta^n$ is the $n-1$ simplex. We notate the multiagent belief using $b_\alpha$. The belief over successor states $s'$ is updated using the set of agent histories $\bar{h}$ of actions taken and observation received: 

\begin{equation*}
    b_\alpha'(s') = \eta  O(\bar{o}' \mid s', \bar{a}) \sum\limits_{s \in S} T(s'\mid s, \bar{a}) b_\alpha(s)
\end{equation*}

\noindent where $\eta$ is a normalizing constant equal to $\text{Pr}(\bar{o}' \mid b_\alpha, \bar{a})^{-1}$, and $\text{Pr}(\bar{o}' \mid b_\alpha, \bar{a}) = \sum\limits_{s' \in S} O(\bar{o}' \mid s', \bar{a}) \sum\limits_{s \in S}T(s'\mid s, \bar{a}) b_\alpha(s)$. Belief $b'_{_\alpha \bar{o} \bar{a}}$ is then generated by applying the belief update equation to all $s' \in S$. An initial belief $b^0$ is assumed to avoid considering the Bellman equation for infinite starting beliefs. From the agent perspective, reward must be calculated as a function of belief, such that $R(b_\alpha, \bar{a}) = \sum_{s \in S}R(s, \bar{a})b_\alpha(s)$. In an MPOMDP, we seek a single joint policy $\pi_\alpha: \Delta^n \rightarrow \bar{A}$ that maps beliefs to actions. The expected value of $b_\alpha$ over an infinite horizon may be written as:

\begin{equation*}
    V^{\pi_\alpha}(b_\alpha) = \EX\ \left[\sum\limits_{t = 0}^{\infty} \gamma^t R(b_\alpha^t, \pi_\alpha(b_\alpha^t)) \mid b_\alpha^0 = b_\alpha, \pi_\alpha \right].
\end{equation*}

\noindent We seek a $\pi^*_\alpha$ that maximizes expected reward over time, determined by:

\begin{equation*}
    \begin{aligned}
    \pi_\alpha^{*} \leftarrow \argmax\limits_{\bar{a} \in \bar{A}} \left( R(b_\alpha, \bar{a}) + \gamma \sum\limits_{\bar{o}' \in \bar{\mathcal{O}}} \text{Pr}(\bar{o}' \mid b_\alpha, \bar{a}) V^{*}(b'_{_\alpha \bar{o} \bar{a}}) \right)
    \end{aligned}.
\end{equation*}

\subsection{Semi-Markov Processes}
\noindent \textbf{Definition 1.} The \textit{semi-Markov property for control} admits a distribution over time for state transitions aligned with agent actions. 

\noindent \textbf{Definition 2.} \textit{Sojourn control time} $\tau$ is the assignment of general continuous random variable $\mathcal{T}$. 

\boxedeq{eq:first}{Q(\mathcal{T} \leq \tau, s' \mid s, a) }

\noindent The \textit{semi-Markov decision process} (SMDP) \cite{puterman1994markov} is a stochastic single agent model for sequential decision-making with sojourn system control. The SMDP is characterized by state transition distribution $Q$, which is the probability that the next state transition occurs at or before $\tau$ and results in successor state $s'$. It is mathematically convenient to define $Q$ as the product of $F(\tau \mid s, a) T(s' \mid s, a, \tau)$, or equivalently $F(\tau \mid s', a, s) T(s' \mid s, a)$, where $F$ is the cumulative distribution function of $\tau$. The distribution of $\tau$ is conditioned on the current state $s$ and action taken $a$ and is therefore Markov. The ``semi'' in semi-Markov reflects the arbitrary probability distribution followed by model transitions. When $\tau$ = 0, a new action is taken to coincide with a \textit{decision epoch}. The system natural process time is denoted as $\eta \in \mathbb{R}^+$, such that each decision epoch $t \in \mathbb{N}$ with corresponding sojourn time $\tau^t \in \mathbb{R}^+$ and state $s^t \in S$ occurs at $\eta^t$, where $\eta^t$ is decision epoch start time within the natural process. 


\section{Semi-Decentralization}
\noindent \textbf{Definition 3.} The \textit{semi-Markov property for communication}, or \textit{semi-decentralization}, admits a distribution over time for what information agents can store in memory.

\noindent \textbf{Definition 4.} \textit{Sojourn communication time} $\tau$ is general continuous random variable representing the time for an agent to return to an information-sharing state. We here overload $\tau$ for sojourn communication time, distinct from sojourn control time.

\boxedeq{eq:second}{Q(\bar{\mathcal{T}} \leq \bar{\tau}', s' \mid \bar{\tau}, s, \bar{a}, \bar{a}')}

\noindent As with SMDPs, we can define $Q$ as the product of $F(\bar{\tau}' \mid s', \bar{a}', \bar{\tau})$ and $T(s' \mid s, \bar{a}, \bar{\tau})$, where $\bar{\tau}'$ may be conditioned on the subsequent joint action set $\bar{a}'$. SMDPs have one agent with an implicit conditioned $\bar{\tau}=0$. However, SDec-POMDPs have multiple agents with varied $\bar{\tau}$. Thus it is more general with $\bar{\tau}'$ conditioned on $\bar{\tau}$. Semi-decentralized models assume an initial $\bar{\tau}^0$, which can be interpreted as the communicating state of each agent when $\eta = 0$. When $\tau$ = 0, information sharing can occur coinciding with a \textit{communication epoch}. We assume noise-free instantaneous broadcast communication resulting in a single communicating agent set as in a blackboard \cite{erman1980hearsay, craig1988blackboard}. Semi-decentralization may however incorporate multiple distinct communicating sets. Whereas as semi-Markov control systems toggle model transition dynamics using $\tau$, semi-decentralized systems toggle updating histories using $\bar{\tau}$. 


\section{The SDec-POMDP}
The \textit{semi-decentralized partially observable Markov decision process} (SDec-POMDP) is a stochastic, semi-decentralized multiagent model for sequential decision-making under partial observability characterized by tuple $\langle I, S, \bar{A}, \bar{\mathcal{O}}, F, T, O, R\rangle$. 

\textbf{Model} The \textbf{SDec-POMDP model} introduces selector functions $f$, $g$, and $h$ to propagate agent memories, actions, and observations, respectively, to $M_i$ and $M_c$ through selector sub-nodes conditioned on $\bar{\tau}$. Figure \ref{fig:SDecPOMDP} depicts an SDec-POMDP dynamic decision network where selector sub-nodes are contained within $Z_{i}^\text{sel} = \langle M_{ci}^\text{sel}, \bar{A}_{i}^\text{sel}, \bar{O}_{i}^\text{sel}\rangle$ and $Z_{c}^\text{sel} = \langle \bar{M}^\text{sel}, \bar{A}^\text{sel}, \bar{O}^\text{sel}\rangle$. The selector infrastructure defines information-sharing configurations and enables the model to simultaneously maintain sets of centralized and decentralized agents. The set compositions change with changes to $\bar{\tau}$ following each state transition. Define $M_c = (\prod_{i \in I}(\{\varnothing\} \cup A_iO_i))^\star$, and $M_i = (A_iO_i)^\star \times M_c$. The selector functions follow:

\begin{equation*}
    \forall i, f(m_{ci}^{\text{sel}}) =     
    \begin{cases}
      m_c & \text{if } \tau_i = 0\\
     \varnothing & \text{if } \tau_i > 0\\
    \end{cases}
\end{equation*}

\begin{equation*}
    f(m_{c}^{\text{sel}}) =     
    \begin{cases}
      \forall i, m_i & \text{if } \tau_i = 0\\
     \varnothing & \text{if } \tau_i > 0\\
    \end{cases}
\end{equation*}


\begin{equation*}
    \forall i, g(a_i^{\text{sel}}) = 
    \begin{cases}
     a_j \in \bar{a}, \forall j \mid \tau_j = 0 & \text{if } \tau_i = 0\\
     a_i & \text{if } \tau_i > 0\\
    \end{cases}
\end{equation*}

\begin{equation*}
    g(a_c^{\text{sel}}) = \{ a_i \in \bar{a}, \forall i \mid \tau_i = 0 \}
\end{equation*}


\begin{equation*}
    \forall i, h(o_i^{\text{sel}})    
    \begin{cases}
     o_j \in \bar{o}, \forall j \mid \tau_j = 0 & \text{if } \tau_i = 0\\
     o_i & \text{if } \tau_i > 0\\
    \end{cases}
\end{equation*}

\begin{equation*}
    h(o_c^{\text{sel}}) = \{ o_i \in \bar{o}, \forall i \mid \tau_i = 0 \}.
\end{equation*}


\begin{figure}[t!]
  \centering
  \includegraphics[width=1.0\linewidth]{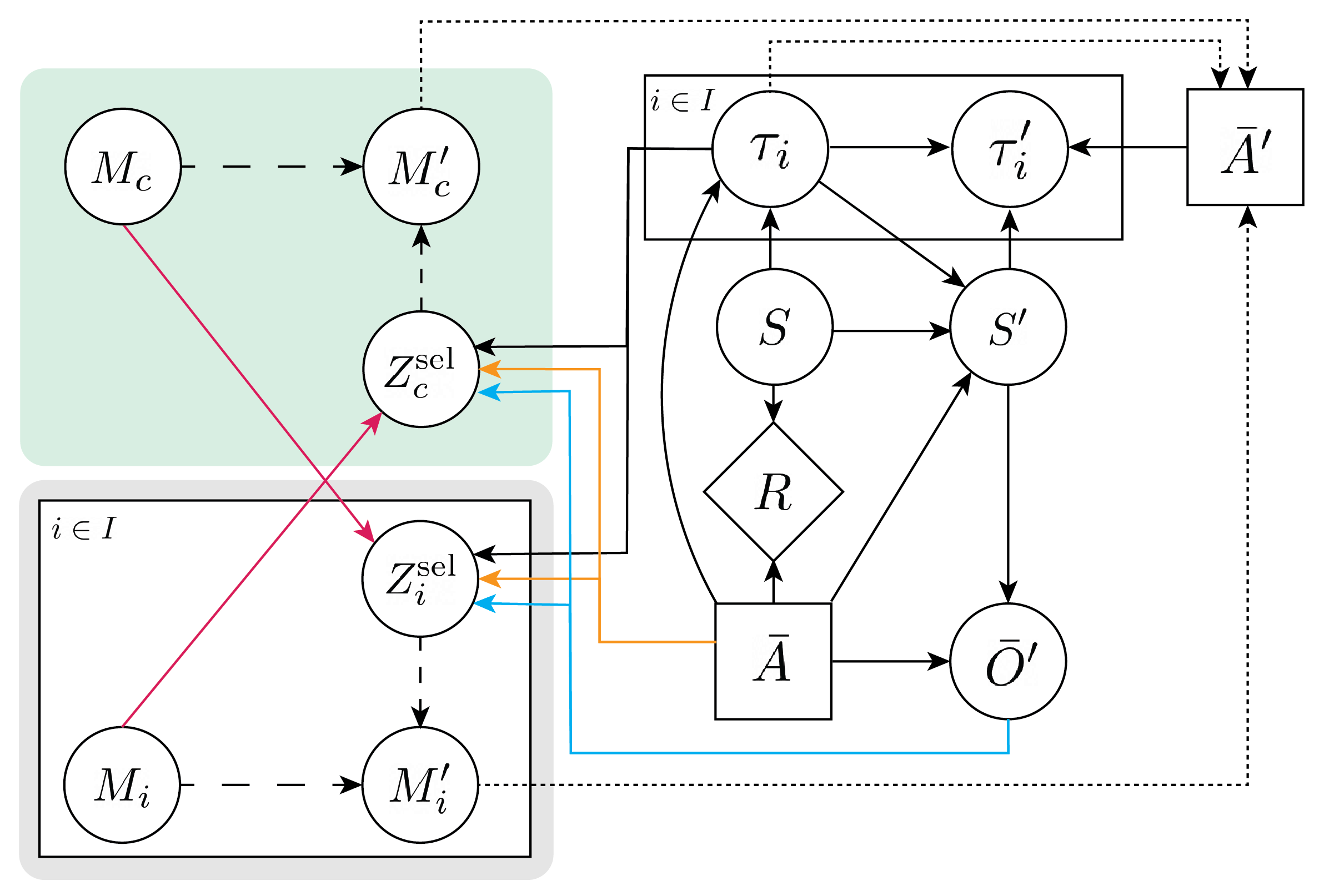}
  \caption{The SDec-POMDP dynamic decision network, with the policy infrastructure on the left and model on the right. The green backdrop contains the blackboard with memory $M_c$ generated from the histories of communicating agents. The gray backdrop with plate notation includes the individual agent memories $M_i$. $Z$ selector nodes are selectively toggled by $\bar{\tau}$ to facilitate memory propagation $\eta$, represented by dashed lines. Policy $\psi$ edges are represented by dotted lines. The SDec-POMDP framework is flexible and can be easily modified to capture the structural and informational characteristics of different problem domains.}
  \label{fig:SDecPOMDP}
\end{figure}

\noindent \textbf{Policy} For generality, assume an \textbf{SDec-POMDP policy} with deterministically stored $a_io_i, \forall i$ and deterministically selected $\bar{a}$, parameterized by both $\pi_i: M_i \rightarrow A$ and $\pi_c: M_c \rightarrow \bar{A}$. Memory propagation $\eta$ and policy $\psi$ probability functions follow: 

\begin{equation*}
    \psi(\bar{a}) = 
    \begin{cases} 1, \forall i
        \begin{cases}
          a_i = (\pi_c(m_c))_i & \text{if } \tau_i = 0\\
         a_i = \pi_i(m_i) & \text{if } \tau_i > 0\\
        \end{cases} \\
        0, \text{otherwise}
    \end{cases}
\end{equation*}

\begin{equation*}
        \eta_c(m_c') =     
        \begin{cases}1, \text{if } m_c' = m_c \bar{m}^\text{sel} \bar{a}^\text{sel} \bar{o}^\text{sel} \\
        0, \text{otherwise}
        \end{cases}
\end{equation*}

\begin{equation*}
        \forall i, \eta(m_i') =     
        \begin{cases}1, \text{if } m_i' = m_i m_{ci}^\text{sel} a_i^\text{sel} o_i^\text{sel} \\
        0, \text{otherwise}
        \end{cases}.
\end{equation*}


\noindent \textbf{Objective Function} The \textbf{SDec-POMDP objective} is to identify the combination of memory propagation and policy functions that will maximize expected reward. Consider the infinite horizon case: 

\begin{equation*}
    J(\psi, \eta_c, \bar{\eta}) = \EX\ \left[\sum\limits_{t = 0}^{\infty} \gamma^t R(s^t, \bar{a}^t) \mid b^0 \right]
\end{equation*}

\begin{equation*}
    J^\star = \argmax \limits_{\psi, \eta_c, \bar{\eta}} J(\psi, \eta_c, \bar{\eta}).
\end{equation*}

\noindent Rewrite in terms of policies $\bar{\pi}$ and $\pi_c$ and blackboard and agent memories $m_c$ and $\bar{m}$: 

\begin{align*}
    V^{\pi_c, \bar{\pi}}(m_c, \bar{m}) &  = \EX\ \left[\sum\limits_{t = 0}^{\infty} \gamma^t R(s^t, \bar{a}^t) \mid b^0, m_c, \bar{m}, \pi_c, \bar{\pi} \right] \\
    & = \sum\limits_{s \in S} V^{\pi_c, \bar{\pi}}(m_c, \bar{m}, s) b^0(s) 
\end{align*}

\begin{equation*}
    \begin{aligned}
        &  V^{\pi_c, \bar{\pi}}(m_c, \bar{m}, s) = \sum\limits_{\bar{a} \in \bar{A}} \psi(\bar{a} \mid m_c, \bar{m}) \left[ R(s, \bar{a}) + \gamma \sum\limits_{\bar{\tau} \in \bar{T}} F(d\bar{\tau} \mid s, \bar{a})  \right.\\ 
        & 
        \sum\limits_{s' \in S} T(s' \mid s, \bar{a}) 
        \sum\limits_{\bar{o}' \in \bar{\mathcal{O}}} O(\bar{o}' \mid s', \bar{a}) \\ 
        &  \sum\limits_{m_c' \in M_c} \eta_c (m_c' \mid m_c, f(\bar{m}), g(\bar{a}), h(\bar{o}')) \\ 
        & \left. \sum\limits_{\bar{m}' \in \bar{M}} \prod\limits_{i \in I} \eta(m_i' \mid m_i, f(m_c), g(\bar{a}), h(\bar{o}')) V^{\pi_c, \bar{\pi}}(m_c', \bar{m}', s') \right].
    \end{aligned}
\end{equation*}

\section{Theoretical Analysis}



\noindent \textbf{Definition 5.} Models $X_\theta$ and $X_\phi$ are equivalent if they reduce to one another via mapping function $f$, such that $X_\theta \leq X_\phi$ and $X_\phi$ $\leq$ $X_\theta$. 

\noindent \textbf{Definition 6.} Model-policy structures $XY_\theta$ and $XY_\phi$ are equivalent if they reduce to one another via mapping function $g$, such that $XY_\theta \leq XY_\phi$ and $XY_\phi$ $\leq$ $XY_\theta$. 

\noindent \textbf{Definition 7.} Model-policy-objective structures $Z_\theta$ and $Z_\phi$ are equivalent if $\forall \bar{h}$ $V_{XY_\theta}(\bar{h}, b^0) = V_{XY_\phi}(\bar{h}, b^0)$. 



\subsection{MPOMDP}
\begin{lemma}
SDec-POMDP and MPOMDP models are equivalent. 
\end{lemma}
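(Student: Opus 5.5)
The plan is to use Definition 5 directly and exhibit two mapping functions, one for each direction of reduction. Since the lemma concerns \emph{models}, not policies or objectives, I only need the tuples and their dynamics to correspond. Policy structure is handled by Definition 6 and value equality by Definition 7; I would defer both to later results.

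For MPOMDP $\leq$ SDec-POMDP, take an MPOMDP $\langle I, S, \bar{A}, \bar{\mathcal{O}}, T, O, R\rangle$ and map it to the SDec-POMDP $\langle I, S, \bar{A}, \bar{\mathcal{O}}, F, T, O, R\rangle$. The components $I, S, \bar{A}, \bar{\mathcal{O}}, T, O, R$ are copied unchanged. Set $\bar{\tau}^0 = \bar{0}$, and let $F$ be the point mass at $\bar{\tau}' = \bar{0}$ for every $s', \bar{a}', \bar{\tau}$, so that $F(\bar{\tau}' \le \bar{0} \mid \cdot) = 1$. Then $Q(\bar{\mathcal{T}} \le \bar{\tau}', s' \mid \bar{\tau}, s, \bar{a}, \bar{a}') = T(s' \mid s, \bar{a})$, so the state dynamics coincide.

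Next I check the selector functions with every $\tau_i = 0$. Here $g$ and $h$ return the full $\bar{a}$ and $\bar{o}$. The blackboard update $\eta_c$ then appends $\bar{a}\bar{o}'$ to $m_c$ at every step, so $m_c$ is exactly the joint action-observation history $\bar{h}$ of the MPOMDP. The history-dependent belief $b_\alpha$ is recovered from $m_c$ by iterating the belief update. Also, $\psi$ selects every $a_i$ from $\pi_c(m_c)$, so the individual $\pi_i$ never act.

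For SDec-POMDP $\leq$ MPOMDP, map an SDec-POMDP to the MPOMDP on the same $\langle I, S, \bar{A}, \bar{\mathcal{O}}, T, O, R\rangle$, discarding $F$. I would argue this is well defined because $T$ and $O$ in the SDec-POMDP do not depend on the memory nodes. The dependence of $T$ on $\bar{\tau}$ written in Definition 4 is the delicate point: I will restrict to $T(s' \mid s, \bar{a})$, matching how $T$ actually appears in the value equation. I would also note that $F$ changes only the information structure, not the underlying model. Composing the two maps returns the original MPOMDP tuple, and returns the original SDec-POMDP tuple up to the choice of $F$.

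I expect this second direction to be the main obstacle. An SDec-POMDP with nontrivial $F$ carries strictly more specification than an MPOMDP, so the mapping cannot be a bijection on tuples. I would first try to argue that equivalence under Definition 5 only requires mutual reducibility in the shared components $\langle I, S, \bar{A}, \bar{\mathcal{O}}, T, O, R\rangle$. Failing that, I would restrict the claim to the subclass of SDec-POMDPs with $F \equiv \delta_{\bar{0}}$, which is the sense in which the SDec-POMDP subsumes the MPOMDP.
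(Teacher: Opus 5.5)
\textbf{Direction 1 is fine.} Your MPOMDP $\leq$ SDec-POMDP direction matches the paper's construction. You copy $I,S,\bar{A},\bar{\mathcal{O}},T,O,R$, make $T$ and $O$ independent of $\bar{\tau}$, and take a deterministic $F$ that forces full centralization, so that $g$ and $h$ return $\bar{a}$ and $\bar{o}$ for every agent and for the blackboard. The paper fixes $\tau_i' = 1$, read as returning to the sharing state at the next epoch. Your point mass at $\bar{\tau}' = \bar{0}$ does the same job and fits more naturally with selectors that test $\tau_i = 0$.

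\textbf{Direction 2 has a genuine gap.} Mapping an SDec-POMDP to the MPOMDP on $\langle I,S,\bar{A},\bar{\mathcal{O}},T,O,R\rangle$ by discarding $F$ and dropping the $\bar{\tau}$-argument of $T$ is not a reduction.
\begin{itemize}
\item Every SDec-POMDP with the same base tuple collapses to the same MPOMDP. The image therefore represents neither the communication dynamics nor any $\bar{\tau}$-dependence of the transitions.
\item It breaks the next step. The model-policy reduction (Lemma 2) simulates the SDec-POMDP policy inside the MPOMDP by setting $a_i = (\pi_c(m_c))_i$ when $\tau_i = 0$ and $a_i = \pi_i(m_i)$ otherwise. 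That is impossible once $\bar{\tau}$ no longer exists in the MPOMDP.
\item Your fallbacks do not rescue the statement. Reading Definition 5 as agreement on shared components only makes the lemma vacuous. Restricting to $F \equiv \delta_{\bar{0}}$ only shows that the MPOMDP embeds in the SDec-POMDP, which is your first direction again.
\end{itemize}

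\textbf{The missing idea is state augmentation.} Put the sojourn times into the MPOMDP state, $S' = S \times (\mathbb{R}^+)^n$. Then absorb $F$ into the transition of the factored state,
$T'(\langle s',\bar{\tau}'\rangle \mid \langle s,\bar{\tau}\rangle, \bar{a}, \bar{a}') = F(\bar{\tau}' \mid s',\bar{a}',\bar{\tau})\,T(s' \mid s,\bar{a},\bar{\tau})$,
and let $O'$ depend on $\bar{\tau}$ and $\bar{\tau}'$. The paper also adds a blackboard agent, $I' = I \cup I_c$ with $\tau = 0$ for it. It enlarges the joint observation space to $\bar{\mathcal{O}} \times \prod_{i\in I}(A_iO_i)^\star$, so the MPOMDP's observations can carry the memory and selector information. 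With this construction nothing about the SDec-POMDP is lost, and the communication restrictions are enforced at the policy level in Lemma 2. If you want $S'$ to stay finite, as the MPOMDP definition requires, restrict $\bar{\tau}$ to a finite support such as $\{0,1\}^n$, which is what the benchmarks effectively use.
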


\begin{proof}
Demonstrate that 1. $X_\text{MPOMDP}$ $\leq$ $X_\text{SDec-POMDP}$ and 2. $X_\text{SDec-POMDP} \leq X_\text{MPOMDP}$ \\

\noindent \textbf{1.} $X_\text{MPOMDP}$ $\leq$ $X_\text{SDec-POMDP}$

\noindent 
Let $I' = I$ , $S' = S$, $\bar{A}' = \bar{A}$, $R' = R$, and $\bar{\mathcal{O}}' = \bar{\mathcal{O}}$, where prime notation indicates the SDec-POMDP for purposes of relating models. 
The state transition and observation functions are defined to reproduce the MPOMDP dynamics independently of $\bar{\tau}$, such that $T'(s' \mid s, \bar{a}, \bar{\tau}) = T(s' \mid s, \bar{a})$ and $O'(\bar{o}' \mid s', \bar{a}, \bar{\tau})$ = $O(\bar{o}' \mid s', \bar{a})$. 
Assume a deterministic communication sojourn time function $F$, where $\tau'$ for each agent is fixed to one, resulting in complete centralization at subsequent decision epochs; $\forall i$, $F'(\tau_i' < 1  \mid s, a_i, \tau_i$) = 0 and $F'(\tau_i' = 1  \mid s, a_i, \tau_i$) = 1. Selector functions therefore return joint actions and observations at each time-step; $\forall i$, $g'(a_i^{\text{sel}}) = g'(a_{c}^{\text{sel}}) = \bar{a}$ and $\forall i$, $h'(o_i^{\text{sel}}) = h'(o_{c}^{\text{sel}}) = \bar{o}$. \\

\noindent \textbf{2.} $X_\text{SDec-POMDP}$ $\leq$ $X_\text{MPOMDP}$

\noindent 
Let $R' = R$ and $\bar{A}' = \bar{A}$. The agent set is extended to include an additional blackboard agent with an independent memory, so that $I' = I \cup I_\text{c}$. We assume without loss of generality that $|I_\text{c}|=1$, and that the communication sojourn time for this agent satisfies $\tau_i = 0$ $\forall i \in I_\text{c}$. The state space is augmented to include $\bar{\tau}$, such that $S' = S \times (\mathbb{R}^+)^n$. Similarly, the joint observation space is expanded to include the sequence of all action-observation pairs,
\begin{equation*}
\bar{\mathcal{O}}' = \bar{\mathcal{O}} \times \displaystyle \prod_{i \in I}(A_iO_i)^\star. 
\end{equation*}
The transition and observation functions adopt the factored state space: $T'(\langle s', \bar{\tau}'\rangle \mid \langle s, \bar{\tau} \rangle, \bar{a}, \bar{a}') = F(\bar{\tau}' \mid s', \bar{a}', \bar{\tau}) T(s' \mid s, \bar{a}, \bar{\tau})$ and $O'(\bar{o}' \mid \langle s', \bar{\tau}' \rangle,  \langle s, \bar{\tau} \rangle, \bar{a}) = O(\bar{o}' \mid s', s, \bar{a}, \bar{\tau}', \bar{\tau})$ at its most general. 

\end{proof}

\begin{lemma}
SDec-POMDP and MPOMDP model-policy structures are equivalent.  
\end{lemma}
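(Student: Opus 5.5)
We would follow the structure of the proof of the previous lemma: establish the two reductions $XY_\text{MPOMDP} \leq XY_\text{SDec-POMDP}$ and $XY_\text{SDec-POMDP} \leq XY_\text{MPOMDP}$. In each direction we reuse the model mapping already built there and supply a mapping $g$ that carries a policy of one structure to a policy of the other. The requirement is that the two policies select the same joint action from corresponding memories or histories.

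\textbf{1.} $XY_\text{MPOMDP} \leq XY_\text{SDec-POMDP}$. We take the first construction of the previous lemma, in which $F'$ fixes every $\tau_i$ so that every agent is communicating at each decision epoch. The selector functions then give $g'(a_i^{\text{sel}}) = g'(a_c^{\text{sel}}) = \bar{a}$ and $h'(o_i^{\text{sel}}) = h'(o_c^{\text{sel}}) = \bar{o}$. Hence $\eta_c$ appends the full joint action and observation at each step, and $m_c$ is exactly the joint history $\bar{h}$. We define $\pi_c'(m_c) = \pi_\alpha(b_\alpha(\bar{h}))$, where $b_\alpha(\bar{h})$ is the belief obtained from $b^0$ by repeated MPOMDP belief updates. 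Because $\tau_i = 0$ for all $i$, $\psi$ sets $a_i = (\pi_c(m_c))_i$ for every agent, and the individual policies $\pi_i'$ are never consulted, so they may be chosen arbitrarily. The resulting joint action therefore coincides with the MPOMDP action at every history.

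\textbf{2.} $XY_\text{SDec-POMDP} \leq XY_\text{MPOMDP}$. We use the augmented MPOMDP from the previous lemma, whose states are $\langle s, \bar{\tau}\rangle$ and whose observations carry the action-observation sequences. The plan is to show that the pair $(m_c, \bar{m})$ is a deterministic function of the MPOMDP joint history $\bar{h}'$. We would prove this by induction on $t$. At $t=0$ the memories are given by $b^0$ and $\bar{\tau}^0$. For the step, $\eta_c$ and $\eta$ are deterministic given the previous memories, $\bar{a}$, $\bar{o}'$, and $\bar{\tau}$, and we would make $\bar{\tau}$ recoverable by including it in the augmented observation. Given this, we define the joint policy on histories as $\pi_\alpha'(\bar{h}') = \bar{a}$ with $a_i = (\pi_c(m_c))_i$ if $\tau_i = 0$ and $a_i = \pi_i(m_i)$ otherwise, which reproduces $\psi$ exactly. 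The blackboard agent in $I_c$ carries no action of its own, or a single no-op action. Since an MPOMDP policy may condition on the whole joint history, this is a legitimate MPOMDP policy; a belief-based policy would not do here, because $\pi_i$ depends on the raw memory and not only on a sufficient statistic.

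The main obstacle is the inductive step in direction 2, namely that the restricted individual memories $m_i$ can be reconstructed from the centralized history. The subtle case is an agent with $\tau_i > 0$, whose memory contains only its own $a_i o_i$ together with the blackboard contents copied in at its last communication epoch. We would handle this by tracking, for each agent, the last epoch at which $\tau_i = 0$, and copying $m_c$ as it stood at that time. This is well defined because $\bar{\tau}$ is observed in the augmented observation space. If $\bar{\tau}$ were not observable, we would add it explicitly to $\bar{\mathcal{O}}'$ without loss of generality, mirroring the augmentation in the previous lemma.
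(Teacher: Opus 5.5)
Your proposal is correct and follows the paper's proof. In direction 1 you use the fully communicating construction with a blackboard policy over the joint memory. The paper also redefines $f'$ and $\eta_c'$ so that $m_c' = m_c\bar{a}^{\text{sel}}\bar{o}^{\text{sel}}$; your claim that $m_c$ is exactly $\bar{h}$ implicitly assumes this, since the default $\eta_c$ would also append $\bar{m}^{\text{sel}}$.

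In direction 2 you use the same simulated joint policy, choosing $(\pi_c(m_c))_i$ or $\pi_i(m_i)$ according to $\tau_i$ on the augmented MPOMDP. Your inductive reconstruction of $(m_c,\bar{m})$ from the augmented history, and your step of making $\bar{\tau}$ observable, supply details the paper leaves implicit in its update $\bar{m}' = \bar{m}\bar{o}$.
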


\begin{proof} Demonstrate that 1. $XY_\text{MPOMDP}$ $\leq$ $XY_\text{SDec-POMDP}$ and 2. $XY_\text{SDec-POMDP} \leq XY_\text{MPOMDP}$ \\

\noindent \textbf{1.} X$Y_\text{MPOMDP}$ $\leq$ $XY_\text{SDec-POMDP}$

\noindent
Let each memory selector function return the joint memory, such that $\forall i$, $f'(m_{ci}^\text{sel}) = f'(m_{c}^{\text{sel}}) = \bar{m}$. The update rule for the joint memory is deterministic and concatenates the prior shared memory with the complete set of agent actions and observations from the current time-step, as represented by: 
\begin{equation*}
    \eta'(\bar{m}') = \eta_c'(m_c') = \begin{cases}1, \text{if } \bar{m}' = \bar{m} \bar{a}^\text{sel} \bar{o}^\text{sel} \\
        0, \text{otherwise.}
        \end{cases}
\end{equation*}
Agent actions are selected using a policy over the joint memory: 
\begin{equation*}
    \psi'(\bar{a}) = 
    \begin{cases} 1, \bar{a} = \pi(\bar{m}) \\
        0, \text{otherwise.}
    \end{cases}
\end{equation*}

\noindent \textbf{2.} $XY_\text{SDec-POMDP} \leq XY_\text{MPOMDP}$ \\
\noindent 
Construct policy $\pi^M(m_i') = \langle a_1,...,a_n\rangle$, simulating where the action for agent $i$ is determined to be:
\begin{equation*}
a_i = \begin{cases} (\pi_c(\bar{m}_c))_i & \text{if } \tau_i = 0 \\ \pi_i(m_i) & \text{if } \tau_i > 0 \end{cases}
\end{equation*}
as if each agent draws from the blackboard's policy when its communication sojourn time is zero and otherwise following a local policy. The memory update rule is defined to extend the current memory with the observed joint outcome:
\begin{equation*}
\eta'(\bar{m}') = \begin{cases}1, \text{if } \bar{m}' = \bar{m} \bar{o} \\
        0, \text{otherwise.}
        \end{cases}
\end{equation*}
Joint action selection is consistent with the constructed joint policy:
\begin{equation*}
\psi'(\bar{a}) = 
    \begin{cases} 1, \forall i, 
        a_i = \pi^M(m_i') \\
        0, \text{otherwise.}
    \end{cases}
\end{equation*}

\end{proof} 

\begin{lemma} 
SDec-POMDP and MPOMDP model-policy-objective structures are equivalent. 
\end{lemma}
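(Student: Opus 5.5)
We use Definition 7: it suffices to show that, under the reductions from the two preceding lemmas, the value functions agree for every joint history $\bar{h}$ and initial belief $b^0$. As before, we prove two directions. (1) An MPOMDP policy embedded in the SDec-POMDP achieves the same value. (2) An SDec-POMDP policy simulated by the constructed MPOMDP achieves the same value. In both directions we compare the SDec-POMDP value recursion $V^{\pi_c,\bar{\pi}}(m_c,\bar{m},s)$ with the MPOMDP Bellman recursion, and argue by induction on the horizon. For the infinite-horizon case we then take limits, using discounting ($\gamma<1$, bounded $R$) so that truncation errors vanish.

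\textbf{Direction 1.} Under the mapping of the first lemma, $F'$ is deterministic with every agent centralized. The sum over $\bar{\tau}$ in the SDec-POMDP recursion therefore collapses to a single term with weight one, and $T'$ and $O'$ coincide with $T$ and $O$. By the second lemma, $\eta_c'$ and $\eta'$ are point masses on $\bar{m}\bar{a}\bar{o}'$, and $\psi'$ is a point mass on $\pi(\bar{m})$. Substituting these, the recursion becomes
\begin{equation*}
V(\bar{m},s)=R(s,\pi(\bar{m}))+\gamma\sum_{s'}T(s'\mid s,\pi(\bar{m}))\sum_{\bar{o}'}O(\bar{o}'\mid s',\pi(\bar{m}))V(\bar{m}\pi(\bar{m})\bar{o}',s').
\end{equation*}
Averaging over $b^0$ and regrouping gives $\sum_s b(s)V(\bar{m},s)$, where $b$ is the belief conditioned on $\bar{m}$. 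The inner sums then factor as $\Pr(\bar{o}'\mid b,\bar{a})$ times the value at the updated belief $b'_{\alpha\bar{o}\bar{a}}$. This is exactly the MPOMDP recursion. We verify this by induction on the horizon, with the belief update identity as the inductive step.

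\textbf{Direction 2.} Here the states of the constructed MPOMDP are pairs $\langle s,\bar{\tau}\rangle$, its observations carry the histories, and $T'=F\cdot T$. The key claim is that the joint memory $\bar{m}$, built by appending observed joint outcomes, determines $(m_c,\bar{m}_{\text{SDec}})$. Then $\pi^M$ reproduces $\psi$ exactly, because each $\tau_i$ is recoverable: the factored transition puts it in the state, and the augmented observation reveals it to the blackboard agent. Given this, the MPOMDP recursion over $\langle s',\bar{\tau}'\rangle$ expands to $\sum_{\bar{\tau}}F\sum_{s'}T\sum_{\bar{o}'}O$ followed by the deterministic memory updates, which matches the SDec-POMDP recursion term by term. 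Induction on the horizon then gives equal values. The extra blackboard agent in $I_c$ contributes no reward, since $R'=R$ ignores its null action, so it does not change the value.

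\textbf{Main obstacle.} The hard step is the informational claim in Direction 2: the MPOMDP policy must have access to $\bar{\tau}$, and to each agent's private memory, at the moment it selects actions. Otherwise $\pi^M$ cannot evaluate the case split on $\tau_i$. I would first try to show that the augmented observation space $\bar{\mathcal{O}}\times\prod_i(A_iO_i)^\star$, together with the $\bar{\tau}$ component of the factored state exposed through $O'$, makes the map from MPOMDP histories to SDec-POMDP memory tuples a well-defined deterministic function. Once that map exists, the value equality follows by substitution. If instead $\bar{\tau}$ is only partially revealed, I would fall back to defining $O'$ so that it emits $\bar{\tau}'$ explicitly, which the "at its most general" form of $O'$ in the first lemma permits.
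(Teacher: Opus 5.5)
Your Direction~1 is the paper's whole proof. Under the Lemma~1/Lemma~2 embedding, the paper marks $\psi$, the $F$-sum, $\eta_c$ and $\prod_i\eta$ as point masses. It then reads off $R(s,\pi(\bar h))+\gamma\sum_{s'}\sum_{\bar o'}\Pr(s',\bar o'\mid s,\pi(\bar h))V^{\pi}(s',\bar h')$ using $\Pr(s',\bar o'\mid s,\bar a)=T\,O$. The paper stops there, at the state-conditioned history value $V^{\pi}(s,\bar h)$, and treats equality of values as immediate from the shared recursion. This is shorter because it never touches beliefs.

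Your additions do real work. Your belief regrouping, via $\sum_s b(s)T(s'\mid s,\bar a)O(\bar o'\mid s',\bar a)=\Pr(\bar o'\mid b,\bar a)\,b'(s')$, is what actually reaches the belief-form MPOMDP recursion of the preliminaries. Your horizon induction with a discounted truncation limit supplies the uniqueness step the paper leaves implicit. One small precision: with the history policy $\pi(\bar m)$ of Lemma~2, the continuation value is a function of the updated belief only when $\pi(\bar m)=\pi_\alpha(b_{\bar m})$ for a belief policy $\pi_\alpha$.

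More substantially, the paper's proof of this lemma never does your Direction~2. It only checks that the SDec-POMDP reproduces MPOMDP values, so your argument buys a genuinely two-sided statement. The obstacle you name is real and is not addressed by the paper: its $\pi^M$ branches on $\tau_i$ without explaining how an MPOMDP policy sees it. Since only whether $\tau_i=0$ enters $\psi$ and the selectors $f,g,h$, it suffices for $O'$ to reveal that indicator rather than $\bar\tau'$ itself.

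You would also have to repair a second defect you inherited without comment. Lemma~1's $T'$ conditions on the next joint action $\bar a'$ through $F(\bar\tau'\mid s',\bar a',\bar\tau)$. That is not a valid MPOMDP transition, and it is circular with the $\tau$-dependent choice of $\bar a'$. For your term-by-term match, use the form $F(d\bar\tau\mid s,\bar a)$ that actually appears in the SDec-POMDP value recursion. Also index that value by $(m_c,\bar m,s,\bar\tau)$ rather than $(m_c,\bar m,s)$, so that $\psi$ is well defined at each step.
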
 
\begin{proof} Show $\forall \bar{h}$ $V_{XY_\text{MPOMDP}}(\bar{h}, b_0) = V_{XY_\text{SDec-POMDP}}(\bar{h}, b_0)$ \\

Show that the semi-decentralized value function reduces to the standard value function under the original joint policy. Begin with,
\\
\noindent $ \begin{aligned}
    &  V^{\pi_c, \bar{\pi}}(m_c, \bar{m}, s) = \underbrace{\sum\limits_{\bar{a} \in \bar{A}} \psi(\bar{a} \mid m_c, \bar{m})}_{\textcolor{red}{1, m_c = \bar{h}}} \Biggl[ R(s, \bar{a}) + \gamma \underbrace{\sum\limits_{\bar{\tau} \in \bar{T}} F(d\bar{\tau} \mid s, \bar{a})}_{\textcolor{red}{1, \text{ all RV } \ind \text{ of } d\tau}} \\
    & \sum\limits_{s' \in S} T(s' \mid s, \bar{a}) 
    \sum\limits_{\bar{o}' \in \bar{\mathcal{O}}} O(\bar{o}' \mid s', \bar{a}) \\
    & \underbrace{\sum\limits_{m_c' \in M_c} \eta_c (m_c' \mid m_c, f(\bar{m}), g(\bar{a}), h(\bar{o}'))}_{\textcolor{red}{1, m_c' = m_c \bar{a}^\text{sel} \bar{o}^\text{sel}}} \\
    & \underbrace{\sum\limits_{\bar{m}' \in \bar{M}} \prod\limits_{i \in I} \eta(m_i' \mid m_i, f(m_c), g(\bar{a}), h(\bar{o}'))}_{\textcolor{red}{1, \forall i, m_i' = m_i a_i^\text{sel} o_i^\text{sel}}}
    V^{\pi_c, \bar{\pi}}(m_c', \bar{m}', s') \Biggr]. 
    \end{aligned} $ 

\noindent By construction, each under-braced term evaluates deterministically: the blackboard’s memory update enforces $m_c' = m_c \bar{a}^\text{sel} \bar{o}^\text{sel}$, each agent's local memory update yields $m_i' = m_i a_i^\text{sel} o_i^\text{sel}$, and the distribution over sojourn times collapse to one. Moreover, since $\psi(\bar{a} \mid m_c, \bar{m}, s) = 1$ whenever $\bar{a} = \pi(\bar{h})$, we obtain $V^{\pi_c, \bar{\pi}}(m_c, \bar{m}, s) = V^{\pi}(s, \bar{h})$, which results in standard value recursion: 
\begin{equation*}
R(s, \pi(\bar{h})) + \gamma \sum\limits_{s' \in S}  \sum\limits_{\bar{o}' \in \mathcal{O}} \text{Pr}(s', \bar{o}' \mid s, \pi(\bar{h})) V^{\pi}(s', \bar{h}').
\end{equation*}

\noindent $ \begin{aligned}    
    & \text{Further observe that } \bar{m}_c = \bar{m} \text{ and }  \\
    & \sum\limits_{s' \in S} T(s' \mid s, \bar{a}) \sum\limits_{\bar{o}' \in \bar{\mathcal{O}}} O(\bar{o}' \mid s', \bar{a}) = \sum\limits_{s' \in S}  \sum\limits_{\bar{o}' \in \mathcal{O}} \text{Pr}(s', \bar{o}' \mid s, \pi(\bar{h})).
\end{aligned} $


\end{proof} 
\begin{proposition}
The SDec-POMDP and MPOMDP are equivalent. 
\end{proposition}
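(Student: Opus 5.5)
The plan is to obtain the proposition by chaining the three preceding lemmas. The paper's notion of equivalence is layered: Definition 5 concerns models, Definition 6 model-policy structures, and Definition 7 model-policy-objective structures. Full equivalence of the SDec-POMDP and the MPOMDP should therefore mean equivalence at all three layers, with the mappings chosen at each layer compatible with those of the layer below. I will first fix this reading: two frameworks are equivalent if there are mutual reductions of their tuples, of their policy classes on top of those tuples, and these reductions preserve $V(\bar{h}, b^0)$ for every history.

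\textbf{Step 1.} Invoke Lemma 1 for the two model reductions. In the direction $X_\text{MPOMDP} \leq X_\text{SDec-POMDP}$, all agents have $\tau_i' = 1$ deterministically, and the selectors return $\bar{a}$ and $\bar{o}$. In the direction $X_\text{SDec-POMDP} \leq X_\text{MPOMDP}$, the state is augmented with $\bar{\tau}$, a blackboard agent is added, and the observation space is enlarged.

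\textbf{Step 2.} Invoke Lemma 2 on these same mapped models. It shows that a joint policy over $\bar{m}$ lifts to an SDec-POMDP $(\psi', \eta', \eta_c')$ with $m_c = \bar{m}$. Conversely, the mixed policy that takes $(\pi_c(m_c))_i$ when $\tau_i = 0$ and $\pi_i(m_i)$ otherwise is simulated by a single MPOMDP policy $\pi^M$. Here I must check that $\pi^M$ can read $\bar{\tau}$ and the local memories. This holds because $\bar{\tau}$ is part of the augmented state and enters the enlarged observation $\bar{\mathcal{O}}'$, so the MPOMDP's shared history determines each $m_i$ and $m_c$.

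\textbf{Step 3.} Invoke Lemma 3 to get value equality for every $\bar{h}$. In the MPOMDP-to-SDec direction, the under-braced terms collapse and the Bellman recursions coincide. By a standard induction on the horizon, or a contraction argument under $\gamma < 1$, equal one-step operators with the same fixed-point structure give equal values. For the reverse direction, I would argue analogously. The MPOMDP on the augmented state $\langle s, \bar{\tau}\rangle$ with transition $F \cdot T$ reproduces exactly the sum over $\bar{\tau}$, $s'$, and $\bar{o}'$ in the SDec value function. Since $\pi^M$ emits the same $\bar{a}$ that $\psi$ would, the recursions again match term by term. Marginalizing over $b^0$ extended with the initial $\bar{\tau}^0$ then gives equality of $V(\bar{h}, b^0)$.

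The main obstacle is this reverse direction of the value equivalence. Lemma 3 as written treats mainly the centralized collapse. The reverse direction needs the sojourn-time distribution, which is continuous over $(\mathbb{R}^+)^n$, to be absorbed into the MPOMDP state without changing expectations. I would handle this by noting that only the event $\{\tau_i = 0\}$ affects the selectors. This lets me quotient $\bar{\tau}$ to the finite indicator vector $\mathbb{1}[\tau_i = 0]$ together with whatever statistic $F$ needs, which keeps the augmented MPOMDP well-posed. Combining the three layers then yields the proposition.
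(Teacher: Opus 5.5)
Your proposal takes the same route as the paper, where Proposition 1 has no separate proof and is simply the conjunction of Lemmas 1--3 (equivalence of models, of model-policy structures, and of model-policy-objective structures). Your explicit SDec-POMDP-into-MPOMDP value argument fills in a step that Lemma 3 leaves implicit, since Lemma 3 only carries out the centralized collapse; to make $\pi^M$ well defined, however, you must explicitly add the indicators of $\{\tau_i = 0\}$ to the observations, because the paper's $\bar{\mathcal{O}}' = \bar{\mathcal{O}} \times \prod_{i \in I}(A_iO_i)^\star$ does not contain $\bar{\tau}$.
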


\subsection{Dec-POMDP}

\begin{lemma}
SDec-POMDP and Dec-POMDP models are equivalent.
\end{lemma}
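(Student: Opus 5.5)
We follow the two-direction template of the MPOMDP model lemma and show 1. $X_\text{Dec-POMDP} \leq X_\text{SDec-POMDP}$ and 2. $X_\text{SDec-POMDP} \leq X_\text{Dec-POMDP}$. The tuples $\langle I, S, \bar{A}, \bar{\mathcal{O}}, T, O, R\rangle$ of the Dec-POMDP and MPOMDP coincide, and only the policy and history structures differ. Most of the model-level reduction can therefore be reused, and the real work is choosing $F$ and the selector functions.

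For direction 1, we set $I'=I$, $S'=S$, $\bar{A}'=\bar{A}$, $\bar{\mathcal{O}}'=\bar{\mathcal{O}}$ and $R'=R$. The dynamics are made independent of $\bar{\tau}$, so that $T'(s'\mid s,\bar{a},\bar{\tau})=T(s'\mid s,\bar{a})$ and $O'(\bar{o}'\mid s',\bar{a},\bar{\tau})=O(\bar{o}'\mid s',\bar{a})$. The difference from the MPOMDP case lies in $F'$. We choose a deterministic sojourn distribution that keeps every agent permanently out of a communicating state: $\forall i$, $F'(\tau_i'=\infty\mid s,a_i,\tau_i)=1$, or equivalently any fixed $\tau_i'>0$ that is re-drawn positive at every step, together with an initial $\bar{\tau}^0$ that has all entries strictly positive. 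The selector definitions then give $f'(m_{ci}^\text{sel})=\varnothing$, $g'(a_i^\text{sel})=a_i$ and $h'(o_i^\text{sel})=o_i$ for every $i$. The blackboard selectors $g'(a_c^\text{sel})$ and $h'(o_c^\text{sel})$ return the empty set, so $M_c$ stays trivial. Each $M_i$ therefore collapses to $(A_iO_i)^\star$, which is exactly the local action-observation history space of the Dec-POMDP.

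For direction 2, we reuse the construction from the proof of the MPOMDP model lemma, since the Dec-POMDP shares the same tuple. We augment the state to $S'=S\times(\mathbb{R}^+)^n$ and factor the transition as $T'(\langle s',\bar{\tau}'\rangle\mid\langle s,\bar{\tau}\rangle,\bar{a})=F(\bar{\tau}'\mid s',\bar{a},\bar{\tau})\,T(s'\mid s,\bar{a},\bar{\tau})$. We add a blackboard agent $I_c$ with $\tau=0$, so that $I'=I\cup I_c$. Because Dec-POMDP agents cannot exchange information explicitly, the information that the blackboard would propagate has to reach agents through their observations. We therefore enlarge each local observation to $\mathcal{O}_i'=\mathcal{O}_i\times\big(\prod_{j\in I}(\{\varnothing\}\cup A_jO_j)\big)$. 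The observation function emits the selected joint action-observation component $h(o_c^\text{sel})$ to agent $i$ exactly when $\tau_i=0$, and emits $\varnothing$ otherwise. Since $\tau$ is part of the state, this is a legitimate Markov observation function $O'(\bar{o}'\mid\langle s',\bar{\tau}'\rangle,\langle s,\bar{\tau}\rangle,\bar{a})$. Under this choice each Dec-POMDP local history $h_i$ contains precisely the information in $m_i=(A_iO_i)^\star\times M_c$.

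The main obstacle is direction 2. The observation function must reproduce the blackboard content at the moment it is shared, yet it depends only on the current transition, while $h(o_c^\text{sel})$ depends on the previous actions and observations of other agents. We handle this by reporting the \emph{current} step's communicating-agent actions and observations when $\tau_i=0$. This is the incremental piece that $\eta_c$ appends, so concatenating it in $h_i$ reconstructs $m_c$ up to a deterministic relabeling. One subtlety needs checking separately: an agent that was not communicating and later rejoins receives the full $m_c$ through $f(m_{ci}^\text{sel})$, not merely an increment. To cover this, we let the observation contain all blackboard increments recorded since that agent's last communication epoch, and keep that backlog in the augmented state. The result is a finite-per-step but unbounded observation alphabet, which we accept in the same way the earlier lemma accepted $\prod_i(A_iO_i)^\star$ in $\bar{\mathcal{O}}'$.
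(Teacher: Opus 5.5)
Your two-direction structure matches the paper's. In direction 1 the paper copies the tuple and makes $T'$ and $O'$ independent of $\bar\tau$. It then simply stipulates $g'(a_i^{\text{sel}})=a_i$, $h'(o_i^{\text{sel}})=o_i$ and $g'(a_c^{\text{sel}})=h'(o_c^{\text{sel}})=\varnothing$, and declares that $F$ can be disregarded. You instead choose $F'$ and $\bar\tau^0$ so that every $\tau_i>0$ at all times, which derives those selector values from their definitions. That is the cleaner justification, and it also guarantees $\psi$ never consults $\pi_c$. In direction 2 the paper only cites the MPOMDP construction, on the grounds that the two tuples coincide. Routing blackboard content through enlarged local observations therefore goes beyond the paper. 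Writing $F(\bar\tau'\mid s',\bar a,\bar\tau)$ instead of the paper's $T'(\cdot\mid\cdot,\bar a,\bar a')$ is the right move for a legal Dec-POMDP transition, but you should say explicitly that you are dropping the dependence on the subsequent action $\bar a'$ that Definition 4 allows.

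The step that fails is your identification of what $\eta_c$ appends. By definition $m_c' = m_c\,\bar m^{\text{sel}}\,\bar a^{\text{sel}}\,\bar o^{\text{sel}}$, and $f(m_c^{\text{sel}})$ returns the entire memory $m_j$ of each agent with $\tau_j=0$. So the increment is not just the current step's communicating actions and observations.

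Suppose agent $j$ is silent for several steps and then enters a communication epoch. Its whole private history from the silent period is posted to the blackboard and reaches every communicating $i$ through $f(m_{ci}^{\text{sel}})=m_c$. This is also the benchmarks' semantics, where agents ``share their observation histories''. Your per-step observations never deliver $j$'s silent-period observations to $i$, and neither does your backlog, since it stores only the same per-step increments. Hence two of your claims are false: that concatenating increments reconstructs $m_c$, and that $h_i$ carries precisely the information in $m_i$.

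The repair also simplifies the argument. Keep the full joint action--observation history and the $\bar\tau$ history in the augmented state, so that $m_c$ and every $m_j$ are deterministic functions of the state. Then let agent $i$'s extra observation component be exactly the string $m_{ci}^{\text{sel}}a_i^{\text{sel}}o_i^{\text{sel}}$ that $\eta$ appends. With this choice $h_i$ determines $m_i$ by construction, and no increment or backlog bookkeeping is needed. The price is unbounded state and observation sets, which the paper's MPOMDP construction already accepts.
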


\begin{proof}

1. Demonstrate that $X_\text{Dec-POMDP}$ $\leq$ $X_\text{SDec-POMDP}$ and 2. $X_\text{SDec-POMDP} \leq X_\text{Dec-POMDP}$ \\

\noindent \textbf{1.} $X_\text{Dec-POMDP}$ $\leq$ $X_\text{SDec-POMDP}$
\\
Again set $I' = I$ , $S' = S$, $\bar{A}' = \bar{A}$, $R' = R$, and $\bar{\mathcal{O}}' = \bar{\mathcal{O}}$. Let $T'(s' \mid s, \bar{a}, \bar{\tau}) = T(s' \mid s, \bar{a})$ and $O'(\bar{o}' \mid s', \bar{a}, \bar{\tau}')$ = $O(\bar{o}' \mid s', \bar{a})$. Action and observation selection are specified so that for every agent $i$, $g'(a_i^{\text{sel}}) = a_i$ and $h'(o_i^{\text{sel}}) = o_i$. Assign to the blackboard memory the null set, such that $g'(a_{c}^{\text{sel}})$ = $h'(o_{c}^{\text{sel}})$ = $\varnothing$. By the construction of $g'$ and $h'$, $\tau$ has no impact and $F$ can be disregarded. 

\noindent \textbf{2.} $X_\text{SDec-POMDP}$ $\leq$ $X_\text{Dec-POMDP}$

\noindent Reference Lemma 1 proof 2, as $X_\text{Dec-POMDP} = X_\text{MPOMDP}$.

\end{proof}

\begin{lemma}
SDec-POMDP and Dec-POMDP model-policy structures are equivalent.
\end{lemma}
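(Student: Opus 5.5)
We follow the template of Lemma 2 and show two reductions: 1. $XY_\text{Dec-POMDP} \leq XY_\text{SDec-POMDP}$ and 2. $XY_\text{SDec-POMDP} \leq XY_\text{Dec-POMDP}$. Throughout we build on the model-level mappings of Lemma 4. It therefore suffices to specify the memory selectors, the memory propagation functions $\eta_c, \bar{\eta}$, and the action-selection function $\psi$ on top of those mappings.

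\textbf{1.} $XY_\text{Dec-POMDP} \leq XY_\text{SDec-POMDP}$.

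\noindent Start from the model reduction of Lemma 4 part 1, where $g'(a_i^\text{sel}) = a_i$, $h'(o_i^\text{sel}) = o_i$, and the blackboard selectors return $\varnothing$. We add the following policy-side choices.
\begin{itemize}
\item Set the memory selectors to null, $\forall i$, $f'(m_{ci}^\text{sel}) = f'(m_c^\text{sel}) = \varnothing$. Then no blackboard content ever enters an agent memory.
\item Let $\eta_c'$ keep $m_c' = m_c$, so the blackboard stays empty.
\item Let $\eta'(m_i') = 1$ iff $m_i' = m_i a_i o_i$. Each $m_i$ is then exactly the local action-observation history $h_i$.
\item Define $\psi'(\bar{a}) = 1$ iff $a_i = \pi_i(m_i)$ for all $i$.
\end{itemize}
The policy must not depend on $\tau_i$, because the case split in $\psi$ could otherwise route an agent through $\pi_c$. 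To handle this, define $\pi_c'$ so that $(\pi_c'(m_c))_i$ is irrelevant. One option is to fix $F'$ so that $\tau_i > 0$ almost surely for all $i$, for instance $\tau_i' = 1$ deterministically, as in Lemma 1. Then $\psi'$ always selects the local branch and coincides with the decentralized policy set $\bar{\pi}$.

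\textbf{2.} $XY_\text{SDec-POMDP} \leq XY_\text{Dec-POMDP}$.

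\noindent Use the augmented model from Lemma 1 part 2, referenced in Lemma 4 part 2. It has the extra blackboard agent $I_c$, the state $S \times (\mathbb{R}^+)^n$ carrying $\bar{\tau}$, and observations enriched with action-observation sequences. We make each agent's local observation in the target Dec-POMDP carry whatever the SDec-POMDP selectors would have delivered to it. Concretely, $o_i'$ contains $\tau_i$, the tuple $(a_j,o_j)_{j:\tau_j=0}$ when $\tau_i=0$, and $(a_i,o_i)$ otherwise. This reproduces $m_{ci}^\text{sel}, a_i^\text{sel}, o_i^\text{sel}$.

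The local history of agent $i$ in the Dec-POMDP then determines $m_i$ and, at communicating steps, $m_c$. We can therefore construct the local policy
\[
\pi_i^D(h_i) = \begin{cases} (\pi_c(m_c(h_i)))_i & \text{if } \tau_i = 0,\\ \pi_i(m_i(h_i)) & \text{if } \tau_i > 0, \end{cases}
\]
with $\eta'$ appending the enriched observation and $\psi'$ selecting $\bar{a}$ with $a_i = \pi_i^D(h_i)$. This mirrors the construction of $\pi^M$ in Lemma 2, except that each component depends only on agent $i$'s own history.

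\textbf{Main obstacle.} The hard step is the claim in part 2 that $m_c$ is recoverable from $h_i$ whenever $\tau_i = 0$. An agent that reconnects after a non-communicating period must be handed the full blackboard content accumulated while it was away, not just the current step's communicated pairs. We would handle this by letting the enriched observation deliver the entire current $m_c$, which the factored observation space $\prod_i (A_iO_i)^\star$ already permits, rather than an incremental update. Since this is a deterministic function of the augmented state history, it can be folded into the state as well, so the observation function stays well defined. We would then check that the joint action distributions coincide step by step, which sets up the value equality needed for the following objective-level lemma.
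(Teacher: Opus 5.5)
Your proposal is correct and essentially follows the paper's proof. Part 1 uses the same null memory selectors, append-only local update, and local-policy $\psi'$; the paper simply defines $\psi'$ directly so that $F$ is irrelevant, which makes your device of forcing $\tau_i>0$ optional, and citing Lemma 1 for $\tau_i'=1$ is confusing since the paper uses that very choice there to mean full centralization. Part 2 builds the same simulated policy $\pi^D$, switching between $(\pi_c(m_c))_i$ and $\pi_i(m_i)$ on $\tau_i$. What you add is an explicit account, left implicit in the paper's reliance on the augmented model of Lemmas 1 and 4, of how agent $i$'s own history determines $\tau_i$ and the full blackboard; to make that airtight, let the observation carry an element of $M_c$ itself. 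An element of $\prod_i (A_iO_i)^\star$ does not record at which step each pair was posted or where the $\varnothing$ entries of $m_c$ fall, so it cannot by itself deliver $m_c$.
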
  

\begin{proof}
Demonstrate that 1. $XY_\text{Dec-POMDP}$ $\leq$ $XY_\text{SDec-POMDP}$ and 2. $XY_\text{SDec-POMDP} \leq XY_\text{Dec-POMDP}$ \\

\noindent \textbf{1.} $XY_\text{Dec-POMDP}$ $\leq$ $XY_\text{SDec-POMDP}$

\noindent
Let each memory selector function return the null set, such that $\forall i$, $f'(m_{ci}^\text{sel}) = f'(m_{c}^{\text{sel}}) = \varnothing$. The update rule for each agent's memory is deterministic and concatenates the prior memory with the set of individual agent actions and observations from the current time-step, as represented by: 
\begin{equation*}
    \eta'(m_i') = \begin{cases}1, \text{if } m_i' = m_i a_i^\text{sel} o_i^\text{sel} \\
        0, \text{otherwise}
        \end{cases}
\end{equation*}
Agent actions are selected using a policy over the agent's memory:
\begin{equation*}
    \psi'(\bar{a}) = 
    \begin{cases} 1, \forall i, 
        a_i = \pi_i(m_i) \\
        0, \text{otherwise}
    \end{cases}
\end{equation*}
By the construction of $\psi'$, $\eta_c'$ can be disregarded. \\

\noindent \textbf{2.} $XY_\text{SDec-POMDP} \leq XY_\text{Dec-POMDP}$ \\
Construct simulated policy $(\pi^D)_i(m_i') = a_i$, where:
\begin{equation*}
    a_i = \begin{cases} (\pi_c(m_c))_i & \text{if } \tau_i = 0 \\
        \pi_i(m_i) & \text{if } \tau_i > 0
        \end{cases}
\end{equation*}
as if each agent draws from the blackboard’s policy when its communication sojourn time is zero and otherwise following a local policy. Each agent's memory update rule appends the taken action and observation to the current memory:
\begin{equation*}
\eta'(m_i') = \begin{cases} 1, \text{if } m_i' = m_i a_i o_i \\
0, \text{otherwise} \end{cases}
\end{equation*}

Agent action selection is consistent with the constructed policy:
\begin{equation*}
\psi'(\bar{a}) = 
    \begin{cases} 1, \forall i, 
        a_i = (\pi^D)_i(m_i') \\
        0, \text{otherwise}
    \end{cases}
\end{equation*}
\end{proof}

\begin{lemma}
    SDec-POMDP and Dec-POMDP model-policy-objective structures are equivalent. 
\end{lemma}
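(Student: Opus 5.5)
The plan follows the template of Lemma 3. By Definition 7, it suffices to show that for every history $\bar{h}$ the SDec-POMDP value under the constructions of Lemmas 4 and 5 equals the Dec-POMDP value $V^{\bar{\pi}}(s,\bar{o}_h)$. Averaging over $b^0$ then gives equality at $(\bar{h},b^0)$. I would start from the SDec-POMDP recursion for $V^{\pi_c,\bar{\pi}}(m_c,\bar{m},s)$ and collapse each under-braced factor in turn.

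\textbf{Direction 1 (Dec-POMDP embedded in SDec-POMDP).} Under the selectors of Lemma 4 part 1 and Lemma 5 part 1, we have $f'=\varnothing$, $g'(a_i^\text{sel})=a_i$, $h'(o_i^\text{sel})=o_i$, and the blackboard is empty.
\begin{enumerate}
\item The factor $\sum_{\bar{a}}\psi(\bar{a}\mid m_c,\bar{m})$ equals $1$ exactly at $\bar{a}=\langle\pi_1(m_1),\dots,\pi_k(m_k)\rangle=\bar{\pi}(\bar{o}_h)$.
\item The blackboard sum $\sum_{m_c'}\eta_c(\cdot)$ is $1$ with $m_c'=m_c=\varnothing$. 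So $m_c$ never enters $\psi$ and can be dropped.
\item Each local factor forces $m_i'=m_ia_io_i$. The product over $i$ therefore reduces to the single successor $\bar{m}'=\bar{h}'$, which is the Dec-POMDP history update.
\item Since $T'$ and $O'$ do not depend on $\bar{\tau}$, the sojourn factor $\sum_{\bar{\tau}}F(d\bar{\tau}\mid s,\bar{a})$ is a sum of a probability measure against a $\bar{\tau}$-independent integrand, and equals $1$.
\item The remaining factor $T(s'\mid s,\bar{a})\,O(\bar{o}'\mid s',\bar{a})$ is $\Pr(s',\bar{o}'\mid s,\bar{a})$.
\end{enumerate}
What is left is exactly the Dec-POMDP Bellman recursion for a fixed $\bar{\pi}$. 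Because both value functions are the unique fixed point of the same $\gamma$-contraction, equality of the recursions gives equality of the values. This uniqueness step should be stated explicitly; it is the rigorous replacement for the informal ``we obtain'' used in Lemma 3.

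\textbf{Direction 2 (SDec-POMDP simulated by Dec-POMDP).} Here I would use the augmented model from Lemma 1 part 2, which Lemma 4 reuses. Its state is $\langle s,\bar{\tau}\rangle$, and its observations carry the blackboard's action–observation record. The simulated policy $\pi^D$ of Lemma 5 part 2 is then applied. I would prove by induction on $t$ that, along every trajectory, the augmented Dec-POMDP agent $i$ holds enough information to reconstruct $(m_c,m_i)$ and $\tau_i$. Given that, $\pi^D$ selects the same $\bar{a}^t$ that $\psi$ would select in the SDec-POMDP. Consequently the joint process $(s^t,\bar{\tau}^t,\bar{a}^t)$ has the same law under both models. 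Rewards depend only on $(s,\bar{a})$, so the expected discounted sums coincide, and equality holds for every $\bar{h}$.

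\textbf{Main obstacle.} The hard step is Direction 2. A Dec-POMDP agent cannot access $m_c$ or $\tau_i$ unless these appear in its own observations. I would handle this by letting the augmented observation function $O'$ deliver to agent $i$ its component of $\bar{\tau}'$, together with the blackboard increment $\bar{a}^\text{sel}\bar{o}^\text{sel}$ whenever $\tau_i=0$. This keeps decentralization intact while allowing exact replay of the $f,g,h$ selectors. The remaining work is to check that this $O'$ is a valid conditional distribution, and that the induced history-to-memory map is deterministic.
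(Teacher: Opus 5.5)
Your Direction 1 is essentially the paper's whole proof. The paper strikes out $m_c$ and the $\eta_c$ sum because $M_c=\varnothing$. It collapses $\psi$, the sojourn sum and the local $\eta$ product to the single successor $m_i'=m_ia_io_i$, and identifies $T\cdot O$ with $\Pr(s',\bar o'\mid s,\bar a)$. Your explicit appeal to the unique fixed point of the policy-evaluation contraction is a real tightening of the paper's ``we obtain''. For the sojourn collapse, say that the selectors, $\psi$ and the successor value are also $\bar\tau$-free under the Lemma 4/5 constructions, not only $T'$ and $O'$. The paper never argues your Direction 2 in this lemma, and as you sketch it, it has a genuine gap.

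\textbf{The gap in Direction 2.} Delivering to agent $i$ only the current increment $\bar a^{\text{sel}}\bar o^{\text{sel}}$ at steps where $\tau_i=0$ does not let it reconstruct $m_c$. While $\tau_i>0$, the blackboard keeps absorbing other agents' increments. Through $\bar m^{\text{sel}}$ it also absorbs their whole local memories, which contain private observations $o_j$ from steps at which $j$ itself was silent. None of this reaches $i$ under your $O'$. A Dec-POMDP observation function depends only on $(\langle s',\bar\tau'\rangle,\bar a)$, so it cannot hand over that backlog at the reconnection step either. The cumulative observation space $\bar{\mathcal O}\times\prod_i(A_iO_i)^\star$ of Lemma 1 has the same defect: nothing ties the emitted record to the realized trajectory.

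\textbf{Two repairs.} Either one works.
\begin{itemize}
\item Fold $\bar o'$ and the memories $(m_c,\bar m)$ into the state. Then $O'$ can emit $m_c$ to agent $i$ exactly at its communication epochs. The cost is an infinite state space, which is finite only for finite horizon.
\item Let $O'$ give every agent, at every step, the full record $(\bar a,\bar o',\bar\tau')$. This is a valid conditional distribution, since it is a deterministic function of $\bar o'\sim O(\cdot\mid s',\bar a)$, the conditioning action, and the state component $\bar\tau'$. Then have $\pi^D_i$ recompute all memories by induction but act only on $(m_c,m_i,\tau_i)$. This suffices for Definition 7. However, it gives the target Dec-POMDP policies with strictly more information than any SDec-POMDP agent has, so it yields a simulation rather than equality of optimal values.
\end{itemize}

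\textbf{Timing convention.} Your induction needs $\tau_i^t$ to be observed before $\bar a^t$ is chosen. The paper's $F(\bar\tau'\mid s',\bar a',\bar\tau)$ conditions on the \emph{next} joint action, which is circular with $\psi$, and the paper's augmented transition $T'(\cdot\mid\cdot,\bar a,\bar a')$ inherits this. You must fix the convention $F(\bar\tau'\mid s',\bar a,\bar\tau)$, as in the recursion's $F(d\bar\tau\mid s,\bar a)$, for $\langle s,\bar\tau\rangle$ to be a legitimate Dec-POMDP state.
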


\begin{proof}
Show $\forall \bar{h}$ $V_{XY_\text{Dec-POMDP}}(\bar{h}, b_0) = V_{XY_\text{SDec-POMDP}}(\bar{h}, b_0)$
\\
\noindent
The semi-decentralized value function reduces to the decentralized value function under the original policy set, beginning with: \\
\noindent 
$ \begin{aligned}
&  V^{\pi_c, \bar{\pi}}(\underbrace{\cancel{m_c}}_{\textcolor{red}{M_c = \varnothing}}, \bar{m}, s) = \underbrace{\sum\limits_{\bar{a} \in \bar{A}} \psi(\bar{a} \mid \underbrace{\cancel{m_c}}_{\textcolor{red}{M_c = \varnothing}}, \bar{m})}_{\textcolor{red}{1}} \Biggl[ R(s, \bar{a}) + \\
& \gamma \underbrace{\sum\limits_{\bar{\tau} \in \bar{T}} F(d\bar{\tau} \mid s, \bar{a})}_{\textcolor{red}{1, \text{ all RV } \ind \text{ of } d\tau}} \sum\limits_{s' \in S} T(s' \mid s, \bar{a}) \sum\limits_{\bar{o}' \in \bar{\mathcal{O}}} O(\bar{o}' \mid s', \bar{a}) \\ 
& \underbrace{\sum\limits_{\bar{m}' \in \bar{M}} \prod\limits_{i \in I} \eta(m_i' \mid m_i, \underbrace{\cancel{f(m_c)}}_{\textcolor{red}{M_c = \varnothing }}, g(\bar{a}), g(\bar{a}), h(\bar{o}'))}_{\textcolor{red}{1, \forall i, m_i' = m_i a_i^\text{sel} o_i^\text{sel}}} \\
& \underbrace{\cancel{\sum\limits_{m_c' \in M_c} \eta_c (m_c' \mid m_c, f(\bar{m}), g(\bar{a}), h(\bar{o}'))}}_{\textcolor{red}{M_c = \varnothing}} V^{\pi_c, \bar{\pi}}(\underbrace{\cancel{m_c'}}_{\textcolor{red}{M_c = \varnothing}}, \bar{m}', s') \Biggr]. \end{aligned} $

\noindent By construction, each under-braced term evaluates deterministically: the blackboard’s memory remains $m_c' = \varnothing$, each agent's local memory update yields $m_i' = m_i a_i^\text{sel} o_i^\text{sel}$, and the distribution over sojourn times collapses to one. As before, since $\psi(\bar{a} \mid m_c, \bar{m}, s) = 1$ whenever $\bar{a} = \bar{\pi}(\bar{h})$, we obtain $V^{\pi_c, \bar{\pi}}(m_c, \bar{m}, s) =V^{\bar{\pi}}(s, \bar{h})$, which results in standard value recursion: 
\begin{equation*}
R(s, \bar{\pi}(\bar{h})) + \gamma \sum\limits_{s' \in S}  \sum\limits_{\bar{o}' \in \mathcal{O}} \text{Pr}(s', \bar{o}' \mid s, \bar{\pi}(\bar{h})) V^{\bar{\pi}}(s', \bar{h}') \\
\end{equation*}
Again observe that,
\begin{equation*}
\sum\limits_{s' \in S} T(s' \mid s, \bar{a}) \sum\limits_{\bar{o}' \in \bar{\mathcal{O}}} O(\bar{o}' \mid s', \bar{a}) = \sum\limits_{s' \in S}  \sum\limits_{\bar{o}' \in \mathcal{O}} \text{Pr}(s', \bar{o}' \mid s, \bar{\pi}(\bar{h})).
\end{equation*} 

\end{proof}

\begin{proposition}
    The SDec-POMDP and Dec-POMDP are equivalent
\end{proposition}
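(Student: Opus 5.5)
The proposition follows the same template as the MPOMDP case (Proposition 1). I would obtain it by chaining the three preceding lemmas through Definitions 5--7. Equivalence of the full structures means equivalence at three levels: the model tuples, the model-policy structures, and the model-policy-objective structures.

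\textbf{Model level.} Lemma 4 supplies mutual reductions $X_\text{Dec-POMDP} \leq X_\text{SDec-POMDP}$ and $X_\text{SDec-POMDP} \leq X_\text{Dec-POMDP}$. The first direction uses the trivial embedding: $\tau$-independent $T'$ and $O'$, local selectors $g'(a_i^\text{sel})=a_i$ and $h'(o_i^\text{sel})=o_i$, and an empty blackboard. The second direction augments the state with $\bar{\tau}$ and is inherited from Lemma 1, because the Dec-POMDP and MPOMDP share the same tuple.

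\textbf{Policy level.} Lemma 5 extends the model reductions to model-policy structures. The first direction uses null memory selectors and local policies $\pi_i(m_i)$. The second direction uses the simulated local policy $(\pi^D)_i$.

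\textbf{Objective level.} Lemma 6 establishes that, under the embedding, $V_{XY_\text{Dec-POMDP}}(\bar{h}, b^0) = V_{XY_\text{SDec-POMDP}}(\bar{h}, b^0)$ for every $\bar{h}$. The argument collapses each under-braced sum in the SDec-POMDP Bellman recursion to a deterministic factor of one, which recovers the standard Dec-POMDP recursion.

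Concretely, I would state that by Definition 5 and Lemma 4 the models are equivalent, by Definition 6 and Lemma 5 the model-policy structures are equivalent, and by Definition 7 and Lemma 6 the objectives coincide. I would then check that the three reductions are compatible, meaning the same mapping function is used at each level. The objective-level identity of Lemma 6 is computed on exactly the structures built in Lemmas 4 and 5: $M_c=\varnothing$, $F$ disregarded, and $m_i' = m_i a_i o_i$. So the composition is consistent.

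\textbf{Main obstacle.} The delicate step is the reverse direction at the objective level. Lemma 6 verifies the value identity only for the embedding $\text{Dec-POMDP} \to \text{SDec-POMDP}$. For the converse, I would argue that the simulated policy $(\pi^D)_i$ from Lemma 5 produces, at every step, the same joint action as the SDec-POMDP policy $\psi$. Given that, the joint process over $(s,\bar{o},\bar{a})$ has identical law in both structures, and the expected discounted reward is therefore equal. I would prove this by induction on $t$ over the value recursion: rewards match at each step, and the transition and observation kernels match by the construction of $T'$ and $O'$ in Lemma 4.

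One issue needs care. Under genuine decentralization, $(\pi^D)_i$ must be computable from agent $i$'s local history alone. If $\tau_i$ and $m_c$ are not locally available, I would fold them into agent $i$'s observation through the augmented observation space $\bar{\mathcal{O}}'$ of Lemma 1. This is where I expect the argument to need the most justification.
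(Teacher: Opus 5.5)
Your proposal is correct and follows the paper: Proposition 2 is obtained by combining Lemmas 4, 5 and 6 through Definitions 5--7, using exactly the constructions you list. The converse check you add at the objective level goes beyond the paper, whose Lemma 6 performs only the forward collapse; if you carry it out, the augmented state must also carry the blackboard memory $m_c$ (Lemma 1 only appends $\bar{\tau}$), since otherwise $O'$ has no access to past observations and cannot emit them to agent $i$.
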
 

    \noindent \textbf{Corollary 1.} The SDec-POMDP is the same complexity class as a Dec-POMDP (NEXP-complete)

\begin{proposition}
    The SDec-POMDP and $k$-steps-delayed communication are equivalent
\end{proposition}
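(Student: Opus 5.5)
I will follow the three-level template used for the MPOMDP and Dec-POMDP results: show equivalence of models, then of model-policy structures, then of model-policy-objective structures, and conclude by combining the three lemmas as in the earlier propositions. The $k$-steps-delayed communication model shares the tuple $\langle I, S, \bar{A}, \bar{\mathcal{O}}, T, O, R\rangle$ with the Dec-POMDP. Its distinguishing feature is the information structure: at stage $t$ agent $i$ conditions on the joint history up to $t-k$ together with its own private action-observation suffix from $t-k+1$ to $t$.

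\textbf{Model level.} For $X_{k\text{-delay}} \leq X_\text{SDec-POMDP}$, I set $I'=I$, $S'=S$, $\bar{A}'=\bar{A}$, $\bar{\mathcal{O}}'=\bar{\mathcal{O}}$ and $R'=R$, with $T'$ and $O'$ independent of $\bar{\tau}$. The delay is encoded in a deterministic $F$. For the reverse direction, I reuse Lemma 1 proof 2 verbatim, exactly as Lemma 4 did, since the $k$-delay model shares the MPOMDP/Dec-POMDP model tuple. The natural first attempt is for $F$ to make all agents communicate simultaneously every $k$ steps. That is wrong, because $k$-steps delay is a sliding window: every stage, the blackboard receives the joint action-observation pair from $k$ stages ago. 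So instead I would let the selector $g'(a_c^\text{sel})$, $h'(o_c^\text{sel})$ release to $M_c$ at epoch $t$ the joint pair $\bar{a}^{t-k}\bar{o}^{t-k}$. I would realise this by augmenting the state with a buffer of the last $k$ joint action-observation pairs, which keeps the state finite, and by fixing $\tau_i = 0$ for the blackboard flow at every step. Meanwhile each agent's private selectors $g'(a_i^\text{sel})=a_i$ and $h'(o_i^\text{sel})=o_i$ keep appending local information as in Lemma 4.

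\textbf{Model-policy level.} For $XY_{k\text{-delay}} \leq XY_\text{SDec-POMDP}$, I define $f'(m_{ci}^\text{sel}) = m_c$, so each agent memory is $m_i = (\text{private suffix}, m_c)$, matching $M_i = (A_iO_i)^\star \times M_c$. The action rule is $\psi'(\bar{a}) = 1$ iff $a_i = \pi_i(m_i)$ for all $i$, where $\pi_i$ is the original $k$-delay policy $\pi_i(\bar{h}^{t-k}, h_i^{t-k+1:t})$. For the reverse direction, I mirror Lemma 5 proof 2. I construct a simulated policy that picks $(\pi_c(m_c))_i$ when $\tau_i=0$ and $\pi_i(m_i)$ otherwise. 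The essential check is that both $m_c$ and $m_i$ are functions of information available to a $k$-delay agent. This holds because $m_c$ only ever contains joint data that every agent eventually receives. I expect the main obstacle to be this direction: a general SDec-POMDP can share information with delay shorter than $k$, so strictly speaking the inclusion only holds for the structure that the SDec-POMDP reduction produces, or requires the state/observation augmentation of Lemma 1 proof 2. I would handle it as Lemma 1 does, by folding the $\bar{\tau}$-dependent shared information into an augmented observation, so that the $k=0$ end of the family, i.e. MPOMDP-like sharing, is covered.

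\textbf{Objective level.} I expand $V^{\pi_c,\bar{\pi}}(m_c,\bar{m},s)$ exactly as in Lemmas 3 and 6 and under-brace each term. The $F$ sum collapses to one because $\bar{\tau}$ is deterministic. The $\eta_c$ sum collapses to $m_c' = m_c\,\bar{a}^{t-k}\bar{o}^{t-k}$. Each $\eta$ sum collapses to $m_i' = m_i a_i o_i$. $\psi$ puts mass one on the $k$-delay joint action. The recursion then matches the $k$-steps-delayed value function $R(s,\bar{\pi}(\bar{h})) + \gamma\sum_{s',\bar{o}'}\Pr(s',\bar{o}'\mid s,\bar{\pi}(\bar{h}))V^{\bar{\pi}}(s',\bar{h}')$ for every $\bar{h}$. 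Induction on the stage, with the bounded reward and $\gamma<1$ giving a contraction, then yields equality of values. Combining the three lemmas gives the proposition.
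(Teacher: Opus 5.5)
Your three-level structure matches the paper's appendix proof (Lemmas 7--9). Both proofs use the identity map on $I,S,\bar A,\bar{\mathcal O},R$ with $\bar\tau$-independent $T',O'$, a deterministic $F$, agent selectors fixed to $a_i,o_i$, a $\psi'$ built only from the local $\pi_i$, and the same under-braced collapse of the value recursion. Both reverse directions also end where the paper's do: set $k=0$ and inherit the augmented construction of Lemma 1 proof 2 and Lemma 5 proof 2. Your interim claim that $m_c$ ``only ever contains joint data that every agent eventually receives'' would not have sufficed for $k\ge 1$. A general SDec-POMDP can write time-$t$ joint data to the blackboard and act on it through $\pi_c$ at time $t$. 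You were right to fall back on $k=0$.

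The genuine difference is where the delay lives.
\begin{itemize}
\item The paper delays on the read side. It keeps the blackboard current ($g'(a_c^{\text{sel}})=\bar a$, $h'(o_c^{\text{sel}})=\bar o$, $m_c'=m_c\bar a^{\text{sel}}\bar o^{\text{sel}}$) and truncates through the agent memory selector $f'(m_{ci}^{\text{sel}})=m_c^t|_{0:t-k}$. This needs no extra machinery.
\item You delay on the write side. Then $m_c^t$ is exactly the common-knowledge history $\bar h^{t-k}$, and agents read it whole. This makes the delay visible in the $\eta_c$ term of the objective-level expansion; the paper's Lemma 9 expansion is copied from the MPOMDP case and never shows where the truncation enters. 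It also makes any $\pi_c$ acting on $m_c$ a legitimate $k$-delay policy.
\end{itemize}

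Your proposed realization, a joint action-observation buffer in the state, is the one weak step. It contradicts your own $S'=S$. It also cannot feed the update: in the recursion, $\eta_c(m_c'\mid m_c,f(\bar m),g(\bar a),h(\bar o'))$ sees only the current $\bar a,\bar o'$ and the memories, not $s$. You can drop the buffer. The agents' own memories contain their local pairs, so the blackboard can extract $\bar a^{t-k}\bar o^{t-k}$ from $f(\bar m)$. This is cleanest if the private part of $m_i$ is the full local history rather than a length-$k$ suffix, so the update stays a plain append rather than a shift. Alternatively, adopt the paper's read-side truncation.

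Your closing induction/contraction argument is a small addition the paper omits.
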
 

\begin{proposition}
    The SDec-POMDP and Dec-POMDP-Com are equivalent
\end{proposition}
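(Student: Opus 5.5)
Following the template of Propositions 1 and 2, I will establish the equivalence of the SDec-POMDP and the Dec-POMDP-Com in three stages. First I show model equivalence ($X$), then model-policy equivalence ($XY$), then model-policy-objective equivalence ($Z$) via value-function reduction.

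\textbf{Model equivalence.} Recall that the Dec-POMDP-Com extends the Dec-POMDP tuple with a message alphabet $\Sigma$ and a communication cost $C_\Sigma$.

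For $X_\text{Dec-POMDP-Com} \leq X_\text{SDec-POMDP}$:
\begin{itemize}
\item I take $I' = I$ and $\bar{\mathcal{O}}' = \bar{\mathcal{O}}$.
\item I fold the choice to communicate into the action space, $\bar{A}' = \times_i (A_i \times \{\text{silent}, \text{broadcast}\})$.
\item The message content is the agent's local action-observation pair, which is exactly what the selector functions $g$ and $h$ place on the blackboard.
\item Communication epochs are made action-gated through the sojourn distribution: $F(\tau_i' = 0 \mid s', \bar{a}', \bar{\tau}) = 1$ if agent $i$'s component of $\bar{a}'$ is broadcast, and $F(\tau_i' = 1 \mid \cdot) = 1$ otherwise. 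The conditioning of $\bar{\tau}'$ on $\bar{a}'$ in the SDec-POMDP definition is what allows this.
\item The reward absorbs the cost, $R'(s, \bar{a}') = R(s, \bar{a}) - \sum_i C_\Sigma(\sigma_i)$.
\item $T'$ and $O'$ ignore the communication component and $\bar{\tau}$, exactly as in Lemma 4.
\end{itemize}
For the converse $X_\text{SDec-POMDP} \leq X_\text{Dec-POMDP-Com}$, I reuse the augmentation from Lemma 1 part 2. The state becomes $S \times (\mathbb{R}^+)^n$ and the transition becomes $F \cdot T$. The Dec-POMDP-Com alphabet is set to $\Sigma = \prod_i (A_iO_i)^\star$ with zero cost. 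Messages are only informative when the agent's $\tau_i = 0$, which the agent observes through an augmented observation carrying $\tau_i$.

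\textbf{Model-policy equivalence.} For the first direction, a Dec-POMDP-Com policy pair (action policy and message policy) maps to $\pi_i$ together with the communication component of $\bar{a}'$. The agent memory $m_i$ consists of the local history plus received messages, which is exactly $M_i = (A_iO_i)^\star \times M_c$ with $f(m^\text{sel}_{ci}) = m_c$ on communication epochs.

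For the converse, I build the simulated policy $\pi^D$ as in Lemma 5 part 2. Each agent follows $(\pi_c(m_c))_i$ when it observes $\tau_i = 0$ and its local $\pi_i(m_i)$ otherwise. It always broadcasts its history, and the broadcast is received only by agents currently at $\tau_j = 0$. This requires that message delivery may depend on the augmented state component $\bar{\tau}$. I will handle that by making the observation function deliver the message contents inside $\bar{o}'$ only to agents with $\tau_j = 0$.

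\textbf{Objective equivalence.} I will annotate the value recursion exactly as in Lemmas 3 and 6, showing that each under-braced factor collapses deterministically:
\begin{itemize}
\item $\psi$ evaluates to $1$ on the chosen joint action;
\item $\eta_c$ evaluates to $1$ on the concatenated broadcasts;
\item $\eta$ evaluates to $1$ on the local extension;
\item $F$ is now degenerate given $\bar{a}'$.
\end{itemize}
What remains is the Dec-POMDP-Com Bellman recursion, with reward $R - C_\Sigma$ and histories that include received messages.

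\textbf{Main obstacle.} The hardest step is the converse direction. A genuine SDec-POMDP has stochastic, state-dependent $\bar{\tau}$, whereas the Dec-POMDP-Com assumes agent-chosen, noise-free broadcasts. Two things must be checked: that folding $\bar{\tau}$ into the state and observations preserves the information available to each agent, and that it does not let agents condition on information the blackboard would not have provided. I would argue this by induction on the horizon: at every $t$, the Dec-POMDP-Com agent's memory (local history, plus messages received at its $\tau_i = 0$ epochs) is in bijection with the SDec-POMDP pair $(m_i, m_c)$. Given that bijection, the two value functions agree term by term.
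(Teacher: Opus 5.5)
\textbf{Direction $X_\text{Dec-POMDP-Com} \leq X_\text{SDec-POMDP}$.} This direction fails because you identify the Dec-POMDP-Com message channel with the SDec-POMDP blackboard, and the two are different information structures. The blackboard is symmetric and gated by $\bar{\tau}$. Only agents with $\tau_i = 0$ write to it (their whole memory $m_i$, via $f(m_c^{\text{sel}})$). Only agents with $\tau_j = 0$ read it, since $f(m_{cj}^{\text{sel}}) = m_c$ only when $\tau_j = 0$. Exactly those agents then act through $\pi_c$. A Dec-POMDP-Com broadcast is instead one-way and ungated: a message $\sigma_i \in \Sigma$ from agent $i$ reaches every other agent, silent or not, its content is limited to $\Sigma$, and its price depends on the message through $C_\Sigma$. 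Under your $F$, a silent agent $j$ gets $\tau_j' = 1$ and never sees $i$'s broadcast, whereas in the Dec-POMDP-Com it receives the message and may act on it. Letting $j$ read the blackboard requires $\tau_j' = 0$, which under your $F$ means $j$ must broadcast too and pay a cost the original policy never paid, so the values differ. Collapsing $\Sigma$ to $\{\text{silent}, \text{broadcast}\}$ with full-history content also discards restricted or non-uniformly priced alphabets; the term $C_\Sigma(\sigma_i)$ in your $R'$ has no $\sigma_i$ to be evaluated on. Finally, gating $\tau_i'$ on agent $i$'s own component of $\bar{a}'$ is circular. If $\tau_i' = 0$ that component is chosen by $\pi_c(m_c')$, otherwise by $\pi_i(m_i')$. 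So when $\pi_c$ prescribes silence and $\pi_i$ prescribes broadcasting, no value of $\tau_i'$ is consistent.

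\textbf{Converse direction.} The Dec-POMDP-Com has no state-dependent delivery: broadcasts reach all agents noise-free. An observation function $O(\bar{o}' \mid s', \bar{a})$ cannot carry other agents' histories unless those are also put in the state, at which point the channel is doing no work. Moreover, with $\Sigma = \prod_i (A_iO_i)^\star$ at zero cost, the Dec-POMDP-Com admits policies that broadcast full histories at every step regardless of $\bar{\tau}$. Its optimum then behaves like full centralization, which in general strictly exceeds the SDec-POMDP optimum (compare the centralized and semi-decentralized columns of Table 2). So the memory ``bijection'' in your induction is false: a Dec-POMDP-Com agent's information can strictly exceed the SDec-POMDP pair $(m_i, m_c)$. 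The model cannot forbid this; at most you can restrict attention to policies that voluntarily ignore such messages, which is not a model-level correspondence.

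\textbf{How to repair it, and the paper's route.} Never route blackboard information through the channel. Encode the $\bar{\tau}$-gated sharing in an augmented state/observation model (the $X_\text{SDec-POMDP} \leq X_\text{Dec-POMDP}$ construction of Lemma 4), and embed that model in a Dec-POMDP-Com with a trivial alphabet. Conversely, fold $\Sigma$ into actions and observations with $C_\Sigma$ absorbed into the reward (Seuken and Zilberstein's reduction to a Dec-POMDP). Then apply the purely decentralized embedding of Lemma 4 with a null blackboard. This is exactly the paper's proof. Its appendix lemmas for the Dec-POMDP-Com obtain model, model-policy, and objective equivalence purely by transitivity through the Dec-POMDP. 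They cite Seuken and Zilberstein for Dec-POMDP-Com $\equiv$ Dec-POMDP and Lemmas 4--6 for Dec-POMDP $\equiv$ SDec-POMDP, so the two communication mechanisms never have to be matched directly.
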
 

\textbf{We provide proofs for Propositions 3 and 4 in the technical appendix.}

\section{Recursive Small-Step Semi-Decentralized A*}
\textbf{Recursive small-step semi-decentralized A*} (RS-SDA*) is an exact planning algorithm for optimizing expected reward in SDec-POMDP problems. RS-SDA* modifies RS-MAA* \cite{koops2023recursive} by maintaining a stage-specific partition of decentralized and centralized joint observation histories per probabilistic communication dynamics. Like RS-MAA*, RS-SDA* relies on incremental expansion of a small-step search tree, clustering, recursive heuristics, memoization, and last stage modifications. RS-SDA* generates a fully-specified policy set $\bar{\pi} \in \Pi$ using offline planning. A fully-specified policy contains both fully-specified local policies and, if appropriate for the problem, a blackboard policy $\bar{\pi} = \langle \pi_1,...\pi_n, \pi_c \rangle$ where $\pi_i:\mathcal{O}^h \rightarrow A_i$ and $\pi_c:\bar{\mathcal{O}}^h \rightarrow \bar{A}$. Similarly, $ \bar{\varphi} = \langle \varphi_1,...\varphi_n, \varphi_c\rangle$, $\varphi_i:\mathcal{O}^{\leq h-1} \rightarrow A_i$, and $\bar{\varphi}:\bar{\mathcal{O}}^{\leq h-1} \rightarrow \bar{A}$. RS-SDA* is outlined in Algorithm \ref{alg:rssda}. 

\noindent \textbf{Small-Step Search Tree} 
We adopt the small-step approach first introduced by \citeauthor{cazenave2010partial} for A* \cite{cazenave2010partial} and used by \citeauthor{koops2023recursive} in RS-MAA* \cite{koops2023recursive} to limit search tree outdegree. Small-step search can be used with centralized and decentralized components of each policy node, depicted in Figure \ref{fig:RSSDA_offline}. As shown in Table \ref{table:nodes}, small-step search provides RS-SDA* with mixed component policies, pre-clustering, a lower bound (complete decentralization) and an upper bound (complete centralization) on the number of considered nodes per stage $t$. When $F(\tau \mid s, \bar{a}) = F (\tau \mid \bar{a})$ and the policy is deterministic, we can consolidate the centralized and decentralized components of policies and significantly reduce the search tree size. This results in an RS-SDA* lower bound that is equivalent to RS-MAA* and an RS-SDA* upper bound that has $|\mathcal{O}_\star|^{tn}$ levels per stage and considers $|A_\star|^n$ joint actions per level, where $|\mathcal{O}_\star|$ is the size of the largest observation set and $|A_\star|$ is the size of the largest action set. 

\begin{table}[htbp]
\centering
\resizebox{\columnwidth}{!}{%
\begin{tabular}{l|ll|ll|ll}
\toprule
\multicolumn{1}{c|}{} & \multicolumn{2}{c|}{\textbf{RS-MAA*}} & \multicolumn{2}{c|}{\textbf{RS-SDA*}} & \multicolumn{2}{c}{\textbf{Classical MAA*}}  \\

& \multicolumn{2}{c|}{\textit{(RS-SDA* lower bound)}} & \multicolumn{2}{c|}{\textit{upper bound}} & & \\
& levels & nodes/stage & levels & nodes/stage & levels & nodes/stage \\
$t$ & $n|O_\star|^t$ & $n|\mathcal{O}_\star|^t|A_\star|$ & $|\mathcal{O}_\star|^{tn}$ & $|\mathcal{O}_\star|^{tn}|A_\star|^n$ & 1 & $|A_\star|^{n|\mathcal{O}_\star|^t}$ \\
\midrule
0 & 2 & 6 & 1 & 9 & 1 & 9\\
1 & 4 & 12 & 4 & 36 & 1 & 81 \\
2 & 8 & 24 & 16 & 144 & 1 & 6,561 \\
3 & 16 & 48 & 64 & 576 & 1 & >1E6 \\
4 & 32 & 96 & 256 & 2,304 & 1 & >1E6\\
5 & 64 & 192 & 1,024 & 9,216 & 1 & >1E6\\
6 & 128 & 384 & 4,096 & 36,864 & 1 & >1E6 \\
7 & 256 & 768 & 16,384 & 147,456 & 1 & >1E6 \\
8 & 512 & 1536 & 65,536 & 589,824 & 1 & >1E6 \\
\bottomrule
\end{tabular}}
\caption{\textsc{SDec-Tiger} levels and nodes by stage for RS-MAA*, RS-SDA*, and MAA*. $|A_\star| = 3$, $|O_\star| = 2$, and $n=2$.}
\label{table:nodes}
\end{table}

\begin{figure}[ht]
  \centering
  \includegraphics[width=0.9\linewidth]{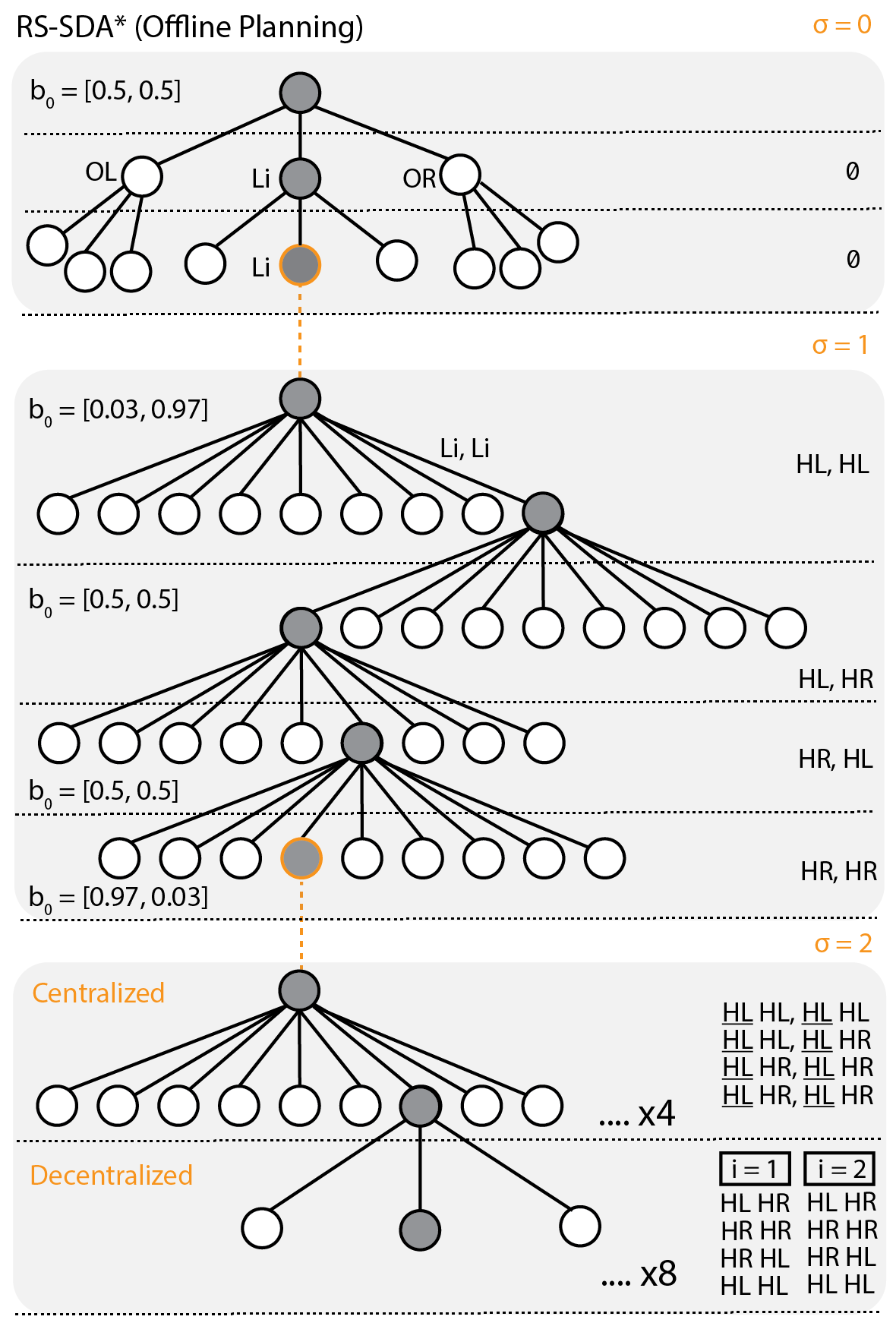}
  \caption{Illustrating RS-SDA* applied to \textsc{SDec-Tiger} using mixed component policies through stage $\sigma = 2$.}
  \label{fig:RSSDA_offline}
\end{figure}

\noindent \textbf{Dynamic Programming} We apply backward induction over beliefs to rapidly determine the value of centralized policy components. This bypasses expensive recursive heuristic calculations for large portions of the search. For each remaining horizon $r$ and belief $b$, we compute $ V_r(b)$ and $Q_r(b, \bar{a})$ and memoize both $V$ and $Q$ under keys $(r,b)$ and $(r,b,\bar{a})$ to enable extensive reuse during A* expansion. Similarly, observation likelihoods $P(\bar{o}\mid b, \bar{a})$ and posteriors $(b'_{\bar{o}, \bar{a}})$ are obtained via a belief update over the model’s transition and observation tensors, then cached for subsequent calls.

\noindent \textbf{Ordering Observation Histories} Any $\bar{\varphi}$ may contain both decentralized and centralized mappings conditioned on the underlying state and actions taken. We therefore explore joint observation histories (JOH) and local observation histories (LOH) in a predetermined sequence: stage, JOH then LOH, by agent (for LOHs), then lexicographically. Observe that, for length $t$, all $\bar{o}^{0:t} \preceq o^{0:t}$ and all $o^{0:t} \preceq \bar{o}^{0:t'}$. Additionally, $\bar{o}^{0:t} \preceq_\text{lex} \bar{o}^{0:t}$, and $o_i^{0:t} \preceq o_j^{0:t}$ if $(i < j)$ or $(i = j \land o_i^{0:t} \preceq_\text{lex} o_j^{0:t})$. 

\noindent \textbf{Clustering} We implement lossless incremental clustering in decentralized policy components based on a probabilistic equivalence criterion \cite{oliehoek2009lossless}. We similarly cluster centralized policy components based on the resulting joint belief, or $\forall_s$, $ \text{P}(s \mid \bar{o}^{0:t}_1) = \text{P}(s \mid \bar{o}^{0:t}_2)$.

\noindent \textbf{Admissible Heuristic} As with multiagent A* (MAA*) \cite{szer2005maa}, an \textit{admissible heuristic} $Q$ guides a path through a search tree with partial policies as nodes $\bar{\varphi}$. A heuristic is admissible if it equals or over-approximates the true value of the policy node. An \textit{open list} is maintained with nodes under consideration. The node in the open list with the highest heuristic value is expanded and replaced in the open list by its children. The tree search terminates once a fully-specified policy with the highest heuristic value is identified. 
For each parent node and candidate action, we split each posterior belief using $\mathcal{S}_{\text{sync}}$ and $\mathcal{A}_{\text{sync}}$ (more generally expressed using $F(s,a)$) and take a probability‑weighted sum of the exact centralized value on the communication-dependent posterior and the exact decentralized value on its complement. Because every constituent (centralized, decentralized, and their mixture conditioned on communication) is an exact optimum of a relaxation of the remaining subproblem, the resulting heuristic \textbf{never underestimates} the achievable return and is therefore admissible.

\begin{algorithm}[t]
\footnotesize
\caption{Recursive Small-Step Semi-Decentralized A*}
\label{alg:rssda}
\DontPrintSemicolon
\SetKwInOut{Input}{Input}
\SetKwInOut{Output}{Output}
\Input{$\Phi \triangleq (\mathcal{S}_{\text{sync}}, \mathcal{A}_{\text{sync}})$; $h$ horizon; $b$ initial belief; $\varphi$ initial policy; $d$ heuristic depth; $M$ iterations; $u$ upper bound}
\Output{optimal policy $\varphi^*$, optimal value $v^*$}
\BlankLine
\SetKwProg{Fn}{function}{:}{}
\Fn{\textsc{RS-SDA*}$(h, b, \varphi, d, M, u, \Phi)$}{
  $Q \leftarrow \textsc{PriorityQueue}(\uparrow)$\; $Q.\mathrm{push}(\min(\varphi.\mathrm{heuristics}), \varphi)$\; $i \leftarrow 0$ \;
  \While{$Q$ is not empty}{
    $(v,\varphi)\leftarrow Q.\mathrm{pop}()$\;
    \lIf{$v < \infty$}{$i \leftarrow i+1$}
    \lIf{$i \ge M$ or $v \le u$}{\Return $(\min(v,u), \mathrm{None})$}
    $\sigma \leftarrow$ current stage of $\varphi$; \quad $j \leftarrow$ current agent index of $\varphi$ \;
    
    \tcp{Phase 1: Transition to Subsequent Stage}
    \If{stage $\sigma$ is complete ($j = |A|$)}{
      \lIf{$\sigma = h$}{\Return $(v,\varphi)$}
      $(D_{\mathrm{dec}},D_{\mathrm{cen}},P_{\mathrm{dec}},P_{\mathrm{cen}},\rho) \leftarrow \textsc{TerminalProbs}(\varphi, \Phi)$ \; 
      \lIf{$\rho > 0$}{$\varphi \leftarrow \textsc{ClusterPolicyDec}(\varphi, D_{\mathrm{dec}}, P_{\mathrm{dec}})$}
      \lIf{$\rho < 1$}{$\varphi \leftarrow \textsc{ClusterPolicyCen}(\varphi, D_{\mathrm{cen}}, P_{\mathrm{cen}})$}
      $w\leftarrow \textsc{EvaluatePolicy}(\varphi, \rho)$ \; 
      $\varphi.\mathrm{heuristics}.\mathrm{append}(w)$ \;
      $\varphi.\mathrm{depth}\leftarrow \min(\sigma, d)$ \;
      $\varphi.\mathrm{extend\_empty\_stage}()$; \quad $\sigma \leftarrow \sigma+1$; \quad $j \leftarrow 0$ \;
    }

    \tcp{Phase 2: Policy Expansion}
    \uIf{$\sigma = h$ \emph{(Final Stage Optimization)}}{
        \uIf{$\rho < 1$ and centralized unfilled}{
            determine $\bar{a}^*$ for centralized component \;
            $\varphi' \leftarrow \varphi$ with $\bar{a}^*$ set; \quad $w \leftarrow\ \textsc{EvaluatePolicy}(\varphi', \rho)$ \;
            $Q.\mathrm{push}(w, \varphi')$
        }
        \uElse{
            determine $a_j^*$ for decentralized component \;
            $\varphi' \leftarrow \varphi$ with $a_j^*$ set; \quad $w \leftarrow\ \textsc{EvaluatePolicy}(\varphi', \rho)$ \;
            \lIf{$w = v$}{\Return $(w, \varphi' )$}
            $Q.\mathrm{push}(w, \varphi')$
        }
    }
    \uElse{
        \uIf{$\rho < 1$ and centralized unfilled}{
          \ForEach{joint action $\bar{a}\in \bar{A}$}{
            $\varphi' \leftarrow \varphi$ with $\bar{a}$ set \;
            $w \leftarrow\ \textsc{ExactCentralQ}(\varphi', \bar{a})$ \;
            $Q.\mathrm{push}(w, \varphi')$
          }
        }
        \uElseIf{agent $j$ unfilled}{
          \ForEach{$a_j\in A_j$}{
            $\varphi' \leftarrow \varphi$ with $a_j$ set \;
            $w\!\leftarrow\!\textsc{EvaluatePolicy}(\varphi')$ \; 
            $Q.\mathrm{push}(w, \varphi')$
          }
        }
    }
  }
}
\end{algorithm}

\section{Experiments}
\noindent \textbf{Semi-Decentralized Benchmarks} We evaluate RS-SDA* in semi-decentralized versions of four common Dec-POMDP benchmarks: \textsc{Dec-Tiger} \cite{nair2003taming}, \textsc{FireFighting} \cite{oliehoek2008optimal}, \textsc{BoxPushing} \cite{seuken2007improved}, and \textsc{Mars} \cite{amato2009achieving}, and in a new \textsc{MaritimeMEDEVAC} benchmark. Problem descriptions with illustrations are disclosed in the technical appendix. All experiments were conducted using an AMD Ryzen 9 9900X3D 12-Core Processor (4400 MHz), with timeout occurring at 20 minutes and memory out at 16 GB. We adopt hyper-parameters $M$ = 200, $d$ = 3, and $\alpha$ = 0.2 for all experiments. A link to our code repository is provided in the technical appendix to support reproducibility.  

\begin{figure}[ht]
  \centering
  \includegraphics[width=0.95\linewidth]{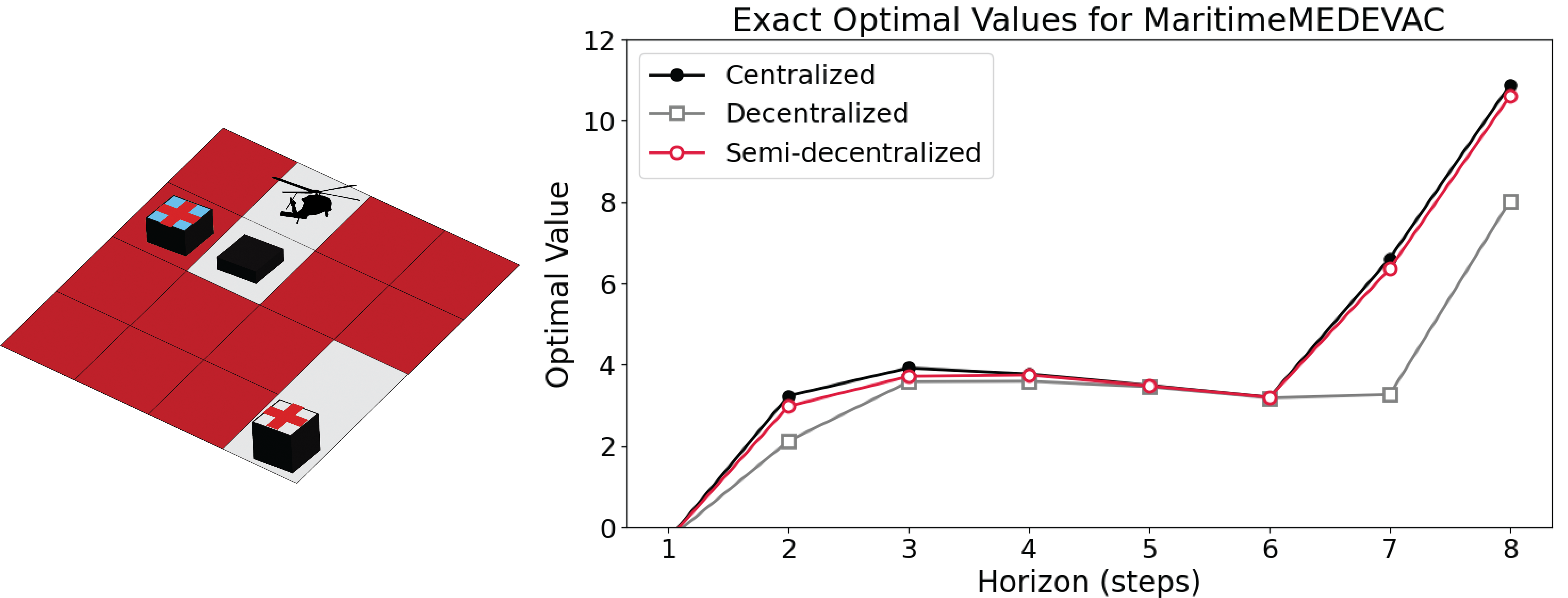}
  \caption{\textsc{MaritimeMEDEVAC} environment representation and centralized/decentralized/semi-decentralized optimal policy values for horizons one through eight.}
  \label{fig:medevac_optimal}
\end{figure}

\noindent \textbf{Results} As shown in Table 2 and Figure \ref{fig:medevac_optimal}, RS-SDA* is competitive with the centralized upper bound across most semi-decentralized benchmarks and \textsc{MaritimeMEDEVAC}. The modified benchmarks demonstrate how model dynamics influence the value of information in multi-agent systems. \textsc{SDec-FireFighting} exemplifies problems where centralization benefits are negligible, and the optimal RS-SDA* solution equals the optimal RS-MAA* solution for all considered $h$. By contrast, \textsc{SDec-Box} exemplifies problems where partial centralization results in complete information sharing, and the optimal RS-SDA* solution equals the fully centralized optimum for all considered $h$. In \textsc{MaritimeMEDEVAC}, the three regimes are nearly indistinguishable at moderate horizons ($H=4,5,6$), but at $H=7$ centralized reaches $6.62$ while semi-decentralized attains $6.36$, clearly outperforming full decentralization ($3.27$). At $H=7$, the semi-decentralized policy recovers about $96\%$ of the centralized value. These results indicate that semi-decentralization and RS-SDA* preserve much of the benefit of centralized coordination while staying tractable, with occasional slowdowns or timeouts on problem instances where lossless clustering is largely ineffective. 

\begin{table}[htbp]
\centering
\resizebox{\columnwidth}{!}{%
\begin{tabular}{l|lr|lr|lr}
\toprule
\multicolumn{1}{c}{} & \multicolumn{2}{c}{lower bound} & \multicolumn{2}{c}{\textbf{our approach}}  & \multicolumn{2}{c}{upper bound} \\
\midrule
\multicolumn{1}{c}{} & \multicolumn{2}{c}{\textit{decentralized}} & \multicolumn{2}{c}{\textit{semi-decentralized}}  & \multicolumn{2}{c}{\textit{centralized}} \\
\multicolumn{1}{c}{} & \multicolumn{2}{c}{RS-MAA*} & \multicolumn{2}{c}{\textbf{RS-SDA*}}  & \multicolumn{2}{c}{RS-SDA*} \\
$h$ & value & time & value & time & value & time \\
\midrule
\multicolumn{7}{l}{\cellcolor{gray!15}\textbf{\textsc{SDec-Tiger}}} \\
8 & 12.21726 & 1 & 27.21518 & <1 & 47.71696 & <1 \\
9 & 15.57244 & 6 & 30.90457 & <1 & 53.47353 & <1 \\
10 & 15.18438 & 271 & 34.72370 & <1 & 60.50988 & <1 \\
\midrule
\multicolumn{7}{l}{\cellcolor{gray!15}\textbf{\textsc{SDec-FireFighting} ($n_h = 3, n_f = 3$)}} \\
3 & -5.73697 & <1 &  -5.73697 & <1 & -5.72285 & <1 \\
4 &  -6.57883 & 2 &  -6.57883 & 3 & -6.51859 & 3 \\
5 & -7.06987 & 29 & -7.06987 & 49 & -6.98069 & 37 \\
\midrule
\multicolumn{7}{l}{\cellcolor{gray!15}\textbf{\textsc{SDec-BoxPushing}}} \\
3 & 66.08100 & <1 & 66.81000 & <1 & 66.81000 &  <1 \\
4 & 98.59361 & 18 & 99.55630 & <1 & 99.55630 &  <1 \\
5 & 107.72985 & MO & 109.09251 & 1 & 109.09251 & 1 \\
\midrule
\multicolumn{7}{l}{\cellcolor{gray!15}\textbf{\textsc{SDec-Mars} (Right Band Rendezvous)}} \\
4 & 10.18080 & 1 & 10.18080 & <1 & 10.87020 & <1 \\
5 & 13.26654 & 2 & 14.29038 & <1 & 14.38556 & 1 \\
6 & 18.62317 & 4 & 20.06430 & 2 & 20.06706 & 3 \\
\midrule
\multicolumn{7}{l}{\cellcolor{gray!15}\textbf{\textsc{SDec-Mars} (Survey Site Beacons)}} \\
4 & 10.18080 & 1 & 10.54620 & <1 & 10.87020 & <1 \\
5 & 13.26654 & 2 & 13.26654 & <1  & 14.38556 & 1 \\
6 & 18.62317 & 4 & 18.62317 & 143 & 20.06706 & 3 \\
\midrule
\multicolumn{7}{l}{\cellcolor{gray!15}\textbf{\textsc{SDec-Mars} (Drill Site Beacons)}} \\
4 & 10.18080 & 1 & 10.87020 & <1 & 10.87020 & <1 \\
5 & 13.26654 & 2 & 14.38556 & <1 & 14.38556 & 1 \\
6 & 18.62317 & 4 & 20.06168 & 2 & 20.06706 & 3 \\
\midrule
\multicolumn{7}{l}{\cellcolor{gray!15}\textbf{\textsc{MaritimeMEDEVAC}}} \\
6 & 3.18348 & <1 & 3.19807 & 28 & 3.19945 & <1 \\
7 & 3.26710 & 2 & 6.36301 & 33 & 6.61819 & 1 \\
8 & 8.03228 & 156 & 10.61275 & MO & 10.88244 & 1 \\
\bottomrule
\end{tabular}}
\caption{RS-SDA* offline planning performance on semi-decentralized benchmarks and \textsc{MaritimeMEDEVAC}. TO denotes timeout ($>$1200s) and MO denotes memout ($>$16GB).}
\end{table}

\section{Conclusion}
We present a foundational framework for multiagent decision making under probabilistic communication.  We introduce the SDec‑ POMDP, which unifies the Dec‑POMDP, MPOMDP, and several communication mechanisms with delay, loss, or cost. A secondary contribution is RS‑SDA*, an admissible heuristic search algorithm for semi-decentralized systems with excellent performance, and semi-decentralized versions of four standard benchmarks and a new medical evacuation scenario. Taken together, SDec‑POMDP and RS‑SDA* provide a principled basis for studying and exploiting probabilistic communication in cooperative teams. Future work includes exploiting interleaving offline planning and online search to improve approximate RS-SDA* performance and investigating systems with non-stationary sojourn time distributions. 
\clearpage

\bibliographystyle{ACM-Reference-Format} 
\bibliography{sample}

\clearpage


\appendix 

\section{Appendix}

\subsection{k-Steps Delayed Communication} 

\textit{$k$-steps delayed communication} \cite{nayyar2010optimal}, \cite{oliehoek2013sufficient}, \cite{oliehoek2008optimal} is a model for delayed broadcast communication where each agent receives the complete joint history from $t-k$ at $t$. This enables $k$-steps delayed communication to generate a joint belief $b^{t-k}$ on which to conduct subsequent decentralized planning. We formally define the agent histories under $k$-steps delayed communication, below: 

\begin{equation*}
    H^t_i = (A_i O_i)^t
\end{equation*}

\begin{equation*}
    \bar{h}_i \in \bar{H}_{i, k} = \bigcup_{t = 1}^\infty \prod_{j \in I} 
    \begin{cases}
        H_i^t & \text{if } j=i \\
        H_j^{\text{max\{0, $t-k$\}}} & \text{otherwise}
    \end{cases}
\end{equation*}

\begin{equation*}
    \bar{\bar{h}} = \prod_{i \in I} \bar{h}_i
\end{equation*}

\begin{lemma}
    SDec-POMDP and $k$-steps delayed communication models are equivalent. 
\end{lemma}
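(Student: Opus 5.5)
Following the template of Lemmas 1 and 4, I will establish two reductions: (1) $X_{k\text{-delay}} \leq X_\text{SDec-POMDP}$ and (2) $X_\text{SDec-POMDP} \leq X_{k\text{-delay}}$. Here ``model'' means the tuple $\langle I, S, \bar{A}, \bar{\mathcal{O}}, T, O, R\rangle$ together with whatever communication structure is needed to produce the histories $\bar{h}_i \in \bar{H}_{i,k}$. The $k$-steps delayed model shares the underlying Dec-POMDP tuple. The only added ingredient is the delayed broadcast, and the SDec-POMDP can generate it through its selector functions and sojourn times.

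\textbf{1.} $X_{k\text{-delay}} \leq X_\text{SDec-POMDP}$. I set $I'=I$, $S'=S$, $\bar{A}'=\bar{A}$, $\bar{\mathcal{O}}'=\bar{\mathcal{O}}$, $R'=R$, $T'(s'\mid s,\bar{a},\bar{\tau})=T(s'\mid s,\bar{a})$ and $O'=O$, exactly as in Lemma 4.

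The main obstacle is that the SDec-POMDP blackboard as defined appends the \emph{current} $\bar{a}^\text{sel}\bar{o}^\text{sel}$, whereas $k$-delay requires other agents' items from $t-k$ to reach agent $i$. The cleanest fix is to augment the state, which was already done in Lemma 1 proof 2. Take $S' = S \times \bar{\mathcal{O}}^{k}\times \bar{A}^{k}$, a FIFO buffer of the last $k$ joint action--observation pairs, with $T'$ deterministically shifting the buffer. Then define $O'$ so that agent $i$'s observation is $\langle o_i, \bar{a}^{t-k}\bar{o}^{t-k}\rangle$; these are the buffered entries being released, and they are empty for $t<k$. This keeps every agent with $\tau_i>0$, so $g'(a_i^\text{sel})=a_i$ and $h'(o_i^\text{sel})$ is the augmented $o_i$, and the blackboard is unused. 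I then check that agent $i$'s memory $m_i$ equals exactly $\bar{h}_i \in \bar{H}_{i,k}$: its own full history plus others' histories truncated at $\max\{0,t-k\}$.

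As an alternative matching the semi-decentralization intuition, set $F'$ deterministic with $\tau_i'=0$ every $k$ steps and write to the blackboard with a lag. I will mention this alternative but rely on the state-augmented construction, since it needs no change to the selector definitions.

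\textbf{2.} $X_\text{SDec-POMDP} \leq X_{k\text{-delay}}$. I take $k=0$, so that $\bar{H}_{i,0}$ is the full joint history and $k$-steps delayed communication coincides with the MPOMDP. The reduction then follows from Lemma 1 proof 2, which encodes $\bar{\tau}$ into $S'=S\times(\mathbb{R}^+)^n$, adds the blackboard agent $I_c$, and expands $\bar{\mathcal{O}}'$. Arbitrary $k$ is unnecessary here, since a single delay instance suffices for the reduction. The same mapping should work for any fixed $k$, because the expanded observation space already carries the needed action--observation sequences.

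The step requiring the most care is verifying in part 1 that the transitions and observations of the augmented-buffer construction reproduce $\bar{H}_{i,k}$ exactly, including the boundary case $t<k$. I will handle it by induction on $t$, comparing $m_i^t$ with $\bar{h}_i^t$ componentwise.
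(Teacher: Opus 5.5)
Your part 2 matches the paper's (set $k=0$ and reuse the Lemma 4 / Lemma 1 part-2 embedding). Part 1 takes a genuinely different route.

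The paper keeps $S'=S$, $\bar{\mathcal O}'=\bar{\mathcal O}$, $T'=T$, $O'=O$ and fixes $F'$ deterministically. The blackboard selectors record the full $\bar a\bar o$ at every step, and the agent selectors record only $a_i o_i$. The lag is not in the model at all. It is injected in the model-policy lemma through the memory selector $f'(m_{ci}^{\mathrm{sel}}) = m_c^t|_{0:t-k}$, i.e.\ each agent reads the blackboard truncated at $t-k$.

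You instead build the lag into the dynamics: a FIFO in the state, released through the observation, with every agent kept at $\tau_i>0$ and the blackboard idle. Your reduction is therefore really $k$-delay $\le$ an augmented Dec-POMDP, composed with Lemma 4 part 1. Your version needs only the selectors as defined in Section 5; the truncated read is not among them, which is exactly the obstacle you spotted. It also keeps all spaces finite. The paper's version avoids the $|\bar A\times\bar{\mathcal O}|^k$ state blow-up and actually uses the blackboard.

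One step of your construction needs repair. Over $S\times\bar{\mathcal O}^k\times\bar A^k$, a $T'$ that deterministically shifts the buffer cannot push $\bar o^{t}$. That observation is drawn by $O(\cdot\mid s^{t},\bar a^{t-1})$ after the transition and is never part of a state, so no later transition sees it. The fix is to fold the current joint observation into the state:
\begin{itemize}
\item let $T'$ draw $(s',\bar o')$ with probability $T(s'\mid s,\bar a)\,O(\bar o'\mid s',\bar a)$ and shift $(\bar a,\bar o')$ into the buffer;
\item make $O'$ the deterministic read-off $\langle o_i', \text{released pair}\rangle$.
\end{itemize}
This also covers $k=0$, where the released pair is the current one, and the $t<k$ padding. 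Agent $i$'s memory is then in bijection with $\bar h_i$ rather than literally equal to it, which is enough for Definition 5.

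Also drop the alternative you mention. Setting $\tau_i'=0$ every $k$ steps produces periodic synchronization. At time $t$ agents then know others' histories up to $k\lfloor t/k\rfloor$, a lag of $t \bmod k$, not a constant lag of $k$. So it does not generate $\bar H_{i,k}$. A constant lag requires writing every step and truncating on read, as the paper does.
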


\begin{proof} Demonstrate that 1. $X_\text{$k$-steps Delayed}$ $\leq$ $X_\text{SDec-POMDP}$ and 2. $X_\text{SDec-POMDP} \leq X_\text{$k$-steps Delayed}$ \\

\noindent \textbf{1.} $X_\text{$k$-steps Delayed}$ $\leq$ $X_\text{SDec-POMDP}$

\noindent 
Let $I' = I$ , $S' = S$, $\bar{A}' = \bar{A}$, $R' = R$, and $\bar{\mathcal{O}}' = \bar{\mathcal{O}}$. The state transition and communication sojourn time functions are independent of $\bar{\tau}$ such that $T'(s' \mid s, \bar{a}, \bar{\tau}) =T(s' \mid s, \bar{a})$ and $O'(\bar{o}' \mid s', \bar{a}, \bar{\tau}')$ = $O(\bar{o}' \mid s', \bar{a})$. Again assume a deterministic communication sojourn time function $F$, where $\tau'$ for each agent is fixed to one, resulting in complete centralization at subsequent decision epochs; $\forall i$, $F'(\tau_i' < 1  \mid s, a_i, \tau_i$) = 0 and $F'(\tau_i' = 1  \mid s, a_i, \tau_i$) = 1.  Blackboard selector functions return joint actions and observations at each time-step; $g'(a_{c}^{\text{sel}}) = \bar{a}$ and $h'(o_{c}^{\text{sel}}) = \bar{o}$. Agent selector functions return agent actions and observations at each time-step; $\forall i$, $g'(a_i^{\text{sel}}) = a_i$ and $\forall i$, $h'(o_i^{\text{sel}}) = o_i$. \\

\noindent \textbf{2.} $X_\text{SDec-POMDP}$ $\leq$ $X_\text{$k$-steps Delayed}$ \\
\noindent Let $k=0$. See proof of Lemma 4 for construction of an SDec-POMDP model within a Dec-POMDP. 
\end{proof}

\begin{lemma}
SDec-POMDP and $k$-steps delayed model-policy structures are equivalent.  
\end{lemma}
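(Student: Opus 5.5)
The plan follows the two-direction template of Lemma 2 and Lemma 5: show $XY_\text{$k$-steps Delayed} \leq XY_\text{SDec-POMDP}$ and then $XY_\text{SDec-POMDP} \leq XY_\text{$k$-steps Delayed}$. Throughout, I take as given the model-level reduction of Lemma 7 (the $k$-steps delayed model embedded with $\bar{\tau}$ fixed so that the blackboard receives $\bar{a},\bar{o}$ each step and each agent receives $a_i,o_i$). The only remaining task is to specify the memory selectors $f'$, the memory updates $\eta', \eta_c'$, and the action selector $\psi'$.

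\textbf{Direction 1.} Here the goal is for each SDec-POMDP agent memory $m_i$ to reproduce exactly the delayed history $\bar{h}_i \in \bar{H}_{i,k}$. I would let the blackboard memory $m_c$ accumulate the full joint history via $\eta_c'(m_c') = 1$ iff $m_c' = m_c \bar{a}^\text{sel}\bar{o}^\text{sel}$, as in Lemma 2. Each agent's local component would be updated as in Lemma 5, with $m_i' = m_i a_i^\text{sel} o_i^\text{sel}$. The memory selector $f'(m_{ci}^\text{sel})$ would return the prefix of $m_c$ of length $\max\{0,t-k\}$; equivalently, it appends to $M_i$'s blackboard part the single joint pair from stage $t-k$. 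Because $M_i = (A_iO_i)^\star \times M_c$, the resulting $m_i$ is then in bijection with $\bar{h}_i = (H_i^t, (H_j^{\max\{0,t-k\}})_{j\neq i})$. I would verify this by induction on $t$. Actions would then be chosen by $\psi'(\bar{a}) = 1$ iff $a_i = \pi_i(m_i)$ for all $i$, where $\pi_i$ is the delayed-communication local policy composed with this bijection, and $\pi_c$ is never consulted.

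\textbf{Direction 2.} I would mirror Lemma 5 part 2, setting $k=0$ as in Lemma 7 part 2. Under that choice, $\bar{H}_{i,0}$ is the full joint history. I would construct the simulated policy $(\pi^K)_i$ that returns $(\pi_c(m_c))_i$ when $\tau_i = 0$ and $\pi_i(m_i)$ otherwise. This is well defined because, under the state augmentation $S \times (\mathbb{R}^+)^n$ and observation augmentation of Lemma 1, $\bar{\tau}$ and the blackboard content are recoverable from the agent's history. The memory update $\eta'$ would append the (augmented) observation, and $\psi'$ would select consistently with $(\pi^K)_i$.

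The main obstacle is the first direction. The SDec-POMDP's $f$ selector, as defined, passes all of $m_c$ whenever $\tau_i = 0$, and it has no native notion of a $k$-step lag. I would try first to realize the delay by having $f'$ expose only a delayed prefix, justified by the remark in Lemma 7 that selectors may be chosen freely once $F$ is deterministic. If that proves too loose, the fallback is to encode the last $k$ joint pairs into the augmented state, so that the blackboard write at stage $t$ carries the stage-$(t-k)$ pair. With either mechanism, the delayed information structure is reproduced exactly.
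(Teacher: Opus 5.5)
Your proposal matches the paper's proof in both directions. For the first direction, the paper likewise has the blackboard accumulate $\bar{a}^\text{sel}\bar{o}^\text{sel}$ each step, sets $f'(m_{ci}^\text{sel}) = m_c^t\big|_{0:\,t-k}$ so each agent memory receives the delayed blackboard alongside its own $a_i^\text{sel}o_i^\text{sel}$, and selects actions by $a_i = \pi_i(m_i)$; for the second, it sets $k=0$ and reuses the simulated-policy construction of Lemma 5. Your extra care (appending only the stage-$(t-k)$ joint pair rather than the whole prefix, and checking that $\bar{\tau}$ and $m_c$ are recoverable from the augmented observations) fills in details the paper leaves implicit, but the route is the same.
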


\begin{proof}Demonstrate that $XY_\text{$k$-steps Delayed}$ $\leq$ $XY_\text{SDec-POMDP}$ and 2. $XY_\text{SDec-POMDP} \leq XY_\text{$k$-steps Delayed}$ \\

\noindent \textbf{1.} $XY_\text{$k$-steps Delayed}$ $\leq$ $XY_\text{SDec-POMDP}$

\noindent For any object X carrying a time index, $X\big|_{0:r}$ denotes the same object with indices $> r$ disabled/ignored. Let each agent memory selector function return the blackboard memory at $t-k$, such that $\forall i$, $f'(m_{ci}^\text{sel}) = m_c^t\big|_{0:\,t-k}$. The blackboard memory selector function returns the latest joint agent memory set. The update rule for each agent's memory is deterministic and concatenates the prior memory with the set of individual agent actions and observations from the current time-step: 

\begin{equation*}
    \eta'(m_i') = \begin{cases}1, \text{if } m_i' = m_i m_{ci}^\text{sel} a_i^\text{sel} o_i^\text{sel} \\
        0, \text{otherwise.}
        \end{cases}
\end{equation*}

\noindent Agent actions are selected using a policy over the agent's memory:

\begin{equation*}
    \psi'(\bar{a}) = 
    \begin{cases} 1, \forall i, 
        a_i = \pi_i(m_i) \\
        0, \text{otherwise.}
    \end{cases}
\end{equation*}

\noindent Finally, the update rule for the blackboard memory is deterministic and concatenates the prior shared memory with the complete set of agent actions and observations from the current time-step: 

\begin{equation*}
\eta_c(m_c') =     
    \begin{cases}1, \text{if } m_c' = m_c \bar{a}^\text{sel} \bar{o}^\text{sel} \\
    0, \text{otherwise.}
    \end{cases}
\end{equation*}

\noindent \textbf{2.} $XY_\text{SDec-POMDP} \leq XY_\text{$k$-steps Delayed}$ \\
\noindent Let $k=0$. See proof of Lemma 5 for construction of an SDec-POMDP model-policy structure within a Dec-POMDP. 
    
\end{proof}

\begin{lemma}
    SDec-POMDP and $k$-steps delayed model-policy-objective structures are equivalent.
\end{lemma}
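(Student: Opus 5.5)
We follow the template of Lemmas 3 and 6: we must show that for every history $\bar{h}$ and initial belief $b^0$, $V_{XY_\text{$k$-steps Delayed}}(\bar{h}, b^0) = V_{XY_\text{SDec-POMDP}}(\bar{h}, b^0)$. We take the model and model-policy reductions of the two preceding appendix lemmas as given. The proof then amounts to substituting those constructions into the SDec-POMDP value recursion and checking, term by term, that it collapses to the $k$-steps delayed value recursion.

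\textbf{Direction 1} ($k$-steps delayed $\to$ SDec-POMDP). We start from the SDec-POMDP value equation $V^{\pi_c, \bar{\pi}}(m_c, \bar{m}, s)$ and annotate each factor with under-braces, as in Lemma 3.

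\emph{Sojourn times.} Because $F'$ is deterministic with $\tau_i' = 1$, and $T'$ and $O'$ do not depend on $\bar{\tau}$, the sum over $\bar{\tau}$ contributes a factor of one.

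\emph{Action selection.} Since $\psi'$ selects $a_i = \pi_i(m_i)$ for every agent, the sum over $\bar{a}$ puts all its weight on the single joint action $\bar{a} = \langle \pi_1(m_1), \dots, \pi_n(m_n) \rangle$.

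\emph{Blackboard update.} $\eta_c$ is deterministic and sets $m_c' = m_c \bar{a}\bar{o}'$, so $m_c^t$ is exactly the full joint history $\bar{h}^t$.

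\emph{Agent updates.} $\eta'$ is deterministic and sets $m_i' = m_i\, m_c^t|_{0:t-k}\, a_i o_i'$. The main obstacle lies here. The agent memory as written concatenates a fresh copy of the truncated blackboard at every step, so it is not literally $\bar{h}_i \in \bar{H}_{i,k}$. We will prove by induction on $t$ that there is a bijective, deterministic decoding from $m_i^t$ to the delayed history $\bar{h}_i^t = (h_i^t, \{h_j^{\max\{0,t-k\}}\}_{j \neq i})$. The decoding takes the last blackboard snapshot embedded in $m_i^t$ to recover the other agents' histories up to $t-k$, and reads the agent's own action-observation pairs to recover $h_i^t$. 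Because the snapshots are nested prefixes, no information beyond $\bar{h}_i^t$ is introduced, and none is lost. Policies over $M_i$ and policies over $\bar{H}_{i,k}$ are therefore in one-to-one correspondence; if needed, we restrict $\pi_i$ to depend on $m_i$ only through the decoding.

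With all four factors collapsed, the remaining factor $\sum_{s'} T(s'\mid s,\bar{a})\sum_{\bar{o}'} O(\bar{o}'\mid s',\bar{a})$ is rewritten as $\sum_{s',\bar{o}'}\Pr(s',\bar{o}'\mid s,\bar{\pi}(\bar{\bar{h}}))$. This yields the standard recursion
\[
R(s,\bar{\pi}(\bar{\bar{h}})) + \gamma \sum_{s'}\sum_{\bar{o}'}\Pr(s',\bar{o}'\mid s,\bar{\pi}(\bar{\bar{h}}))\,V^{\bar{\pi}}(s',\bar{\bar{h}}').
\]
Both value functions satisfy this same recursion with the same bounded rewards and $\gamma<1$, so they are equal by uniqueness of the fixed point of the Bellman contraction. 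In the finite-horizon case, backward induction gives the same conclusion. Averaging over $b^0$ then gives equality at the initial belief.

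\textbf{Direction 2} (SDec-POMDP $\to$ $k$-steps delayed). Setting $k=0$ turns the delayed model into the setting of Lemma 6, and the reductions of the preceding lemmas already embed the SDec-POMDP model-policy structure there. We therefore invoke Lemma 6 directly: the value under the simulated policy $\pi^D$ equals the SDec-POMDP value. Composing the equalities from the two directions completes the proof.
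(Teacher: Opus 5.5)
Your Direction 1 is the paper's whole proof. The paper substitutes the preceding constructions into the SDec-POMDP recursion, collapses the $\psi$, sojourn, blackboard and agent-update factors, and identifies $\sum_{s'}T\sum_{\bar o'}O$ with $\Pr(s',\bar o'\mid s,\bar\pi(\bar{\bar h}))$, as you do. Your two additions tighten that same route rather than change it:
\emph{the decoding induction:} the paper's proof writes the agent update as $m_i'=m_i a_i^{\text{sel}}o_i^{\text{sel}}$, although its own policy-structure construction inserts the snapshot $m_c^t|_{0:t-k}$; your induction is exactly what justifies identifying $m_i^t$ with $\bar h_i^t\in\bar H_{i,k}$.
\emph{the fixed-point argument:} it supplies the step the paper leaves as a bare ``we obtain $V^{\pi_c,\bar\pi}=V^{\bar\pi,k}$''.

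Direction 2, however, does not go through as written. It is also not needed: Definition 7 asks only for the value equality under the correspondence already constructed, and that is all the paper proves.

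First, by the definition of $\bar H_{i,k}$, setting $k=0$ gives $H_j^{\max\{0,t\}}=H_j^t$ for every $j$. That is instantaneous full sharing, the MPOMDP setting of Lemma 3, not the Dec-POMDP setting of Lemma 6. The Dec-POMDP corresponds to $k$ at least the horizon, or $k=\infty$. The preceding appendix lemmas make the same $k=0$ move, but at the level of policies and values the distinction matters.

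Second, neither Lemma 3 nor Lemma 6 says anything about the simulated policies $\pi^M$ or $\pi^D$. Both only collapse the SDec-POMDP value under the embedding of the smaller model, which is the direction you already handled. In an actual Dec-POMDP, $\pi^D$ is not even executable, because agent $i$ cannot evaluate $(\pi_c(m_c))_i$ without the other agents' histories.

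If you want to keep Direction 2, argue directly at $k=0$. Every agent holds the full joint history. Provided the communication events $\bar\tau$ are recoverable from observations (e.g.\ via the augmented observation space of Lemma 1, part 2), each agent can reconstruct $m_c$ and every $m_j$. The simulated joint policy then induces the same distribution over $(s^t,\bar a^t)$ as the SDec-POMDP policy, so the values coincide.
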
 

\noindent The semi-decentralized value function reduces to the k-steps delayed value function under the original policy set, beginning with:

\begin{proof} Show $\forall \bar{h}$ $V_{XY_\text{$k$-steps Delayed}}(\bar{h}, b_0) = V_{XY_\text{SDec-POMDP}}(\bar{h}, b_0)$

\noindent $ \begin{aligned}
&  V^{\pi_c, \bar{\pi}}(m_c, \bar{m}, s) = \underbrace{\sum\limits_{\bar{a} \in \bar{A}} \psi(\bar{a} \mid m_c, \bar{m})}_{\textcolor{red}{1, m_c = \bar{h}}} \Biggl[
R(s, \bar{a}) + \gamma \underbrace{\sum\limits_{\bar{\tau} \in \bar{T}} F(d\bar{\tau} \mid s, \bar{a})}_{\textcolor{red}{1, \text{ all RV } \ind \text{ of } d\tau}} \\
&  
\sum\limits_{s' \in S} T(s' \mid s, \bar{a}) 
\sum\limits_{\bar{o}' \in \bar{\mathcal{O}}} O(\bar{o}' \mid s', \bar{a}) \\ 
& \underbrace{\sum\limits_{m_c' \in M_c} \eta_c (m_c' \mid m_c, f(\bar{m}), g(\bar{a}), h(\bar{o}'))}_{\textcolor{red}{1, m_c' = m_c \bar{a}^\text{sel} \bar{o}^\text{sel}}} \\
& \underbrace{\sum\limits_{\bar{m}' \in \bar{M}} \prod\limits_{i \in I} \eta(m_i' \mid m_i, f(m_c), g(\bar{a}), h(\bar{o}'))}_{\textcolor{red}{1, \forall i, m_i' = m_i a_i^\text{sel} o_i^\text{sel}}} V^{\pi_c, \bar{\pi}}(m_c', \bar{m}', s') \Biggr] \end{aligned}$ \\

\noindent By construction, each under-braced term evaluates deterministically: the blackboard’s memory update enforces $m_c' = m_c \bar{a}^\text{sel} \bar{o}^\text{sel}$, each agent's local memory update yields $m_i' = m_i a_i^\text{sel} o_i^\text{sel}$, and the distribution over communication sojourn times collapse to one. Moreover, since $\psi(\bar{a} \mid m_c, \bar{m}, s) = 1$ whenever $\bar{a} = \bar{\pi}(\bar{\bar{h}})$, we obtain $V^{\pi_c, \bar{\pi}}(m_c, \bar{m}, s) = V^{\bar{\pi}, k}(s, \bar{\bar{h}})$, which results in standard value recursion: 
\begin{equation*}
R(s, \bar{\pi}(\bar{\bar{h}})) + \gamma \sum\limits_{s' \in S}  \sum\limits_{\bar{o}' \in \mathcal{O}} \text{Pr}(s', \bar{o}' \mid s, \bar{\pi}(\bar{\bar{h}})) V^{\bar{\pi}}(s', \bar{\bar{h}}) 
\end{equation*}

\noindent
Observe that:

\begin{equation*}\sum\limits_{s' \in S} T(s' \mid s, \bar{a}) \sum\limits_{\bar{o}' \in \bar{\mathcal{O}}} O(\bar{o}' \mid s', \bar{a}) =  \sum\limits_{s' \in S}  \sum\limits_{\bar{o}' \in \mathcal{O}} \text{Pr}(s', \bar{o}' \mid s, \bar{\pi}(\bar{\bar{h}}))
\end{equation*}
    
\end{proof} 

\begin{proposition}
    The SDec-POMDP and $k$-steps delayed are equivalent.
\end{proposition}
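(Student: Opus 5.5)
The proposition follows the same template the paper uses for Propositions 1 and 2. By Definitions 5--7, equivalence of the full structures means equivalence at three nested levels: model, model-policy, and model-policy-objective. I will assemble exactly these three components from the three preceding appendix lemmas.

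The first step invokes the lemma establishing that the SDec-POMDP and $k$-steps delayed communication \emph{models} are equivalent. This supplies the mapping $f$ in both directions.
\begin{itemize}
\item For $X_{k\text{-steps Delayed}} \leq X_\text{SDec-POMDP}$, use the construction in which $F$ pins every $\tau_i'$ to one, the blackboard selectors return $\bar a,\bar o$, and the agent selectors return $a_i,o_i$.
\item For the reverse direction, use $k=0$ together with the Dec-POMDP embedding of Lemma 4.
\end{itemize}

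The second step invokes the model-policy lemma. It gives the mapping $g$:
\begin{itemize}
\item in one direction, agent memory selectors return $m_c^t\big|_{0:t-k}$, memories are concatenated, and actions come from $\pi_i(m_i)$;
\item in the other direction, it uses the simulated policy $\pi^D$ of Lemma 5.
\end{itemize}
The third step invokes the model-policy-objective lemma, which shows that $V_{XY_{k\text{-steps Delayed}}}(\bar h,b^0)=V_{XY_\text{SDec-POMDP}}(\bar h,b^0)$ for all $\bar h$. It does so by collapsing the under-braced terms of the semi-decentralized Bellman recursion to the $k$-delayed value recursion.

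I would then combine the three results. Under the common mappings $f$ and $g$, the reductions compose: each model-level reduction is lifted to the model-policy level by the second lemma, and its value is preserved by the third. Hence every $k$-steps delayed instance, together with its policy, is realized by an SDec-POMDP instance with identical objective value, and conversely. This gives equivalence in the sense of Definitions 5--7, mirroring how Propositions 1 and 2 follow from Lemmas 1--3 and 4--6.

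The main obstacle is consistency between the three lemmas. The value identity must be established under the \emph{same} constructions used for the model and model-policy reductions. In particular, the selector $f'(m_{ci}^\text{sel})=m_c^t\big|_{0:t-k}$ must yield agent memories that coincide with the histories $\bar h_i\in\bar H_{i,k}$. I would check this by induction on $t$:
\begin{itemize}
\item \emph{Base case.} For $t\le k$, the truncation returns the empty blackboard, matching $H_j^{0}$ for $j\ne i$.
\item \emph{Inductive step.} Appending $a_i^\text{sel}o_i^\text{sel}$ while the blackboard lags by $k$ reproduces $\bar h_i$ at $t+1$.
\end{itemize}
With this identification, $\psi$ selects $\bar\pi(\bar{\bar h})$, and the third lemma's recursion applies verbatim, closing the argument.
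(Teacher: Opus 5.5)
Your proposal is correct and follows the paper's route: the proposition is read off from the three appendix lemmas (model, model-policy, model-policy-objective equivalence for $k$-steps delayed communication), with the same constructions you cite, just as Propositions 1 and 2 follow from Lemmas 1--3 and 4--6. Your extra induction check goes beyond what the paper writes. One caveat on it: under the update $m_i' = m_i m_{ci}^{\text{sel}} a_i^{\text{sel}} o_i^{\text{sel}}$, the memory re-appends the whole truncated blackboard at every step, so it is an information-equivalent encoding of $\bar h_i$ rather than literally equal to it, and that equivalence is all the argument needs.
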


\subsection{Dec-POMDP-Com}

The \textit{Dec-POMDP-Com} \cite{goldman2003optimizing} extends explicit communication to the Dec-POMDP by including an alphabet of possible messages $\Sigma$ and communication cost function $C_\Sigma$. For a specified cost, each agent takes a communication action after their control action, which under the instantaneous broadcast communication assumption results in all other agents receiving an additional observation. Unlike the SDec-POMDP, agents in a Dec-POMDP-Com are never entirely restricted from communication. The algorithm designer may centralize the agents in a Dec-POMDP-Com (for cost) at will.

\begin{lemma}
    SDec-POMDP and Dec-POMDP-Com models are equivalent.
\end{lemma}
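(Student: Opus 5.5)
We follow the two-direction template of Lemmas 1, 4 and 7: first show $X_\text{Dec-POMDP-Com} \leq X_\text{SDec-POMDP}$, then show $X_\text{SDec-POMDP} \leq X_\text{Dec-POMDP-Com}$. In each direction we exhibit the tuple mapping explicitly. Only model components are mapped here, namely $I, S, \bar{A}, \bar{\mathcal{O}}, T, O, R$ and, on the SDec-POMDP side, $F$ and the selector functions. The memory-update and policy constructions are deferred to the companion model-policy lemma.

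\textbf{1.} $X_\text{Dec-POMDP-Com} \leq X_\text{SDec-POMDP}$. Set $I' = I$ and $S' = S$. Fold the communication act into the action set by taking $A_i' = A_i \times (\Sigma \cup \{\varnothing\})$, where $\varnothing$ is the silent message. Let $T'(s' \mid s, \bar{a}', \bar{\tau}) = T(s' \mid s, \bar{a})$ and $O' = O$, both ignoring the message component. Set $R'(s, \bar{a}') = R(s, \bar{a}) - \sum_i C_\Sigma(\sigma_i)$ with $C_\Sigma(\varnothing) = 0$, so the communication cost is absorbed into the reward.

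Communication is then made action-gated through $F$, which the SDec-POMDP allows to condition on the subsequent joint action $\bar{a}'$. Take $F$ deterministic, with $F'(\tau_i' = 0 \mid s', \bar{a}', \bar{\tau}) = 1$ if $\sigma_i \neq \varnothing$ and $\tau_i' = 1$ otherwise. An agent that broadcasts thus sits at a communication epoch and joins the blackboard, while a silent agent stays decentralized. The selectors are then the default $g, h$ restricted to agents with $\tau_i = 0$. This reproduces the instantaneous broadcast in which every other agent receives the message as an extra observation.

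If a faithful message alphabet is required, rather than full action-observation sharing, we additionally augment $\bar{\mathcal{O}}'$ with $\Sigma^n$. The observation function then deterministically emits the sent messages, and the selectors route only those entries.

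\textbf{2.} $X_\text{SDec-POMDP} \leq X_\text{Dec-POMDP-Com}$. Reuse the construction of Lemma 1 part 2, as was done in Lemma 4. Augment the state to $S \times (\mathbb{R}^+)^n$ so that $\bar{\tau}$ is carried in the state, and build the factored $T' = F \cdot T$. This already gives a Dec-POMDP, and hence a Dec-POMDP-Com with empty or infinitely costly $\Sigma$. It is better, however, to use $\Sigma$ nontrivially: let $\Sigma = \bigcup_i (A_iO_i)^\star$, so messages carry local histories. Define a state-dependent cost $C_\Sigma$ that is $0$ when the sender has $\tau_i = 0$ in the augmented state and $-\infty$ (a prohibitive penalty) otherwise. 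Communication is then available exactly at communication epochs, and the blackboard memory $M_c$ is recoverable from received messages.

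The main obstacle is this second direction. The standard Dec-POMDP-Com has a cost $C_\Sigma$ that does not depend on the state, and, as the paper notes, its agents are never strictly barred from communicating. The gating by $\bar{\tau}$ therefore has to be encoded somewhere. I would first try a state-dependent cost folded into $R'$ over the augmented state, arguing that a prohibitive penalty is never optimal and so is equivalent to prohibition. If the paper's notion of reduction demands a message-independent cost, the fallback is to pass the $\bar{\tau}$-gating through the observation channel instead. In that version, messages sent while $\tau_i > 0$ are delivered as null observations via $O'$ on the augmented state, which makes communication possible only when $\bar{\tau}$ permits.
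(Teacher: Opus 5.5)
Your part 1 has a genuine gap: the construction does not reproduce broadcast. In the SDec-POMDP, receipt is gated on the \emph{receiver's} sojourn time. An agent with $\tau_j>0$ gets $f(m_{cj}^{\text{sel}})=\varnothing$, $g(a_j^{\text{sel}})=a_j$ and $h(o_j^{\text{sel}})=o_j$, so nothing from the blackboard or from other agents reaches it. You set $\tau_i'=0$ exactly for senders, so a message is seen only by agents that are themselves broadcasting in that step. With two agents, if agent 1 sends $\sigma_1$ and agent 2 is silent, agent 2 learns nothing in your model but receives $\sigma_1$ in the Dec-POMDP-Com.

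The senders, meanwhile, exchange full memories rather than $\sigma_i$. Your alphabet patch fixes that content mismatch but not the routing, since it still goes through the $\tau$-gated selectors. Depending on the Dec-POMDP-Com's within-step ordering, folding $\sigma_i$ into the same decision epoch as $a_i$ also shifts by one step what a message can depend on or influence. The usual remedy is to split each step into a communication sub-step and a control sub-step. The simplest repair avoids the blackboard altogether: deliver the sent messages through $O'$ into every recipient's local observation and charge $C_\Sigma$ in $R'$. This yields a plain Dec-POMDP, which you then embed with $\tau$-independent selectors as in Lemma 4, part 1.

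That repaired route is exactly the paper's proof, which argues both directions by transitivity through the Dec-POMDP: Dec-POMDP-Com $\leq_p$ Dec-POMDP $\leq$ SDec-POMDP, and SDec-POMDP $\leq$ Dec-POMDP $\leq_p$ Dec-POMDP-Com. It cites Seuken and Zilberstein (2008) for the Dec-POMDP-Com/Dec-POMDP reductions and Lemma 4 for the rest. Your part 2 already contains this argument. Lemma 1, part 2 yields a Dec-POMDP, which is a Dec-POMDP-Com with empty $\Sigma$, and that alone closes the direction. The extra gating through a state-dependent cost or through the observation channel is unnecessary for a model-level reduction and, as you note, lies outside the standard Dec-POMDP-Com. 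It also has a sign slip: under your convention $R'=R-\sum_i C_\Sigma(\sigma_i)$, a prohibitive cost must be $+\infty$ (or a large finite penalty), not $-\infty$.
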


\begin{proof}Demonstrate that $X_\text{Dec-POMDP-Com}$ $\leq$ $X_\text{SDec-POMDP}$ and 2. $X_\text{SDec-POMDP} \leq X_\text{Dec-POMDP-Com}$ \\

\noindent \textbf{1.} $X_\text{Dec-POMDP-Com}$ $\leq$ $X_\text{SDec-POMDP}$ \\
\noindent Reference $X_\text{Dec-POMDP-Com}$ $\leq_p$ $X_\text{Dec-POMDP}$ \cite{seuken2008formal} and $X_\text{Dec-POMDP}$ $\leq$ $X_\text{SDec-POMDP}$ in proof of Lemma 4. \\

\noindent \textbf{2.} $X_\text{SDec-POMDP}$ $\leq$ $X_\text{Dec-POMDP-Com}$ \\
\noindent Reference $X_\text{SDec-POMDP}$ $\leq$ $X_\text{Dec-POMDP}$ in proof of Lemma 4 and $X_\text{Dec-POMDP}$ $\leq_p$ $X_\text{Dec-POMDP-Com}$ \cite{seuken2008formal}. 

\end{proof}

\begin{lemma}
    SDec-POMDP and Dec-POMDP-Com model-policy structures are equivalent. 
\end{lemma}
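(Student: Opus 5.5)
We follow the pattern of the preceding model-level lemma for the Dec-POMDP-Com and prove two reductions, 1. $XY_\text{Dec-POMDP-Com} \leq XY_\text{SDec-POMDP}$ and 2. $XY_\text{SDec-POMDP} \leq XY_\text{Dec-POMDP-Com}$. In both directions we route through the Dec-POMDP. We use the known polynomial reductions between the Dec-POMDP-Com and the Dec-POMDP \cite{seuken2008formal} to carry the policy structure across, and we use the Dec-POMDP/SDec-POMDP model-policy equivalence already established in Lemma 5. The argument is a composition: if each reduction maps models and their policy classes onto one another, then so does their composition.

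\textbf{Direction 1.} First, given a Dec-POMDP-Com policy, we take the Seuken--Zilberstein construction. Each agent's joint control-and-message policy $\delta_i: (A_i \Sigma O_i)^\star \to A_i \times \Sigma$ becomes a Dec-POMDP local policy. Messages are folded into augmented actions $A_i \times \Sigma$. Received broadcasts are folded into augmented local observations $O_i \times \Sigma^{n-1}$. The cost $C_\Sigma$ is absorbed into $R$. Second, we apply Lemma 5 part 1. Every memory selector returns $\varnothing$, so $f'(m_{ci}^\text{sel}) = f'(m_c^\text{sel}) = \varnothing$. Agents update $m_i' = m_i a_i^\text{sel} o_i^\text{sel}$ with the augmented actions and observations, and $\psi'$ selects $a_i = \pi_i(m_i)$. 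Alternatively, we could give a direct construction: set $\tau_i = 0$ exactly when agent $i$'s message action is a broadcast, and let the blackboard memory $m_c$ record the broadcast content. We mention this only as a remark, since the composed route suffices.

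\textbf{Direction 2.} We take Lemma 5 part 2, where each agent simulates $(\pi^D)_i$: it draws $(\pi_c(m_c))_i$ when $\tau_i = 0$ and $\pi_i(m_i)$ otherwise. Here $\bar{\tau}$ and the blackboard content are folded into the augmented state and the augmented local observations, following Lemma 4 part 2. We then embed the resulting Dec-POMDP policy in a Dec-POMDP-Com. We choose a singleton alphabet $\Sigma = \{\sigma_\varnothing\}$ with $C_\Sigma \equiv 0$, so every agent's message policy is constant. The Dec-POMDP-Com policy is then identical to the Dec-POMDP policy, and the memory update appends only $a_i o_i$.

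\textbf{Main obstacle.} The hard step is Direction 2, because the blackboard memory $m_c$ is shared. A decentralized agent must be able to compute $\pi_c(m_c)$ from its own augmented local history, and this requires that the relevant content of $m_c$ be delivered to it as part of its local observation. We handle this the way the paper handles Lemma 4 part 2: the augmented observation space $\bar{\mathcal{O}} \times \prod_i (A_i O_i)^\star$ already includes the communicating agents' action-observation pairs. With that, $m_c$ becomes a deterministic function of each agent's augmented history whenever $\tau_i = 0$. The same augmented observations could alternatively be delivered through free Dec-POMDP-Com messages. Finally, we check that $\psi'$ and $\eta'$ are deterministic and agree step by step with the original policy. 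That check is routine.
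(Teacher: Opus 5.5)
Your proposal is correct and follows the paper's own proof. Both directions compose the Seuken--Zilberstein reductions between the Dec-POMDP-Com and the Dec-POMDP with the corresponding part of Lemma 5, and your extra detail (the singleton zero-cost alphabet, and folding $\bar{\tau}$ and the blackboard content into augmented observations as in Lemma 4 part 2) only makes explicit what the paper leaves to those citations.
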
 

\begin{proof} Demonstrate that $XY_\text{Dec-POMDP-Com}$ $\leq$ $XY_\text{SDec-POMDP}$ and 2. $XY_\text{SDec-POMDP} \leq XY_\text{Dec-POMDP-Com}$ \\

\noindent \textbf{1.} $XY_\text{Dec-POMDP-Com}$ $\leq$ $XY_\text{SDec-POMDP}$ \\
\noindent Reference $XY_\text{Dec-POMDP-Com}$ $\leq_p$ $XY_\text{Dec-POMDP}$ \cite{seuken2008formal} and $XY_\text{Dec-POMDP}$ $\leq$ $XY_\text{SDec-POMDP}$ in proof of Lemma 5. \\

\noindent \textbf{2.} $XY_\text{SDec-POMDP} \leq XY_\text{Dec-POMDP-Com}$ \\
\noindent Reference $XY_\text{SDec-POMDP}$ $\leq$ $XY_\text{Dec-POMDP}$ in proof of Lemma 5 and $XY_\text{Dec-POMDP}$ $\leq_p$ $XY_\text{Dec-POMDP-Com}$ \cite{seuken2008formal}. 

\end{proof}

\begin{lemma}
    SDec-POMDP and Dec-POMDP-Com model-policy-objective structures are equivalent. 
\end{lemma}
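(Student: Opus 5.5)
The plan is to follow the template of the two preceding Dec-POMDP-Com lemmas and route everything through the Dec-POMDP. By Definition 7, it suffices to show that for every joint history $\bar{h}$ and initial belief $b^0$, the value of a Dec-POMDP-Com model-policy structure equals the value of its image SDec-POMDP structure, and conversely. We have already shown (Lemma 6) that SDec-POMDP and Dec-POMDP model-policy-objective structures are equivalent, in the sense that $V_{XY_\text{SDec-POMDP}}(\bar{h}, b^0) = V_{XY_\text{Dec-POMDP}}(\bar{h}, b^0)$. Equality of values is transitive, so the task reduces to establishing the analogous value equality between the Dec-POMDP-Com and the Dec-POMDP under the mappings of \cite{seuken2008formal} invoked in the previous two lemmas.

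\textbf{Step one (Dec-POMDP-Com to Dec-POMDP).} We recall the construction from \cite{seuken2008formal}. Each Dec-POMDP-Com stage, consisting of a control action followed by a communication action $\sigma_i \in \Sigma$, is encoded in the Dec-POMDP as follows:
\begin{itemize}
\item the action sets are extended to $A_i \times \Sigma$, or equivalently two substeps per stage with a phase bit in the state;
\item the broadcast messages become part of every agent's joint observation;
\item the reward becomes $R'(s,\langle \bar{a},\bar{\sigma}\rangle) = R(s,\bar{a}) - C_\Sigma(\bar{\sigma})$.
\end{itemize}
We then write the Dec-POMDP value recursion under the mapped policy. Because the message component of the observation is a deterministic function of the chosen messages, the sum over $\bar{o}'$ factors into the original $O(\bar{o}'\mid s',\bar{a})$ times an indicator. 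The recursion therefore collapses term by term to the Dec-POMDP-Com recursion $R(s,\bar{a}) - C_\Sigma(\bar{\sigma}) + \gamma \sum_{s'}\sum_{\bar{o}'} \Pr(s',\bar{o}'\mid s,\bar{a}) V(s',\bar{h}')$. This is the same under-brace collapse used in Lemmas 3 and 6. If the two-substep encoding is used, the discount must be applied only on control substeps (for example via $\gamma' = \sqrt{\gamma}$ together with reward placement), or we simply adopt the single-step product-action encoding so that no rescaling is needed.

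\textbf{Step two (Dec-POMDP to Dec-POMDP-Com).} This direction is immediate: take $\Sigma = \{\varnothing\}$ and $C_\Sigma \equiv 0$. The message observations are then uninformative, the histories coincide up to the constant null symbol, and the value recursions are literally identical.

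\textbf{Composition and main obstacle.} Chaining each step with Lemma 6, in the order Dec-POMDP-Com $\to$ Dec-POMDP $\to$ SDec-POMDP and back, yields $V_{XY_\text{Dec-POMDP-Com}}(\bar{h}, b^0) = V_{XY_\text{SDec-POMDP}}(\bar{h}, b^0)$ for all $\bar{h}$. I expect the main obstacle to be the history correspondence inside Step one. Definition 7 quantifies over $\bar{h}$, but the histories live in different spaces: Dec-POMDP-Com histories contain received messages, while the image Dec-POMDP histories contain them as observation components. We must fix an explicit bijection $\bar{h} \mapsto \bar{h}'$ that is preserved by the mappings of Lemmas 4 and 5, and check that it commutes with the memory updates $\eta$. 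My first attempt would be to define the bijection by interleaving each $o_i$ with the received message vector and to verify inductively on history length that the mapped policies select corresponding actions.
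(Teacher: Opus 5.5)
Your overall route is the paper's route. The paper's proof of this lemma is a one-line appeal to Lemma 6, and it leaves the Dec-POMDP-Com/Dec-POMDP link entirely to the reductions cited in the two preceding lemmas, so your Step one supplies something the paper never writes down. The gap is inside that step.

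\textbf{The product encoding is not value-preserving.} The single-step encoding with action sets $A_i \times \Sigma$ is \emph{not} equivalent to the two-substep encoding. In the Dec-POMDP-Com, as the paper itself describes it, agent $i$ chooses its stage-$t$ message after its control action and the resulting observation $o_i'$, and the message is received before the stage-$(t+1)$ control action. If $\langle a_i,\sigma_i\rangle$ is chosen simultaneously, the message can only carry information up to the previous observation. The image model is therefore one-step delayed communication, not the Dec-POMDP-Com.

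Here is a concrete counterexample:
\begin{itemize}
\item two equally likely static states $L,R$, with $C_\Sigma \equiv 0$ and $\Sigma \supseteq \{L,R\}$;
\item stage-$0$ actions are inert, after which agent $1$ observes the state exactly and agent $2$ observes nothing;
\item at stage $1$ the team earns $1$ iff agent $2$'s action equals the state, and afterwards the process is absorbing with zero reward.
\end{itemize}
In the Dec-POMDP-Com, agent $1$ broadcasts what it saw, so the value is $\gamma$. Under the product encoding, agent $1$'s stage-$0$ message is fixed before it observes anything, and its stage-$1$ message arrives after agent $2$ has acted, so the value is $\gamma/2$.

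Your displayed recursion shows the same defect. It places $C_\Sigma(\bar\sigma)$ outside the sum over $\bar o'$. In the Dec-POMDP-Com, however, $\bar\sigma$ is a function of $\bar h \bar o'$, and the next control action depends on it. So the term-by-term collapse you describe lands on a different model, and the fallback ``simply adopt the single-step product-action encoding'' fails.

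\textbf{The fix.} Use the two-substep (phase-bit) encoding exclusively, and collapse the value over a \emph{pair} of substeps.
\begin{itemize}
\item The control substep contributes $R(s,\bar a)$ together with the $T$ and $O$ sums.
\item The communication substep only flips the phase bit. It emits the broadcast vector as a deterministic observation (this is where your indicator belongs) and contributes the communication cost.
\end{itemize}
Take $\gamma'=\sqrt{\gamma}$ and place reward $-C_\Sigma(\bar\sigma)/\sqrt{\gamma}$ on the communication substep, so the stage-$t$ cost is discounted by $\gamma^t$ like the stage-$t$ control reward. The two substeps then compose to
\begin{equation*}
R(s,\bar a)+\sum_{s'}\sum_{\bar o'}\Pr(s',\bar o'\mid s,\bar a)\bigl[-C_\Sigma(\bar\sigma(\bar h\bar o'))+\gamma V(s',\bar h')\bigr],
\end{equation*}
which is exactly the Dec-POMDP-Com recursion. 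With this in place, the history bijection you flag as the main obstacle is routine: interleave each control observation with the received broadcast vector. Your Step two ($\Sigma=\{\varnothing\}$, $C_\Sigma\equiv 0$) and the chaining through Lemma 6 then go through as written.
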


\begin{proof} Demonstrate that $\forall \bar{h}$ $V_{XY_\text{Dec-POMDP-Com}}(\bar{h}, b_0) = $ \\ $V_{XY_\text{SDec-POMDP}}(\bar{h}, b_0)$ \\

\noindent Reference $\forall \bar{h}$ $V_{XY_\text{Dec-POMDP}}(\bar{h}, b_0) = V_{XY_\text{SDec-POMDP}}(\bar{h}, b_0)$ in proof of Lemma 6. 

\end{proof}

\begin{proposition}
    The SDec-POMDP and Dec-POMDP-Com are equivalent.
\end{proposition}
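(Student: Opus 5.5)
The plan follows the three-tier template used for Propositions 1 and 2. Equivalence of the SDec-POMDP and the Dec-POMDP-Com in the sense of the paper means equivalence at three levels: the models (Definition 5), the model-policy structures (Definition 6), and the model-policy-objective structures (Definition 7). The three lemmas immediately preceding the statement establish exactly these, so the proof of the proposition is their conjunction. I will make explicit, for each level, that the relevant reductions compose: each direction of the equivalence passes through the Dec-POMDP as an intermediate model.

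\textbf{Model level.} I will chain the reduction $X_\text{Dec-POMDP-Com} \leq_p X_\text{Dec-POMDP}$ of \cite{seuken2008formal} with $X_\text{Dec-POMDP} \leq X_\text{SDec-POMDP}$ from the proof of Lemma 4. The latter sets $g'(a_c^\text{sel}) = h'(o_c^\text{sel}) = \varnothing$, so the communication sojourn function $F$ plays no role. For the reverse direction I will compose $X_\text{SDec-POMDP} \leq X_\text{Dec-POMDP}$ from Lemma 4, which augments the state with $\bar\tau$ and adds a blackboard agent, with $X_\text{Dec-POMDP} \leq_p X_\text{Dec-POMDP-Com}$. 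The second reduction is immediate by taking $\Sigma = \varnothing$, or equivalently an infinite cost $C_\Sigma$.

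\textbf{Model-policy and objective levels.} I will compose the policy mappings of Lemma 5 with the Seuken--Zilberstein translation of Dec-POMDP-Com policies into Dec-POMDP policies; this is Lemma 11. For the objective level (Lemma 12), I will check that the value recursion is preserved along the chain. The Lemma 6 computation shows that $V^{\pi_c,\bar\pi}$ collapses to $V^{\bar\pi}(s,\bar h)$. It remains to note that the Seuken--Zilberstein reduction preserves value, with the message cost $C_\Sigma$ folded into the reward $R$ of the augmented Dec-POMDP. Transitivity of equality then gives $V_{XY_\text{Dec-POMDP-Com}} = V_{XY_\text{SDec-POMDP}}$ for all $\bar h$.

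\textbf{Main obstacle.} The hard step is making this composition rigorous. The reduction in \cite{seuken2008formal} is a polynomial reduction $\leq_p$ rather than a mapping of the paper's $\leq$ type, and it is value-preserving only for optimal policies, not for every history $\bar h$. I would first try to restate that reduction explicitly in the SDec-POMDP's own selector language. Messages become extra observation components placed on the blackboard at every step, with $\tau_i$ fixed to $0$ so that every agent is communicating. The communication cost becomes a reward term depending on the augmented joint action. This gives a direct history-by-history correspondence, so the equality of values holds pointwise, as Definition 7 requires.
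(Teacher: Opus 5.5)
Your three-level decomposition and the chaining through the Dec-POMDP are exactly the paper's Lemmas 10--12: Seuken--Zilberstein composed with Lemmas 4, 5 and 6, with the proposition as the conjunction. The paper's Lemma 12 merely cites Lemma 6. Your two further points go beyond what the paper writes and are fair: that $C_\Sigma$ must be folded into $R$, and that the cited $\leq_p$ reduction does not by itself give the pointwise identity of Definition 7.

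\textbf{The proposed repair fails.} Fixing $\tau_i = 0$ for every agent does not open a message channel. Under the model's own selectors it switches every channel on:
\begin{itemize}
\item $g$ and $h$ deliver every agent's full $a_j$ and $o_j$, not just its message, to every agent and to the blackboard;
\item $f$ copies each complete memory $m_i$ onto the blackboard;
\item $\psi$ takes every action from $\pi_c(m_c)$.
\end{itemize}
That is precisely the fully communicating configuration the paper uses to embed the MPOMDP in part 1 of Lemmas 1--2. Private observations become common knowledge whatever message is sent, so $\Sigma$ and $C_\Sigma$ no longer govern information flow. The values you obtain are MPOMDP values, which can strictly exceed Dec-POMDP-Com values, e.g.\ when messages are costly or $\Sigma$ is too coarse to encode observations. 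The $\tau$-channel is all-or-nothing, and through $f$ it even shares entire histories. Hence no schedule of $\tau_i = 0$ epochs can represent a chosen, partial, paid-for message, and your claimed history-by-history correspondence does not exist.

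\textbf{A faithful native encoding goes the other way.}
\begin{itemize}
\item Keep the blackboard off, as in part 1 of Lemmas 4 and 5: $g'(a_c^{\text{sel}}) = h'(o_c^{\text{sel}}) = \varnothing$, with each agent acting on its local memory.
\item Replace each $A_i$ by $A_i \times \Sigma$.
\item Deliver the other agents' messages as components of each agent's local observation, through an augmented state and observation function.
\item Subtract $C_\Sigma$ in $R$.
\end{itemize}
That is just the Seuken--Zilberstein Dec-POMDP pushed through Lemma 4. The pointwise value identity must then come from an explicit policy correspondence for that construction, not from centralization. That correspondence has to handle the timing: in the Dec-POMDP-Com a message is chosen after the new observation and received before the next control action. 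Encoding it simultaneously with the control action introduces a one-step delay unless you split steps, and splitting steps in turn affects discounting.

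For the direction $X_{\text{Dec-POMDP}} \leq X_{\text{Dec-POMDP-Com}}$, drop the infinite-cost variant, since an infinite $C_\Sigma$ is not a legal real-valued reward. An alphabet containing only the null message is what you want.
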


\subsection{Semi-Decentralized Benchmarks}
\subsubsection{\textsc{SDec-Tiger}}

Consider the following semi-decentralized variation on the \textsc{Dec-Tiger} benchmark \cite{nair2003taming}, depicted in Figure \ref{fig: SDecTiger}. \textsc{SDec-Tiger} has 2 states, 3 actions, and 2 observations. Two cooperative agents stand behind two doors. One door leads to a room containing a tiger while the other leads to a room containing treasure. Each agent has three actions: opening the left door $OL$, opening the right door $OR$, and listening $L$. The problem reward function is fully described in table \ref{table:tiger_rewards}. The problem resets when any door is opened; the probability that the tiger is behind the left door $TL$ and that the tiger is behind the right door $TR$ both become 0.5. Joint actions that do not open doors do not affect the underlying state. Agents have a 75\% chance of accurately communicating their action observation histories if they both listen. After taking an action, each agent receives one of two observations, hear tiger on left $HL$ or hear tiger on right $HR$. Listening to either door gives an  0.75 probability of returning the correct observation. Opening a door and resetting the problem results in both agents receiving either observation with a 0.5 probability. If one agent opens a door while the other listens, the listening agent will not know the problem has been reset.

\begin{table}[h!]
\footnotesize
\centering
\caption{\textsc{SDec-Tiger} Rewards ($TL$, $TR$)}
\label{table:tiger_rewards}
\begin{tabular}{l|lll} \toprule
 $\bar{a}$ & $OL$ & $OR$ & $L$ \\
 \midrule
 $OL$ & (-50, 20) & (-100, -100) & (-101, 9) \\ 
 $OR$ & (-100, -100) & (20, -50) & (9, -101) \\ 
 $L$ & (-101, 9) & (9, -101) & (-2, -2)\\
\bottomrule
\end{tabular}
\end{table}

\begin{figure}[ht!]
  \centering
  \includegraphics[width=1.0\linewidth]{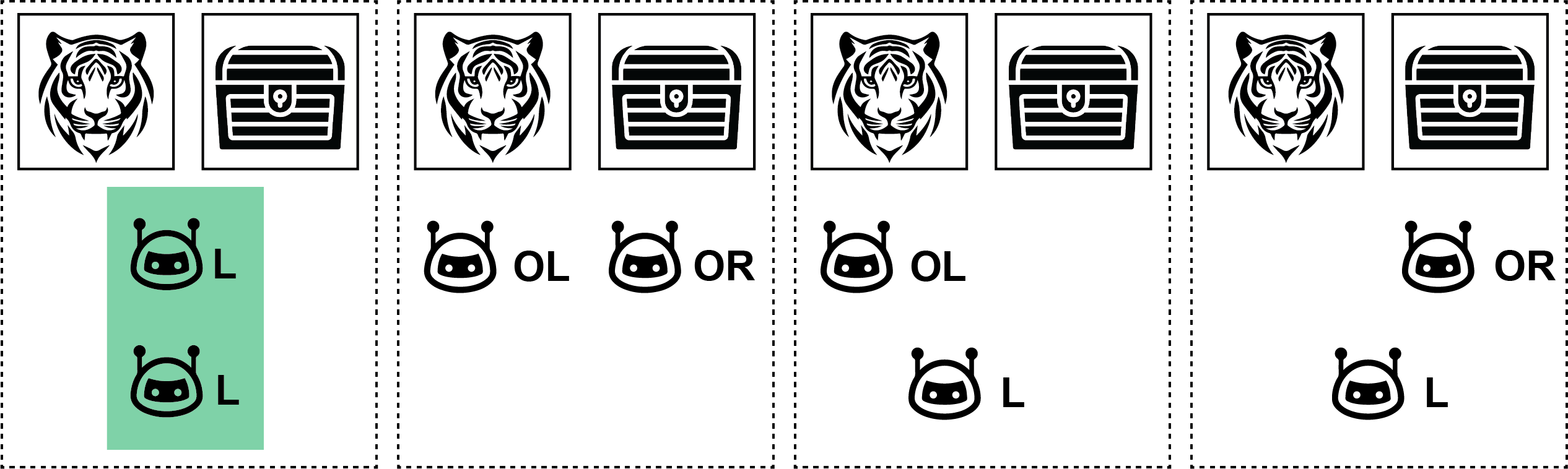}
  \caption{Illustration of four of nine possible joint actions for \textsc{SDec-Tiger}. Agents communicate their observation histories with some probability when they listen to the same door (in green).}
  \label{fig: SDecTiger}
\end{figure}

\subsubsection{\textsc{SDec-FireFighting}}

Consider the following semi-decentralized variation on the \textsc{FireFighting} benchmark \cite{oliehoek2008optimal}, depicted in Figure \ref{fig: SDecFireFighting}. \textsc{SDecFireFighting} ($n_f = 3, n_h=3$) has 432 states, 3 actions, and 2 observations. 2 agents are tasked with addressing a line of $n_h$ houses, each with fire severity status $f$ in range [$0, n_f$] initially sampled from a uniform distribution. Each agent selects a house to suppress at each time-step. Single agent suppression decrements $f$ by 1 with probability 1.0 if all adjacent houses have $f=0$ or with probability 0.6 otherwise. Dual agent suppression resets $f$ to $0$. Agents only communicate their observation histories if they suppress the same house. A house without a firefighter present increments its $f$ by 1 with probability 0.8 if an adjacent house has $f>0$ or with probability 0.4 if all adjacent houses have $f=0$. A house with $f=0$ will catch fire (increment $f$ by 1) with probability 0.8 if an adjacent house has $f>0$. Each agent observes their selected house to be on fire or not with probability 0.2 if $f=0$, probability 0.5 if $f=1$, and probability 0.8 if $f\geq 2$. The cooperative agent team is rewarded the summation of $-f$ across all $n_h$ following action selection. 

\begin{figure}[ht!]
  \centering
  \includegraphics[width=0.6\linewidth]{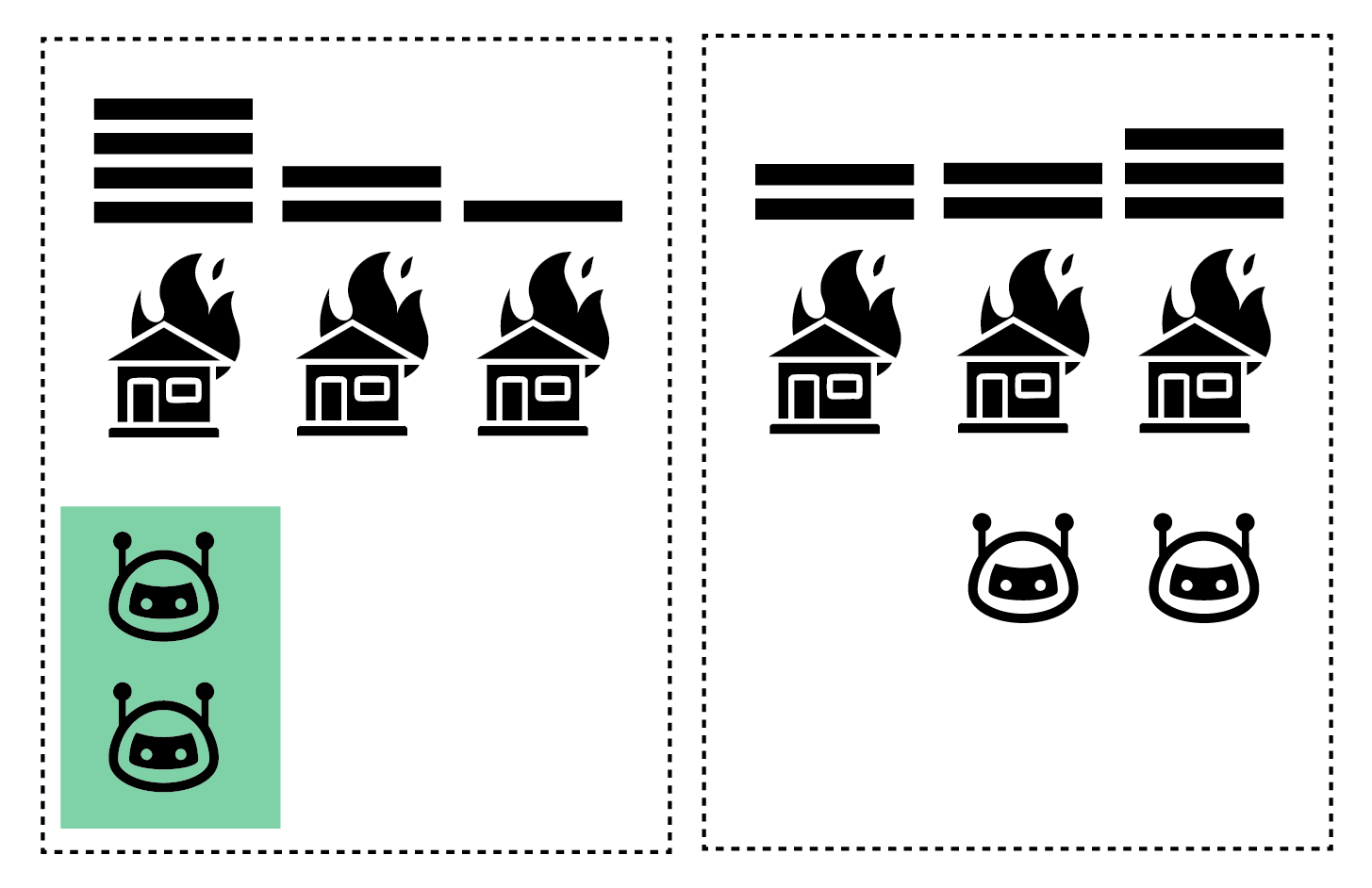}
  \caption{Illustration of two of nine joint actions in \textsc{SDec-FireFighting} ($n_h = 3, n_f = 4$). Agents communicate when they suppress the same house, shown in green.}
  \label{fig: SDecFireFighting}
\end{figure}

\subsubsection{\textsc{SDec-BoxPushing}}

Consider the following semi-decentralized variation on the \textsc{BoxPushing} benchmark \cite{seuken2007improved}, depicted in Figure \ref{fig: SDecBoxPushing}. \textsc{SDecBoxPushing} has 100 states, 4 actions, and 5 observations. $n$ agents cooperate to push small and large boxes into an established goal area. Each agent can choose to rotate left, rotate right, move forward, or remain in place. Rotation and movement actions are successful with a 0.9 probability, otherwise the agent remains in place. Forward movement while facing a box will cause the box to translate one unit in the direction of movement, if permissible. A single agent can push a small box, but two agents must act in tandem to push a large box. Movement into a wall, or into a large box with one agent, will result in remaining in place. Each agent correctly observes what is in front of them: a wall, a small box, a large box, an empty space, or another agent. Agents share their observation histories when simultaneously occupying one or more established communication grid squares. Agents receive a $-0.1n$ reward after each time-step, a $-5$ reward for each agent that moves into a wall, a $+10$ reward for each small box pushed into the goal area, and a $+100$ reward for each large box pushed into the goal area. The problem resets as soon as any box reaches the goal state. We adopt the environment configuration depicted in Figure \ref{fig: SDecBoxPushing}. 

\begin{figure}[ht!]
  \centering
  \includegraphics[width=0.4\linewidth]{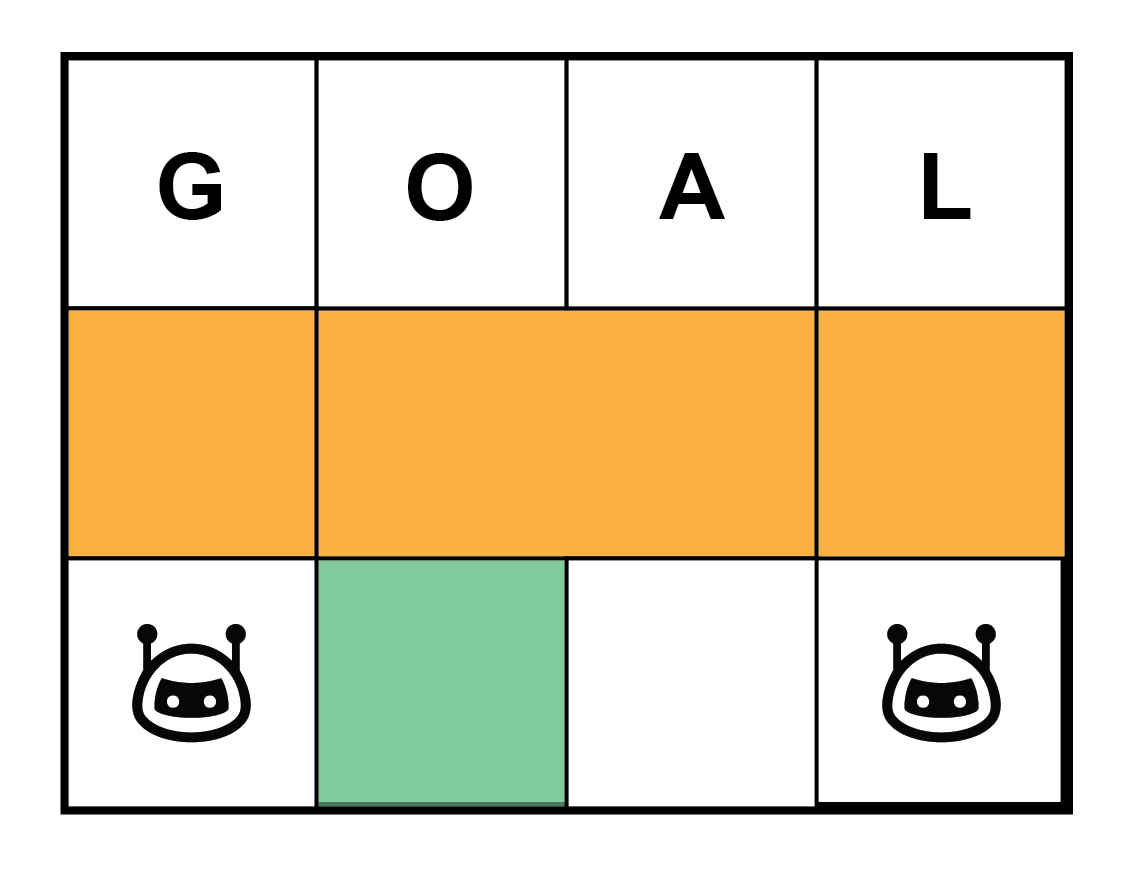}
  \caption{Illustration of the \textsc{SDec-BoxPushing} environment. Agents communicate when they are both in the green square.}
  \label{fig: SDecBoxPushing}
\end{figure}

\subsubsection{\textsc{SDec-Mars}}

Consider the following semi-decentralized variation on the \textsc{Mars} benchmark \cite{amato2009achieving}, depicted in Figure 8. \textsc{SDec-Mars} has 256 states, 6 actions, and 8 observations. Each of two agents can choose to move north, south, east, and west in a 2x2 grid, or conduct an experiment of choice (drilling or sampling) in their current location. Two grid squares are intended to be sampled by one agent, and the other two grid squares require that both agents drill simultaneously. Each agent accurately observes their location in the 2x2 grid and whether an experiment has already been performed there. Agents share their observation histories while simultaneously occupying a designated communication grid square. The problem resets once an experiment is performed in all four grid squares. The cooperative agent team receives a large positive reward for drilling a drill site, a small positive reward for sampling a sample site, a large negative reward for drilling a sample site, and a small positive reward for sampling a drill site. Attempting a second experiment on the same site incurs a small negative reward. 

\begin{figure}[ht!]
  \begin{subfigure}{0.30\linewidth}
    \includegraphics[width=\linewidth]{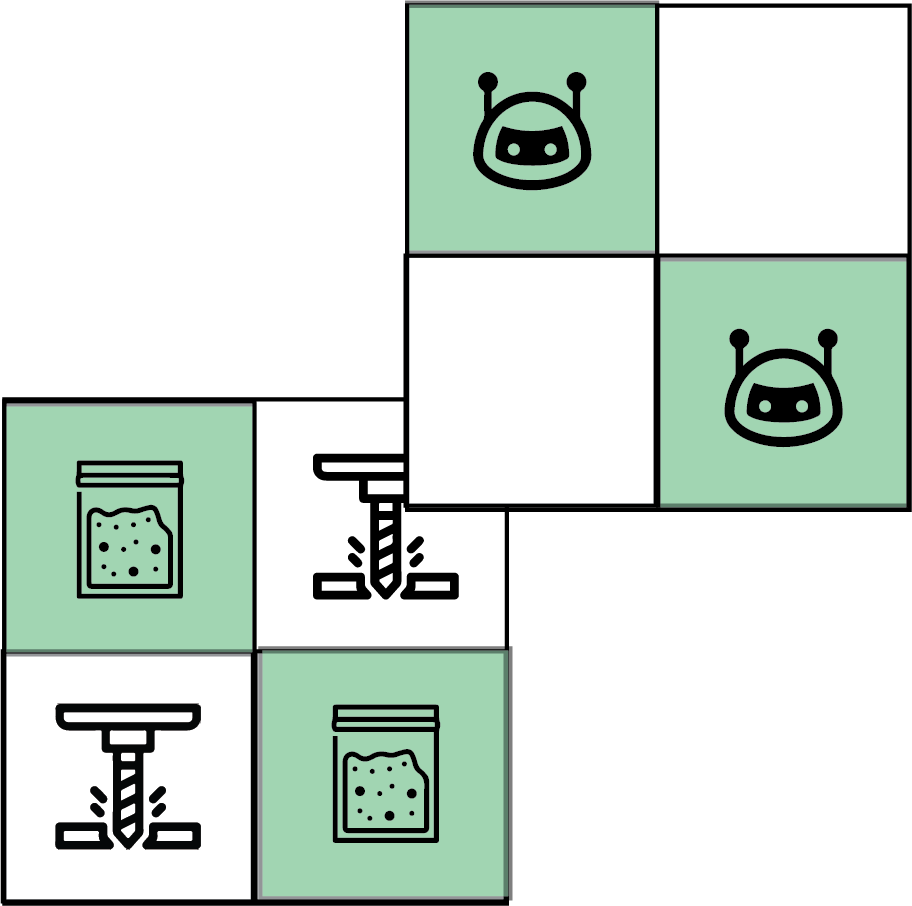}
    \caption{}
    \label{fig:SDM1}
  \end{subfigure}%
  \hfill
  \begin{subfigure}{0.30\linewidth}
    \includegraphics[width=\linewidth]{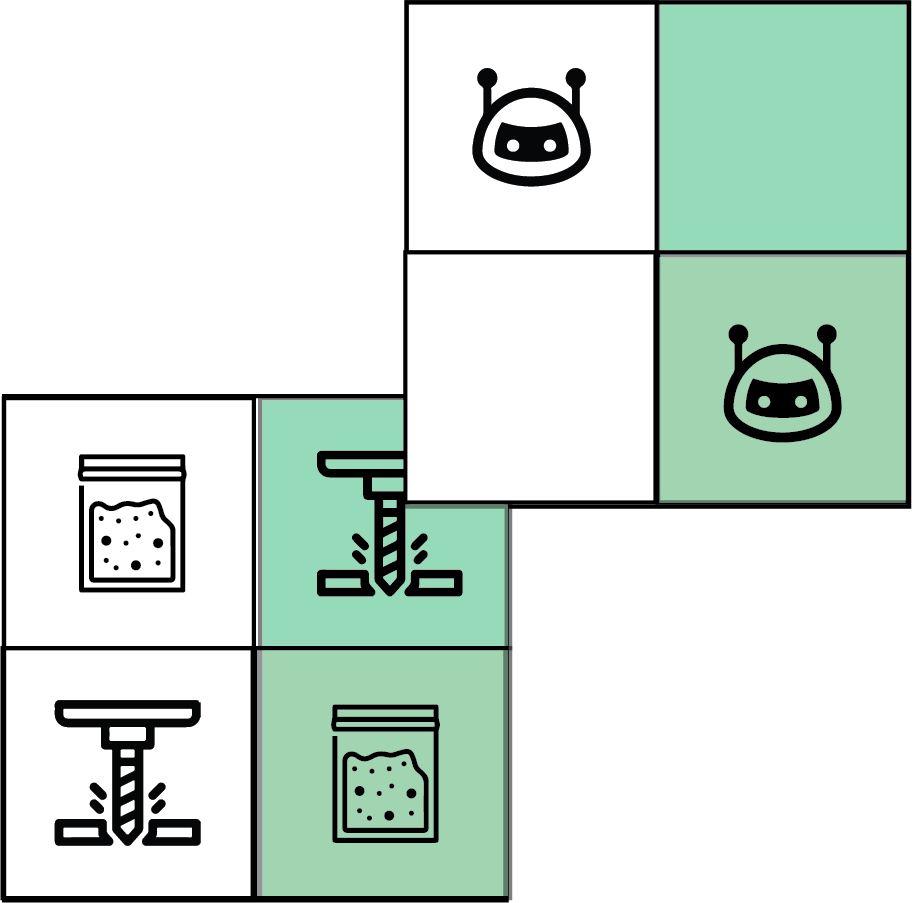}
    \caption{}
    \label{fig:SDM2}
  \end{subfigure}%
  \hfill
  \begin{subfigure}{0.30\linewidth}
    \includegraphics[width=\linewidth]{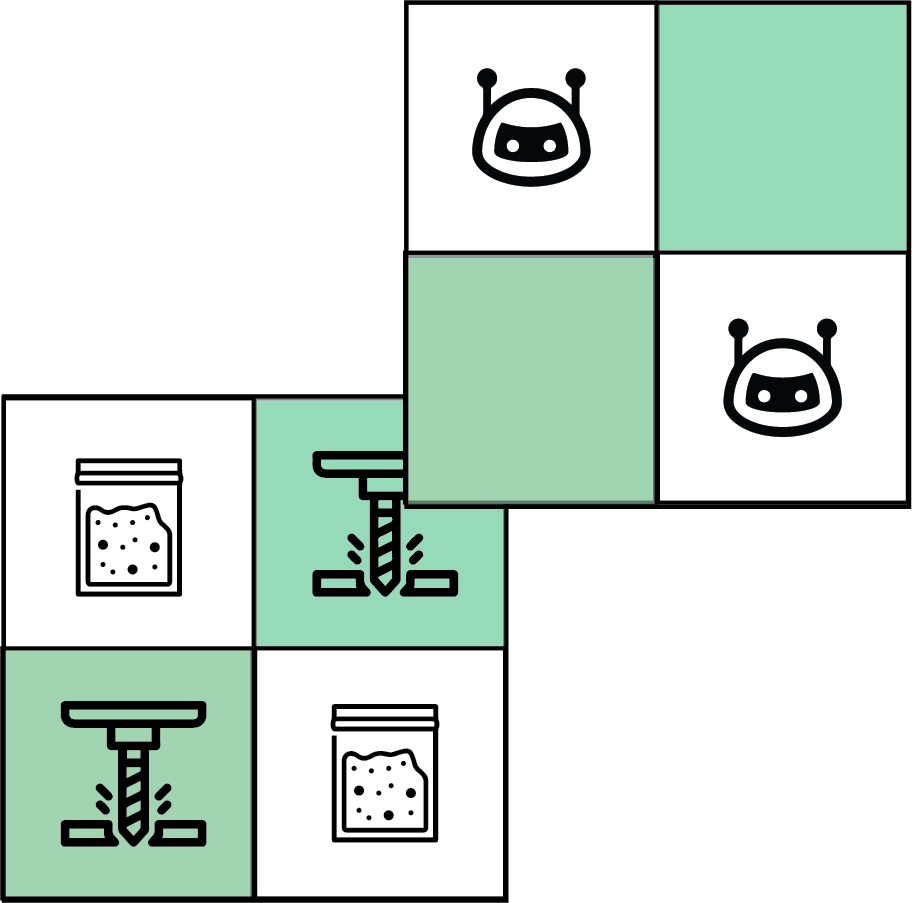}
    \caption{}
    \label{fig:SDM3}
  \end{subfigure}
  \label{fig: SDecMars}
  \caption{Illustration of the \textsc{SDec-Mars} environment. Agents in (a), the survey site beacon scenario, can communicate when co-located or adjacent to the same not yet surveyed survey site. Agents in (b), the right band rendezvous scenario, communicate alongside the right side of the grid when at least one site remains incomplete. Agents in (c), the drill site beacon scenario, can communicate when co-located or adjacent to the same not yet drilled drill site.}
\end{figure}

\begin{figure}[ht!]
  \centering
  \includegraphics[width=0.4\linewidth]{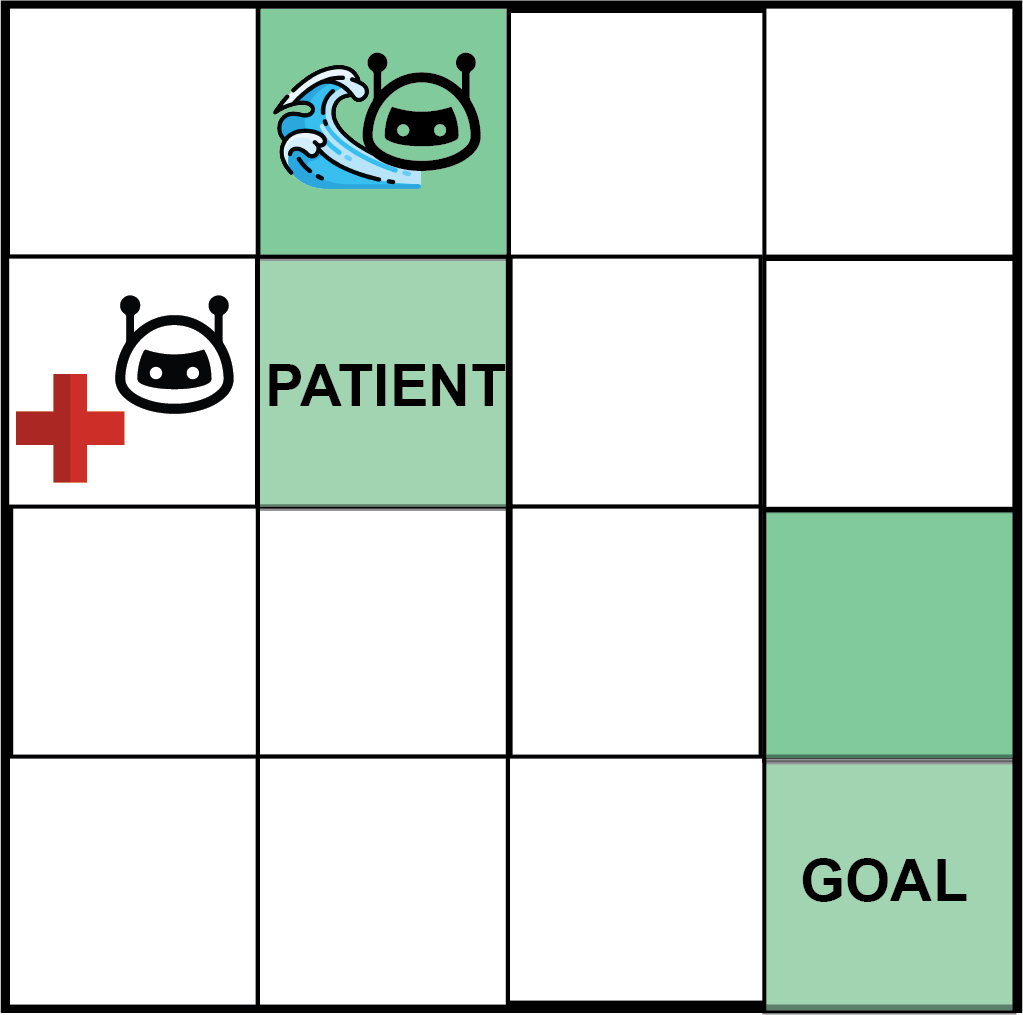}
  \caption{Illustration of the \textsc{MaritimeMEDEVAC} environment. Agents communicate when they are positioned adjacent to patient pickup and drop-off sites.}
  \label{fig: MaritimeMEDEVAC}
\end{figure}

\subsubsection{\textsc{MaritimeMEDEVAC}}
We introduce a new semi-decentralized MEDEVAC benchmark involving a $4 \times 4$ gridworld archipelago, depicted in Figure \ref{fig: MaritimeMEDEVAC}. \textsc{MaritimeMEDEVAC} has 512 states, 3 actions, and 2 observations. Two agents, a medical aircraft and a transport ship, must retrieve a patient at $(1,1)$ and deliver them to a hospital at $(3,3)$. At each time-step, each agent selects one of ${\textsc{Wait}, \textsc{Advance}, \textsc{Exchange}}$. \textsc{Advance} moves one cell toward the current target (patient if $carry = 0$, else hospital), succeeding independently with probability 0.95 for the aircraft and 0.85 for the boat. $\textsc{Wait}$ leaves the agent position unchanged. \textsc{Exchange} attempts a \textit{joint} pickup/drop that succeeds with probability 0.95 when both agents are at the corresponding site (toggling $carry$). Each agent receives a binary observation indicating whether it is \textit{at-target} (patient if $carry = 0$, hospital if $carry = 1$) or \textit{not}. The team incurs -0.3 per step, issuing \textsc{Exchange} away from $\{(1,1), (3,3)\}$ incurs -1.0, a solo pickup or solo drop-off incurs -6.0, and joint pickup or drop-off grants +5.0 and +12.0 respectively. Agents share observation histories in a subset of ``one-arrived, one-not'' states: at the \textit{patient} when the aircraft is at $(1,1)$ and the boat remains at $(1,0)$ with $(carry=0)$, and at the hospital when one agent is at $(3,3)$ and the other at $(3,2)$ with $(carry=1)$ (both permutations). Agents cannot communicate in any other states.

\subsection{Code}
Results for semi-decentralized benchmarks may be reproduced at: https://github.com/mahdial-husseini/RSSDA

\subsection{Notation}

\begingroup
\renewcommand{\arraystretch}{1.5} 
\begin{tabular}{ll}
$\tau$ & sojourn time (assignment of random variable $\mathcal{T}$) \\
$\tau^t$ & sojourn time corresponding to decision epoch $t$\\
$\eta$ & natural process time \\
$\eta^t$ & decision epoch $t$ start time within natural process \\
$Q$ & complete state transition distribution \\
$F$ & sojourn time distribution \\
$f$ & selector for propagating agent memory information \\
$g$ & selector for propagating agent action information  \\
$h$ & selector for propagating agent observation information  \\
$m_{ci}^{sel}$ & memory propagated to the blackboard \\
$m_{i}^{sel}$ & memory propagated to agent $i$ \\
$a_i^{\text{sel}}$ & action information propagated to agent $i$ \\
$a_c^{\text{sel}}$ & action information propagated to blackboard \\
$o_i^{\text{sel}}$ & observation information propagated to agent $i$ \\
$o_c^{\text{sel}}$ & observation information propagated to blackboard \\
$m_c$ & blackboard memory \\
$m_i$ & agent $i$'s memory \\
$\psi$ & distribution over action selection \\
$\eta$ & distribution over memory updates (overloaded) \\
$Z_{i}^\text{sel}$ & $\langle M_{ci}^\text{sel}, \bar{A}_{i}^\text{sel}, \bar{O}_{i}^\text{sel}\rangle$ \\ 
$Z_{c}^\text{sel}$ & $\langle \bar{M}^\text{sel}, \bar{A}^\text{sel}, \bar{O}^\text{sel}\rangle$
\end{tabular}
\endgroup

\end{document}